%% file: neurips_2026.tex
\documentclass{article}

\usepackage[main, final]{neurips_2026}

\usepackage[utf8]{inputenc} 
\usepackage[T1]{fontenc}    
\usepackage{hyperref}       
\usepackage{url}            
\usepackage{booktabs}       
\usepackage{amsfonts}       
\usepackage{nicefrac}       
\usepackage{xcolor}         

\usepackage{amsmath}
\usepackage{amssymb}
\usepackage{mathtools}
\usepackage{amsthm}
\usepackage{wrapfig}
\usepackage{titletoc}

\usepackage[capitalize,noabbrev]{cleveref}
\hypersetup{hidelinks}

\author{%
  Joseph Wilson \\
  School of Mathematics and Physics\\
  University of Queensland\\
  \texttt{joseph.wilson1@uqconnect.edu.au} \\
  \And
  Chris van der Heide \\
  Dept. of Electrical and Electronic Engineering\\
  University of Melbourne \\
  \texttt{chris.vdh@gmail.com} \\
  \And
  Liam Hodgkinson \\
  School of Mathematics and Statistics\\
  University of Melbourne \\
  \texttt{lhodgkinson@unimelb.edu.au} \\
  \And
  Zhichao Wang \\
  ICSI and Department of Statistics\\
  University of California, Berkeley\\
  \texttt{zhichao.wang@berkeley.edu} \\
  \And
  Fred Roosta \\
  CIRES and School of Mathematics and Physics\\
  University of Queensland\\
  \texttt{fred.roosta@uq.edu.au} \\
  \And
  Michael W. Mahoney \\
  ICSI, LBNL and Department of Statistics\\
  University of California, Berkeley\\
  \texttt{mmahoney@stat.berkeley.edu} 
}

\input{Format/packages}
\input{Format/newcmnds}

\title{
Why Heavy-Tailed Weight Spectra \\
Predict Model Quality
}
\begin{document}
\maketitle

\begin{abstract}
A solid understanding of the predictive success of
deep neural networks (DNNs) remains elusive. 
For example, while many data-dependent metrics exist that seek to predict model quality of DNNs, these metrics may be expensive to compute for large datasets and/or may be impossible to compute when training datasets are not released. 
A practical and effective predictor of model quality involves analyzing DNN weight matrices, as prior work has established that heavy-tailed spectral distributions in trained weight matrices can be strongly predictive of model quality. 
However, such metrics lack a rigorous statistical justification. 
In this work, we aim to understand why these metrics perform well, from a statistical perspective. 
To do so, we employ a probabilistic framework to derive the marginal likelihood for a DNN, evaluated using only the final layer weights.
Across large-scale convolutional DNNs and LLMs, we find that this marginal likelihood predicts model quality. 
To understand the role heavy-tailed spectra play in 
predictive performance,
we derive the limiting log-marginal likelihood for spectra that follow the (heavy-tailed) High-Temperature Marchenko-Pastur (HTMP) distribution. 
We show that this limiting quantity is convex in a \textit{heaviness} parameter, and we derive an optimal \textit{heaviness} that empirically predicts over-fitting for large-scale DNNs. 
\end{abstract}

\section{Introduction}
Since the introduction of AlexNet \citep{krizhevsky2012imagenet}, an understanding of the mechanisms behind 
the predictive success 
of deep neural networks (DNNs) has lagged behind empirical advancements \citep{zhang2021understanding}. 
For example, generalization is one way to try to quantify this; in contrast to classical statistical learning theory (SLT), however, large heavily over-parameterized DNNs generalize well after training to near zero-loss.
This phenomenon extends the classical U-shaped bias-variance curve to the well-known \textit{double-descent} curve \citep{belkin2019reconciling}. 
The recent success of large language models (LLMs) further complicates the picture: foundation models are often \textit{under-parameterized}, and do not attain zero-loss \citep{hoffmann2022training}. 
Instead, the performance of LLMs seems to be (empirically) well-governed by neural scaling laws \citep{kaplan2020scaling, hoffmann2022training}, the theoretical underpinnings of which are not well understood \citep{boopathy2024unified}. 

Tools for attempting to estimate model quality of DNNs are extensive. 
Classical statistical techniques using complexity measures such as the Rademacher complexity, Vapnik–Chervonenkis (VC) dimension, and uniform stability have been largely unsuccessful \citep{zhang2016understanding}. 
The celebrated PAC-Bayes bounds \citep{mcallester2003simplified} provide improved generalization bounds, yet do not provide a sufficient explanation of generalization \citep{lotfi2022pac, livni2020limitation}. 
A related quantity is the \emph{marginal likelihood} of a DNN, which is connected to both PAC-Bayes bounds \citep{germain2016pac} and leave-$k$-out cross-validation \citep{fong2020marginal}. \citet{immer2021scalable} demonstrated correlation with model quality for large DNNs, although certain caveats have been reported \citep{lotfi2022bayesian}. 
This quantity is important in SLT \citep{konishi2008information}, since from it the Bayesian Information Criterion and Watanabe–Akaike Information Criterion can be derived. 
For singular models (e.g., DNNs), the Widely Applicable Bayesian Information Criterion \citep{watanabe2013widely} was developed; it can predict quality of pre-training for downstream tasks \citep{munn2024bayesian}. 
These information criteria are derived in the regime where number of data-points $n \to \infty$, for fixed parameter dimension $p$; but this is no longer a valid assumption for modern deep learning \citep{couillet2022random}. 
To account for this, the Interpolating Information Criterion (IIC) \citep{hodgkinson2023interpolating, hodgkinson2023pac} 
was developed for models that attain zero-loss, although we should note that the IIC depends on an implicit regularizer for DNNs, whose full characterization remains a challenging open problem. 

\begin{figure}[t]
    \centering
    \begin{subfigure}[b]{0.32\textwidth}
        \includegraphics[width=\textwidth]{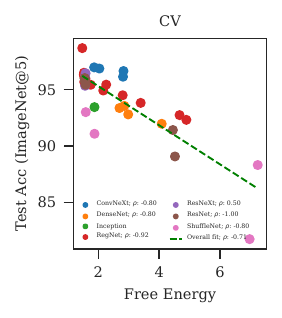}
    \end{subfigure}
    \begin{subfigure}[b]{0.32\textwidth}
        \includegraphics[width=\textwidth]{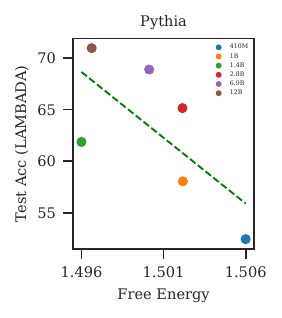}
    \end{subfigure}
    \begin{subfigure}[b]{0.32\textwidth}
        \includegraphics[width=\textwidth]{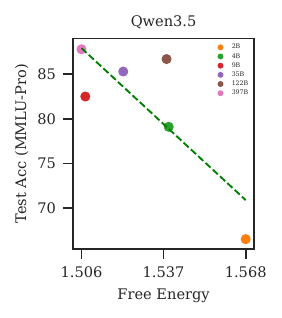}
    \end{subfigure}
    \caption{Plots of \cref{eq:bfe_weights} versus test accuracy, for (left) \textit{torchvision} models trained on ImageNet-1K, (middle) Pythia models, and (right) Qwen3.5 models. Test dataset is given in brackets.}
    \label{fig:free_energy_vs_test_acc}
    \vspace{-2mm}
\end{figure}

These techniques for predicting model quality are \emph{data-dependent}, which presents multiple issues for their practical use. 
Firstly, computation on massive training sets can become infeasible \citep{AMELI-2025b,ameli2025determinant}. 
Secondly, foundation models are often trained on undisclosed or proprietary datasets \citep{singh2025openai}, limiting the ability for comparisons across frontier labs. 
In response, a novel line of work has considered using DNN weight matrices in isolation (\emph{data-independent}, i.e., without training or test data) for predicting model quality \citep{martin2020heavy, yang2022evaluating}. 
Motivation arose from the Heavy-Tailed Mechanistic Universality (HT-MU) \citep{MM19_HTSR_ICML,martin2020heavy} and Heavy-Tailed Self-Regularization (HT-SR) theories \citep{martin2021implicit}.
These related approaches are phenomenological, or semi-empirical, theories that are motivated by statistical mechanics \citep{martin2017rethinking,martin2021implicit}, and they are based on the empirical observation that \emph{spectra} of trained DNN weight matrices exhibit \emph{heavy-tails}. 
Metrics derived from the ``heaviness'' of the spectral tail are surprisingly strong predictors of model quality for both computer vision \citep{martin2021predicting} and language models \citep{yang2022evaluating}. 
Such metrics can identify DNN layers that are ``under-trained,'' allowing for layer-wise learning \citep{zhou2023temperature} or pruning \citep{lu2024alphapruning}. 
Motivated by this, \citet{hodgkinson2025models} recently developed the High-Temperature Marchenko-Pastur (HTMP) spectral distribution, which (among other things) is appropriate for modeling (heavy-tailed spectra of) weight matrices \textit{after} training. 
This probabilistic model exhibits \emph{heavy-tailed spectra} for weight matrices that do \textit{not} necessarily possess heavy-tailed \textit{elements}, a phenomenon observed in practice \citep{martin2021implicit}, and it can reproduce neural scaling laws for generalization error.

In this work, we aim 
to understand---from a statistical perspective---\emph{why} heavy-tailed weight spectra predict model quality.
To do so, we reframe and derive certain aspects of HT-SR for DNNs using the tools of SLT and Random Matrix Theory (RMT). 
Doing so will in turn allow DNN theory to more readily use and build upon the HT-SR phenomenology. 
To this end, we employ a stochastic modeling framework for DNN training, and a weights and features (i.e., final activations) decomposition, to derive the marginal likelihood for \textit{trained features} $\sZ(\Phi)$. We then show that $\sZ(\Phi)$ can be evaluated using only \textit{final-layer weights}. Defining the Bayes free energy as $\sF := -\log \sZ$, we observe the following behavior:
\begin{tcolorbox}
    For the \textit{trained} weights $\WW$, $m$ outputs, and correctly chosen regularization parameter $\lambda$,
    \begin{align*}
        \sF_{m}^{\lambda}(\WW) = \frac{\lambda}{2 m} \Tr (\WW^T \WW) - \frac{1}{2m} \log \det \WW^T \WW + \frac{1}{2} \log \frac{2 \pi}{\lambda}
    \end{align*}
    is strongly predictive of 
    model quality for large-scale computer vision and language models.
\end{tcolorbox}

We illustrate the predictive quality of this metric for many large models in \cref{fig:free_energy_vs_test_acc}. 
As this metric only requires the spectrum of $\WW$, i.e., \emph{it does not require access to training or testing data}, we can then explore a central question from HT-SR: \emph{why} does the heaviness in the spectral tail of weight matrices predict model quality? 
To answer this, we employ the HTMP distribution, with heaviness parameters $\kappa, \beta$, to derive the deterministic limit of $\sF_{m}^{\lambda}(\WW)$, 
under the double-asymptotic scaling of RMT. From this quantity, we find two key spectral parameters predict model quality:

\begin{tcolorbox}
    \begin{theorem_nonumber}[Informal]
        Under the HTMP, for fixed $\kappa$, there exists an optimal ratio $\kappa / \beta$; once this optimal ratio is achieved, $\sF_{m}^{\lambda}(\WW)$ is strictly decreasing in $\kappa$.
     \end{theorem_nonumber}
\end{tcolorbox}

In a series of experiments, we observe that for large DNNs, $\sF_{m}^{\lambda}(\WW)$ correlates with test error very well \textit{during training}. 
In addition, for \emph{trained} Computer Vision (CV) models, the ratio $\kappa / \beta$ controls underfitting / overfitting. 
In contrast, \emph{trained} LLMs attain the optimal ratio early; and increasing $\kappa$ during training then correlates with improved performance. 
These results provide a probabilistic connection between spectral properties and model quality.


\begin{figure}[t]
    \centering
    \begin{subfigure}[b]{1\textwidth}
        \includegraphics[width=\textwidth]{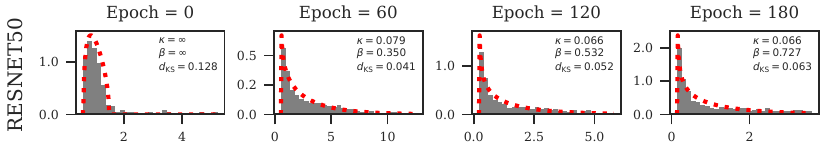}
    \end{subfigure}
    \begin{subfigure}[b]{1\textwidth}
        \includegraphics[width=\textwidth]{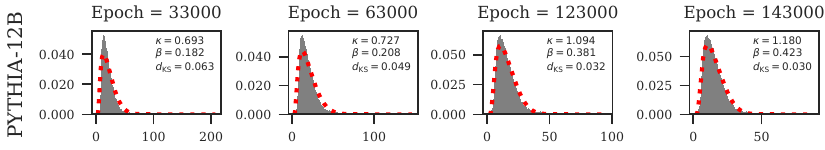}
    \end{subfigure}
    \caption{The ESD (gray) and fitted $\textbf{HTMP}_{\kappa, \gamma, \beta, \tau}$ distribution (red dotted lines), for the final-layer weights of (top) ResNet50 trained on CIFAR-100, and (bottom) Pythia-$12B$ trained on Pile.}
    \label{fig:esd_training}
    \vspace{-4mm}
\end{figure}

\paragraph{Contributions.} 
To summarize, in this work, we do the following:
\begin{enumerate}
    \item Derive the equivalent \textit{trained} last-layer weights formulation of the Bayes free energy of a DNN.
    \item Empirically demonstrate strong correlation of this free energy with model quality.
    \item Derive the limiting free energy under HTMP, and find the minimizing HTMP parameters.
    \item Empirically show that training of DNNs corresponds to optimization of HTMP parameters.
\end{enumerate}

\cref{sec:background} provides necessary background. 
In \cref{sec:free_energy}, we derive $\sF_{m}^{\lambda}(\WW)$ and provide an empirical evaluation. 
In \cref{sec:heavy_tail}, we find the limit of this quantity under the HTMP distribution, and we show how this quantity can be optimized; this is then connected to the training of practical DNNs. 
Finally, a conclusion and discussion is given in \cref{sec:conclusion}. 
In the Appendix, we give proofs for the theoretical results, experimental details, further discussions, and further experimental and theoretical results.
Code to reproduce the experiments can be accessed~\href{https://github.com/josephwilsonmaths/heavytail.git}{here}.

\section{Related work \& background}
\label{sec:background}

\paragraph{RMT in deep learning.} 
RMT broadly studies the limiting spectrum of matrix-valued probabilistic models. There is a long line of works that derive deterministic equivalents and generalization error for linear or random DNNs using RMT \citep{liao2020random, mei2022generalization, wang2024nonlinear, atanasov2025two}.  For a two-layer DNN, the generalization error after one gradient step was derived in \citep{ba2022high}. The performance of a fine-tuned DNN linearization was predicted through a Gaussian-like local spectral law \citep{wei2022more}.
The limiting free energy was derived in \citep{hodgkinson2023monotonicity} for Gaussian Processes (GP) with inner-product or radial-basis kernels, and in \citet{wilson2026last} for GPs with kernels derived from DNNs, using spectra derived from \citep{fan2020spectra}.
RMT for deep learning has recently been reviewed \citep{LiaoMahoney2026}. In RMT, one considers the \textit{double asymptotic limit}, i.e., for matrix $\AA \in \real^{m \times n}$, take $m, n \to \infty$ such that $m / n \to \gamma \in (0,\infty)$.

\paragraph{Implicit regularization.}
While \emph{explicit} forms of regularization for DNNs are often employed, certain aspects of a DNN such as architecture choice (e.g., depth) or optimization scheme (small-batch SGD) may \emph{implicitly} \citep{MO11-implementing,Mah12} bias a DNN towards parameter values that generalize better. For homogeneous DNNs on linearly separable data, steepest descent with respect to any norm $\|\cdot\|$ will implicitly minimize the parameter norm past the point of interpolation \citep{tsilivis2026flavors}. Small-batch SGD is thought to bias DNNs towards \textit{flatter minima} during training, improving generalization \citep{keskar2016large}, although this idea has received criticism \citep{dinh2017sharp,YGKM19_pyhessian_TR,YGSKM20_adahessian_TR}. The Geometric Complexity (GC) \citep{dherin2022neural}, defined by a discrete version of the Dirichlet Energy \citep{jost2017riemannian}, is a measurement of geometric smoothness that encompasses many previous notions of smoothness and seems to be correlated with model quality. A related object \citep{munn2024bayesian} is the Real Log Canonical Threshold from the Singular Learning Theory of \citet{watanabe2009algebraic}, and its local approximation, the Local Learning Coefficient \citep{lau2023local} which also seems to correlate with model quality. We emphasize that these implicit regularization metrics depend on training data.

\paragraph{Heavy-tailed self-regularization (HT-SR).}
HT-SR \citep{martin2021implicit} posits that the empirical spectral distributions (ESD) of trained weight matrices follow a power-law (PL) ($\rho(u) \propto u^{-\alpha},u > u_{\min}, \alpha> 1$), or exponentially-truncated PL (ET-PL) ($\rho(u) \propto u^{-\alpha} e^{-\lambda u},u > u_{\min}, \alpha > 1, \lambda > 0$) in the tail, due to complex correlations learnt during training, for some chosen minimum eigenvalue $u_{\min}$. 
Using the fitted $\alpha$ value, in \citet{martin2021predicting}, it was shown that for trained matrices $\WW$, where $\spec \left(\WW^T \WW\right) = \left\{u_i\right\}_{i=1}^{d_l}$, the metrics $\|\WW\|_F^2 = \sum u_i$, $\|\WW\|_{\infty}^2 = u_{\max}$, $\hat{\alpha} = \alpha \log u_{\max}$ and $\|\WW\|_{2\alpha}^{2\alpha} = \sum u_i^{\alpha}$, averaged over weights in a network, were strongly predictive of model-quality for CV tasks, when comparisons are made \textit{within-group}. 
HT-SR can be heuristically derived based on ideas from the statistical mechanics of learning~\citep{martin2017rethinking,martin2021implicit,setolTR}.

\paragraph{Entropic regularization framework.}
Define the stochastic minimization operation, $\smin_{\btheta}^{\pi_{\btheta}, \tau}$, with temperature $\tau$ and prior distribution $\pi_{\btheta}$, for $\btheta \in \real^p$, and $\sL : \real^p \to \real$, as
\begin{align}\label{eq:smin}
    \smin_{\btheta}^{\pi_{\btheta}, \tau} \sL(\btheta) := \min_{q \in \sP} \left[ \Ex_{q(\btheta)} \sL(\btheta) + \tau d_{\text{KL}}(q || \pi_{\btheta}) \right].
\end{align}
Here, $\sP$ is the set of probability distributions on the support of $\pi_{\btheta}$. We denote the minimizing density over parameters as $q(\theta) = \argsmin_{\btheta}^{\pi_{\btheta}, \tau} \sL(\btheta)$, provided it exists and is unique. Such probabilistic frameworks of training have been considered previously \citep{xie2022power, sonthalia2024regularization}.
Note that \cref{eq:smin} optimizes a PAC-Bayes bound on the test-error when $L$ is the training loss \citep[Appendix B.]{hodgkinson2025models}, and it is related to the variational inference performed during stochastic gradient descent \citep{chaudhari2018stochastic}.



\section{Free energy for trained weights}
\label{sec:free_energy}
In this section, we first derive $\sF_{m}^{\lambda}(\WW)$, the negative log-marginal for interpolating \textit{trained features}, that can be evaluated using only trained \textit{final-layer weights}. We then show that this quantity correlates very well with model quality for many large DNNs.

\paragraph{Notation.}
Consider a DNN $\ff : \real^{d} \times \real^p \to \real^m$, for inputs $\xx \in \sX \subseteq \real^d$ and parameters $\btheta \in \bm\Theta \subseteq \real^p$. 
We take training dataset $\sD =  \left \{ \xx_i, \yy_i\right \}_{i=1}^n \subseteq \sX \times \sY$, for $\sY \subseteq \real^m$, and loss function $\sL(\btheta)$. 
We denote the concatenations of training data as $\XX = [\xx_1 | \dots | \xx_n]^T \in \real^{n \times d}$, $\YY = [\yy_1 | \dots | \yy_n]^T \in \real^{n \times m}$. 
In our setting, it will be useful to consider a decomposition of a DNN into its \textbf{non-linear features} and \textbf{final-layer weights}. We denote the function space of features as $\sF$; then we write $\ff : \real^{d} \times \real^{d_L \times m} \times \sF \to \real^m$, for inputs $\xx \in \sX \subseteq \real^d$, final-layer weights $\WW \in \real^{d_L \times m}$, and feature function $\phi \in \sF$. The functional form of the DNN is $\ff(\xx ; \WW, \phi) = \phi(\xx)^T \WW$. We can represent the function evaluated on training inputs $\XX$ as $\ff(\XX ; \WW, \phi) = \Phi \WW$, where $\Phi$ is the concatenation $\Phi = [\phi(\xx_1) | \dots | \phi(\xx_n)]^T \in \real^{n \times d_L}$. We denote the matrix-normal distribution by $\sM \sN ( ., .,.)$.

\vspace{-2mm}
\subsection{Optimal features under interpolation for ridge regression}\label{sec:ridge_regression}
The exact training dynamics of large DNNs are highly complex, prohibiting careful analysis. To simplify analysis, we consider training $\ff$ on $\sD$, using the \textbf{entropic regularization} framework in \cref{eq:smin}, for loss function $L(\WW, \Phi) = ||\Phi \WW - \YY||_F^2$, and $\pi_{\WW} = \sM \sN ( \bm 0, \sigma^2\eye_{d_L},\eye_m)$. Under this framework, optimal \textbf{feature} density $q(\Phi)$ is
\begin{align*}
    q(\Phi) = \argsmin_{\Phi}^{\pi_{\Phi}, \eta} \smin_{\WW}^{\pi_{\WW}, \tau} \sL(\WW, \Phi),
\end{align*}
provided the minimum exists and is unique. By \citet[Proposition 3.1]{hodgkinson2025models}, $q(\Phi) \propto \sZ_{\tau}(\Phi)^{\tau / \eta}\pi_{\Phi}(\Phi)$, where $\sZ_{\tau}(\Phi)$ is the marginal (Gibbs) likelihood with prior $\pi_{\WW}$, $\sZ_{\tau}(\Phi) \propto \Ex_{\pi_{\WW}} \exp(-L(\WW, \Phi) / \tau)$. 
So, we observe that trained features concentrate around regions with high marginal likelihood. 
Under our loss function and prior, we can explicitly compute (calculations are given in \cref{sec:appendix:marginal}) the negative log marginal for the trained features (i.e., Bayes Free Energy):
\begin{align}
    \frac{1}{nm} \sF^{\tau_m, \lambda_m}_n(\Phi) &:= \frac{\lambda_m}{2n} \Tr \left( \YY^T \KK_{\lambda_m, \tau_m}^{-1} \YY \right) + \frac{1}{2n}\log \det \KK_{\lambda_m, \tau_m} + \frac{1}{2}\log \left(\frac{2\pi}{\lambda_m m} \right), \label{eq:bfe_phi}
\end{align}
where $\KK_{\lambda_m, \tau_m} = \Phi \Phi^T + \lambda_m \tau_m / 2\eye_n$, $\lambda_m = 1 / (m\sigma^2)$ and $\tau_m = \tau m$. 

\subsection{Evaluation of free energy using final-layer weights}
For \textbf{trained} features $\Phi$, the Bayes Free Energy can be used to measure model quality; while this is not novel, the following extension is: under certain assumptions, for training temperature $\tau \to 0$, the Bayes Free Energy for well-trained features can be evaluated using \textbf{only} the final-layer weights $\WW$, up to some additive terms that cancel when comparing Bayes Free Energy between models trained on the same dataset. 
Taking $\tau \to 0$ concentrates the marginal likelihood on the sub-manifold of interpolators; this concentration approach was used to derive the IIC \citep{hodgkinson2023interpolating}. 
\begin{assumption}[Interpolation]\label{ass:features}
    For \textbf{trained} features $\Phi$, assume $n \geq m, d_L > m$ and $\rank(\Phi) = \rank(\YY) = m$. Further, assume that \textbf{trained} weights are given by $\WW = \Phi^{\dagger} \YY$, and that $\Phi \WW = \YY$.
\end{assumption}

This minimum-norm interpolating solution $\WW = \Phi^{\dagger} \YY$ is key; $\WW$ contains all necessary information about the relationship between features and targets. These assumptions have strong motivations: under our entropic regularization training framework, taking $\tau,\sigma^2 \to 0$ such that $\tau/\sigma^2 \to 0$ ensures $\WW \to \Phi^{\dagger} \YY$, and the same result applies in the deterministic optimization setting \citep{golub1973differentiation}. 
Further, in a sufficiently over-parameterized multiclass logistic regression setting, \citet{wang2021benign} show trained weights ``interpolate'' the centered training outputs. 
Under neural collapse assumptions \citep{papyan2020prevalence}, these results extend to over-parameterized neural networks. 
Using this interpolating relationship, we can now evaluate \cref{eq:bfe_phi} using only the final-layer weights.

\begin{theorem}\label{thm:bfe_w}
    Define $\Sigma := \YY^T \YY$. Then, under \cref{ass:features}, as $\tau \to 0^+$,
    \begin{align*}
        \frac{\mathcal{F}^{\lambda_m, \tau_m}_n (\Phi)}{nm}  
        &= \frac{\Tr (\lambda_m\WW^T \WW)}{2n}  - \frac{\log \det \left(\lambda_m\WW^T \WW\right)}{2n}  + \frac{\log(\pi\tau)^{n-m}}{2n} +\frac{\log \det 2 \pi \Sigma/m}{2n}  + \mathcal{O}(\tau). 
\end{align*}
\end{theorem}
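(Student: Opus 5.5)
The plan is a direct computation using the singular value decomposition of $\Phi$, followed by an expansion of each term of \cref{eq:bfe_phi} in the small parameter $c := \lambda_m \tau_m/2 = \lambda_m \tau m/2$. Under \cref{ass:features}, $\rank(\Phi)=m$, so I would write $\Phi = \UU_m \SS \VV_m^T$ with $\UU_m\in\real^{n\times m}$ and $\VV_m\in\real^{d_L\times m}$ having orthonormal columns, and $\SS=\mathrm{diag}(s_1,\dots,s_m)$ with all $s_i>0$. Then $\KK_{\lambda_m,\tau_m} = \Phi\Phi^T + c\,\eye_n$ has eigenvalues $s_i^2+c$ for $i\le m$ and $c$ with multiplicity $n-m$. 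The two $\Phi$-dependent terms of \cref{eq:bfe_phi} then become explicit functions of $\{s_i\}$, $c$ and $\YY$.

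\emph{Quadratic term.} Since $\Phi\WW=\YY$, we have $\YY^T\KK^{-1}\YY = \WW^T\Phi^T(\Phi\Phi^T+c\eye)^{-1}\Phi\WW$. By the push-through identity this equals $\WW^T\Phi^T\Phi(\Phi^T\Phi+c\eye)^{-1}\WW = \WW^T\WW - c\,\WW^T(\Phi^T\Phi+c\eye)^{-1}\WW$. Because $\WW=\Phi^\dagger\YY$ lies in the row space of $\Phi$, the resolvent only acts through the nonzero $s_i^2$. The correction is therefore bounded by $c\,\Tr(\WW^T\WW)/s_{\min}^2 = \mathcal{O}(\tau)$. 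This yields the term $\Tr(\lambda_m\WW^T\WW)/(2n)+\mathcal{O}(\tau)$.

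\emph{Log-determinant term.} Expanding the eigenvalues,
\[
\log\det\KK = \sum_{i\le m}\log s_i^2 + (n-m)\log c + \mathcal{O}(\tau).
\]
The key step is to eliminate $\prod_i s_i^2$ in favour of quantities built from $\WW$ and $\YY$. Set $\AA := \UU_m^T\YY\in\real^{m\times m}$. It is invertible because $\YY=\Phi\WW$ lies in the column space of $\Phi$ and $\rank(\YY)=m$, which also gives $\YY = \UU_m\AA$. Then $\Sigma = \YY^T\YY = \AA^T\AA$. Moreover $\WW = \VV_m\SS^{-1}\UU_m^T\YY$ gives $\WW^T\WW = \AA^T\SS^{-2}\AA$. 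Taking determinants,
\[
\textstyle\prod_i s_i^2 = \det\Sigma/\det(\WW^T\WW).
\]

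\emph{Collecting constants.} It remains to substitute $c=\lambda_m\tau m/2$ and add the constant $\tfrac12\log(2\pi/(\lambda_m m))$. I would verify three things: the powers of $\lambda_m$ combine to $-m\log\lambda_m$, which produces $-\log\det(\lambda_m\WW^T\WW)$; the factor $(n-m)\log(\tau m/2)$ combined with $n\log(2\pi/m)$ gives $(n-m)\log(\pi\tau)+m\log(2\pi/m)$; and this matches $\log(\pi\tau)^{n-m}+\log\det(2\pi\Sigma/m)$.

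I expect no real obstacle, since the argument is exact linear algebra plus a first-order expansion. The only delicate points are the bookkeeping of the $m$-dependent constants coming from $\tau_m=\tau m$, and checking that the $\mathcal{O}(\tau)$ remainders are uniform. The latter holds because $s_{\min}>0$ is fixed as $\tau\to0^+$.
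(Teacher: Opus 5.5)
Your proposal is correct and follows essentially the same route as the paper. Both arguments split the free energy into the quadratic and log-determinant terms and use $\YY\in\range(\Phi)$ to remove the singular $1/\nu$ part, so that the quadratic term becomes $\Tr(\WW^T\WW)+\mathcal{O}(\tau)$. Both then reduce $\log\det\KK$ to $(n-m)\log\nu+\log\det\Sigma-\log\det\WW^T\WW+\mathcal{O}(\tau)$ before collecting constants, and your constant bookkeeping checks out. The difference is only in the linear-algebra tools: you use an explicit SVD and the push-through identity, while the paper uses a Laurent expansion of the perturbed inverse and the Weinstein--Aronszajn identity via $\Phi=\YY\WW^\dagger$. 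Your version has the small advantage of making the uniformity of the remainder explicit through $s_{\min}>0$.
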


The proof of this theorem is given in \cref{sec:appendix:weight_energy}. 
We observe that the final three terms will cancel as $\tau \to 0$ when comparing models trained on the same dataset. Dropping the dependence of $\lambda$ on m, dropping the constant terms that cancel (except for a Gaussian entropy term), and correctly normalizing the trace and log-determinant terms, we arrive at the \textbf{relative free energy}:
\begin{tcolorbox}
\vspace{-3mm}
    \begin{align}\label{eq:bfe_weights}
    \mathcal{F}_{m}^{\lambda}(\WW) := \frac{\lambda}{2 m} \Tr (\WW^T \WW) - \frac{1}{2m} \log \det \WW^T \WW + \frac{1}{2} \log \frac{2 \pi}{\lambda}. \tag{BFE-$\WW$}
\end{align}
\end{tcolorbox}

Thus, we have derived, under an entropic regularization framework, a data-free metric that directly measures model quality of the underlying DNN features. We see that \cref{eq:bfe_weights} involves a trade-off between $m^{-1}\Tr (\WW^T \WW)$ and $m^{-1}\log \det \WW^T \WW$, controlled by parameter $\lambda$ (prior scale). The advantage of \cref{eq:bfe_weights} over \cref{eq:bfe_phi} for comparing models is immediate: \cref{eq:bfe_weights} does \emph{not} require access to training or validation data, and $\WW$ is often of a \emph{much} smaller size than $\Phi$. 

Note that when $m > d_L$, $\WW^T \WW$ is rank-deficient. 
In practice, one might add a small ridge-parameter $\epsilon$ for \cref{eq:bfe_weights}, i.e., $\WW^T \WW + \epsilon \eye$. 
However, one can also derive a dual formulation of \cref{eq:bfe_weights}, to obtain:
\begin{align*}
    \tilde{\mathcal{F}}_{d_L}^{\lambda}(\WW) := \frac{\lambda}{2 d_L} \Tr (\WW \WW^T) - \frac{1}{2d_L} \log \det \WW \WW^T + \frac{1}{2} \log \frac{2 \pi}{\lambda}. 
\end{align*}
The full asymptotic expansion is given along with its derivation in \cref{lemma:bfe_w_dual} in \cref{sec:appendix:dual_derive}. While this free energy does not immediately allow for comparison between models with differing $d_L$, we will see that it still correlates surprisingly well with model quality, across models with differing $d_L$.

\begin{remark}\label{remark:lambda}
To ensure \cref{eq:bfe_weights} correlates with the posterior predictive loss, one can set $\lambda^* = \argmin_{\lambda} \mathcal{F}_{m}^{\lambda}(\WW)$ \citep{hodgkinson2023interpolating, hodgkinson2023monotonicity, lotfi2022bayesian, mackay1992bayesian}. Taking a derivative immediately gives $\lambda^* = m / \Tr(\WW^T \WW)$. 
Interestingly, at this scale, 
\begin{align*}
    \mathcal{F}_{m}^{\lambda^*}(\WW) = \frac{1 + \log 2 \pi}{2} + \frac{1}{2}\log \left( \frac{m^{-1} \Tr ( \WW^T \WW)}{\det(\WW^T \WW)^{1/m}}\right).
\end{align*}
Hence, at the \textit{optimal regularizer}, the free energy is measuring the log-ratio of the arithmetic spectral mean versus geometric spectral mean. 
This is trivially minimized at $\WW \propto \UU$ (up to a scalar multiple), where $\UU$ has orthonormal columns; this corresponds to the case when $\Phi \Phi^T \propto \YY \YY^T$. 
Outside of this ``optimal-learning scenario,'' behavior is non-trivial.
\end{remark}

\subsection{Empirical evaluation}
\label{sec:free_energy_experiment}
Here, we empirically demonstrate the correlation of \cref{eq:bfe_weights} with model quality.

\paragraph{Computer vision (CV).} 
We take a large number of high-performing models, across different architecture types (e.g., ResNet, DenseNet etc.) from the \textit{torchvision} model zoo that have been pretrained on ImageNet-1K, where test accuracy and model weights are provided. We set $\lambda^* = m / \Tr(\WW^T \WW)$ for each model. In \cref{fig:free_energy_vs_test_acc} (left), we present \cref{eq:bfe_weights} versus top-5 test accuracy, where we see a strong Spearman correlation of $\mathbf{\rho=-0.71}$ across groups\footnote{There is also strong correlation within groups.}. 

\paragraph{Large language models (LLMs).} 
We employ both the Pythia project \citep{biderman2023pythia} models ($410$M, $1$B, $1.4$B, $2.8$B, $6.9$B, and $12$B), as well as the Qwen3.5 suite \citep{qwen3.5} ($2$B, $4$B, $9$B, $35$B, $122$B, and $397$B). In \cref{fig:free_energy_vs_test_acc} (middle) we compare the dual \cref{eq:bfe_weights} against LAMBADA \citep{paperno2016lambada} test accuracy for the Pythia models, where we see a solid correlation of $\mathbf{\rho=-0.66}$. In \cref{fig:free_energy_vs_test_acc} (right), we repeat this for the Qwen3.5 models, using the MMLU-Pro benchmark \citep{wang2024mmlu}; here we see a stronger correlation of $\mathbf{\rho = -0.77}$. This is strong evidence that for both CV models and LLMs, \cref{eq:bfe_weights} is, perhaps surprisingly, strongly predictive of model quality. 

We also compare against the HT-SR metrics in \cref{sec:appendix:WW} for the CV models, where \cref{eq:bfe_weights} shows much stronger correlation. Note that for CV, two model classes did not show correlation between \cref{eq:bfe_weights} and accuracy. 
Firstly, models that used \textit{dropout} before the final-layer weight did not show correlation, as dropout prevents complex correlations from occurring in final-layer weights \citep{srivastava2014dropout}. 
Secondly, vision transformers did not show correlation.
This could be because division of vision transformers at the last-layer into \textit{features} and \textit{weights} may be too crude. 
See \cref{sec:appendix:bad_free_energy} for results showing this (alongside HT-SR metrics, which perform as poorly).
The extension to such models is left for future work.
\vspace{-2mm}

\section{Heavy-tails and free energy}
\label{sec:heavy_tail}
\vspace{-2mm}
In the previous section, \cref{eq:bfe_weights} was derived for the final-layer weights.
In this section, we will relax this condition, and consider \cref{eq:bfe_weights} for all weights of a DNN. 
We will observe in the sequel that \cref{eq:bfe_weights} for deeper DNN weights can still correlate very well with model quality, and we will give theoretical motivation (in \cref{sec:appendix:deeper_weights}). 
Thus, equipped with a metric, \cref{eq:bfe_weights}, that explicitly connects trained weights of a DNN to model quality, we will use the machinery of RMT to investigate the question of why---from a statistical perspective---the spectral shape of heavy-tailed weights is so strongly predictive of model performance. 
We give details on the spectral shape of \textbf{trained weights}; we then use this to analyze how \cref{eq:bfe_weights} is optimized for certain spectral parameters, under an RMT limit; and we conclude with empirical validation for large-scale DNNs. 

\begin{figure}[t]
    \centering
    \begin{subfigure}[b]{0.49\linewidth}
        \includegraphics[width=\linewidth]{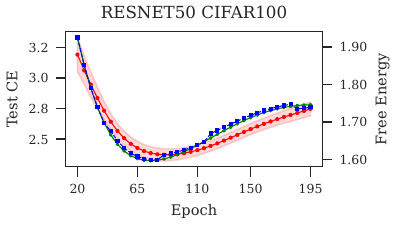}
    \end{subfigure}
    \begin{subfigure}[b]{0.49\linewidth}
        \includegraphics[width=\linewidth]{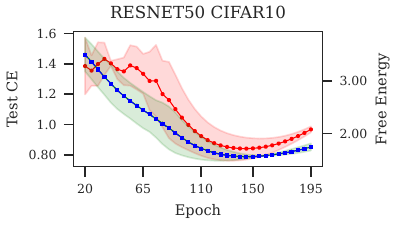}
    \end{subfigure}
    \begin{subfigure}[b]{0.49\linewidth}
        \includegraphics[width=\linewidth]{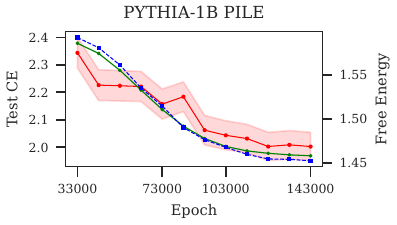}
    \end{subfigure}
    \begin{subfigure}[b]{0.49\linewidth}
        \includegraphics[width=\linewidth]{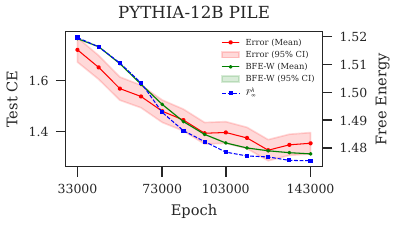}
    \end{subfigure}
    \caption{Test CE, \cref{eq:bfe_weights} and $\sF_{\infty}^{\lambda}$, vs. epochs. Model quality during training is highly correlated with BFE. Further, $\sF_{\infty}^{\lambda}$ closely approximates \cref{eq:bfe_weights}.}
    \label{fig:free_energy_training}
\end{figure}

\subsection{Trained weight spectra}
The spectrum of weight matrices before and after training differs substantially \citep{martin2021implicit}. The ESD of weight matrices $\WW_l^t \WW_l$, where $\WW_l \in \real^{d_{l-1} \times d_l}$, for DNNs at \textbf{initialization}\footnote{Under the standard initialization schemes for DNNs (e.g., Xavier, He, Glorot initializations).} converges\footnote{As $d_l, d_{l-1} \to \infty$ such that $d_l / d_{l-1} \to \gamma \in (0,\infty)$.} to the Marchenko-Pastur distribution ($\textbf{MP}_\gamma$), with aspect ratio $\gamma = d_l / d_{l-1}$. After training, the spectrum of weight matrices is often well-modeled by the HTMP distribution (see \citet[Proposition 5.3]{hodgkinson2025models}), which admits heavy-tails.

\begin{assumption}
\label{ass:main}
For $\WW_l \in \real^{d_{l-1} \times d_l}$ (e.g., Convolutional, Dense, etc.) in a DNN, in the limit as $d_l, d_{l-1} \to \infty$, $d_l/d_{l-1} \to \gamma \in (0,1)$, the spectral distribution of $\WW_l^T \WW_l -  \tau(d_{l}) \eye$ converges weakly in probability to the $\textbf{HTMP}_{\kappa, \gamma, \beta}$ distribution, where $\min \spec (\WW_l^T \WW_l) =: \tau(d_{l}) \to \tau$. 
\end{assumption}
\begin{definition}
    If $\spec \left( \WW_l^T \WW_l -  \tau \eye \right) \sim \mathbf{HTMP}_{\kappa, \gamma, \beta}$, then $\spec \left( \WW_l^T \WW_l\right) \sim \mathbf{HTMP}_{\kappa, \gamma, \beta, \tau}$.
\end{definition}
See \cref{fig:esd_training} for empirical validation of this assumption for large DNNs. We provide further details on the $\textbf{HTMP}_{\kappa, \gamma, \beta, \tau}$ distribution in \cref{sec:appendix:htmp}, as well as some mild integrability assumptions. 

\vspace{-2mm}
\subsection{Limiting free energy}

We now find the deterministic limit of \cref{eq:bfe_weights} under RMT scaling. This quantity will allow us to connect spectral shape qualities of trained weights to model quality, through the empirically observed correlation between \cref{eq:bfe_weights} and model quality in \cref{sec:free_energy_experiment}.

\begin{theorem}\label{thm:main}
Under \cref{ass:main}, \cref{ass:int_1}, and \cref{ass:int_2},
\begin{align}
    &\lim_{d_{l-1}, d_l \to \infty}  \mathcal{F}^{\lambda}_{d_{l}} \overset{\mathbb{P}}{\to} \sF_{\infty}^{\lambda} := \frac{\lambda}{2} \left(\frac{\kappa}{2 \gamma \beta} +\tau\right)+ \frac{1}{\kappa} \log U \left(\frac{\kappa}{2}, 1 - \frac{\kappa}{2\gamma} + \frac{\kappa}{2}; \beta \tau \right)  + \frac{1}{2} \log \frac{2 \pi \beta}{\lambda}, \label{eq:limit_bfe}
\end{align}
where $U(a,b;z)$ is the Tricomi confluent hypergeometric function.
\end{theorem}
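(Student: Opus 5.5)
The proof splits \cref{eq:bfe_weights} into its trace and log-determinant parts and handles each through the limiting spectral measure. Writing $\mu_{d_l}$ for the ESD of $\WW_l^T\WW_l$, we have
\begin{align*}
\mathcal{F}^{\lambda}_{d_l} = \frac{\lambda}{2}\int u \, d\mu_{d_l}(u) - \frac{1}{2}\int \log u \, d\mu_{d_l}(u) + \frac{1}{2}\log\frac{2\pi}{\lambda}.
\end{align*}
By \cref{ass:main}, $\mu_{d_l}$ shifted by $\tau(d_l)$ converges weakly in probability to $\mathbf{HTMP}_{\kappa,\gamma,\beta}$, and $\tau(d_l)\to\tau$. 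Weak convergence alone does not allow passing the limit through the unbounded test functions $u$ and $\log u$. This is exactly what \cref{ass:int_1} and \cref{ass:int_2} are meant to supply: uniform integrability of $u$ (control of the upper tail) and of $\log u$ near the spectral edge $\tau$ (control of the lower tail). Given these, a standard truncation argument yields convergence in probability of both integrals to their values under $\mathbf{HTMP}_{\kappa,\gamma,\beta,\tau}$. One approximates by bounded continuous functions, uses weak convergence on the truncated part, and bounds the tails uniformly. The limit therefore reduces to computing the two spectral moments under the limiting law.

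The first moment is straightforward. Using the representation of $\mathbf{HTMP}_{\kappa,\gamma,\beta}$ from \cref{sec:appendix:htmp} as an MP law whose scale is randomized by a Gamma-type mixing variable with shape tied to $\kappa$ and rate $\beta$, the mean equals the mean of $\mathbf{MP}_\gamma$ times the mixing mean. This should give $\kappa/(2\gamma\beta)$, and adding the shift yields $\frac{\lambda}{2}(\frac{\kappa}{2\gamma\beta}+\tau)$. Alternatively, one can read the mean off the expansion of the Stieltjes transform at infinity.

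The main obstacle is the log-moment $\int \log(x+\tau)\,d\mathbf{HTMP}_{\kappa,\gamma,\beta}(x)$. The plan is to write it as $\log\tau + \int_\tau^{\infty}\big(s^{-1}-\cdots\big)$, more precisely as $\log(x+\tau) = \log\tau + \int_0^\infty \frac{e^{-t\tau}-e^{-t(x+\tau)}}{t}\,dt$ (Frullani). The $x$-dependence then enters only through the Laplace transform of the HTMP law, or equivalently through its Stieltjes transform $m(z)$ evaluated at $z=-\tau$. The key structural fact is that the HTMP density is, up to normalisation, of the form $x^{a-1}(1+x)^{\cdots}e^{-\beta x}$-like, or arises from mixing. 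Its Laplace-type integrals are therefore Tricomi integrals $U(a,b;z)=\Gamma(a)^{-1}\int_0^\infty e^{-zt}t^{a-1}(1+t)^{b-a-1}\,dt$ with $a=\kappa/2$ and $b=1-\kappa/(2\gamma)+\kappa/2$, evaluated at $z=\beta\tau$. I would differentiate in $\tau$: one has $\frac{d}{d\tau}\int\log(x+\tau)\,d\mu = -m(-\tau)$. Expressing $m(-\tau)$ via the derivative identity $U'(a,b;z)=-aU(a+1,b+1;z)$, I expect to obtain $-m(-\tau) = \frac{2}{\kappa}\frac{d}{d\tau}\log U(\kappa/2,b;\beta\tau)$ plus a term $\frac{d}{d\tau}\log\beta$-type contributions that are constant in $\tau$.

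Integrating from $\tau$ to $\infty$ then fixes the constant by matching the asymptotics $U(a,b;z)\sim z^{-a}$ as $z\to\infty$ against $\log(x+\tau)\sim\log\tau$. This gives
\begin{align*}
-\tfrac12\int\log u\,d\mu = \tfrac{1}{\kappa}\log U(\kappa/2,b;\beta\tau) + \tfrac12\log\beta.
\end{align*}
The $\frac12\log\beta$ term combines with $\frac12\log\frac{2\pi}{\lambda}$ to produce $\frac12\log\frac{2\pi\beta}{\lambda}$, which assembles \cref{eq:limit_bfe}. Verifying the Stieltjes-transform identity in terms of $U$ is the step I expect to require the most care, and I would check it first in the special case $\gamma\to$ boundary, where HTMP reduces to a Gamma-mixed MP law.
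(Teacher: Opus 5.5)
Your proposal is correct and is essentially the paper's argument. The paper likewise uses \cref{ass:int_1} and \cref{ass:int_2} only to pass the trace and log-determinant to their limits; you spell out the truncation that the paper leaves implicit. It reads $\mu_1=\kappa/(2\gamma\beta)$ off the large-$|z|$ expansion of the Stieltjes transform \cref{eq:StieltjesWeights}, which is taken as known from \citet{ismail1988linear}. So that identity needs no separate verification, and the Gamma-mixed-MP representation you mention is neither provided nor needed. The paper then obtains the log-moment as $\sV(\tau)+\log\tau$ by integrating $S_{\bar{\rho}}(-z)=-\frac{2}{\kappa}\frac{d}{dz}\log U(\kappa/2,b;\beta z)$, with $b=1-\frac{\kappa}{2\gamma}+\frac{\kappa}{2}$. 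The constant $-\log\beta$ is fixed by $U(a,b;z)\sim z^{-a}$.

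The one slip is a sign in your intermediate step. The correct relation is $\frac{d}{d\tau}\int\log(x+\tau)\,d\bar{\rho}(x)=S_{\bar{\rho}}(-\tau)=-\frac{2}{\kappa}\frac{d}{d\tau}\log U(\kappa/2,b;\beta\tau)$, and $\log\beta$ enters only as the integration constant. Your final displayed formula already corresponds to this correct sign.
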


The proof of \cref{thm:main} is found in \cref{sec:appendix:proof_main}. Now, this deterministic limit depends non-trivially on several parameters. 
Hence, we would like to find a parameter combination through which analysis is simplified. 
To this end, and to investigate the question of why heavier-tails correspond to improved model quality, note that the density $\bar{\rho}_{\gamma, \kappa, \beta}$ of $\textbf{HTMP}_{\kappa, \gamma, \beta}$ follows
\begin{align*}
    \bar{\rho}_{\gamma, \kappa, \beta}(x) \propto x^{\frac{\kappa}{2 \gamma} - 1 + \frac{\kappa}{2}} e^{-\beta x}, \quad x \to \infty.
\end{align*}
Hence, under $\textbf{HTMP}_{\gamma, \kappa, \beta}$, either increasing $\kappa$ or decreasing $\beta$ (while keeping the other parameter fixed) will give \textit{heavier} tails. A natural quantity to consider is therefore the scale-based heaviness $h(\kappa) = \kappa / \beta$. We now present a result which states that there exists, under certain assumptions, an optimal scale-based heaviness $h^*$; once this optimal $h$ is achieved, increasing $\kappa$ will decrease the limiting free energy. The proof is given in \cref{sec:appendix:convex_proof}.
\begin{figure}[t]
    \centering
    \begin{subfigure}[b]{0.49\textwidth}
        \includegraphics[width=\textwidth]{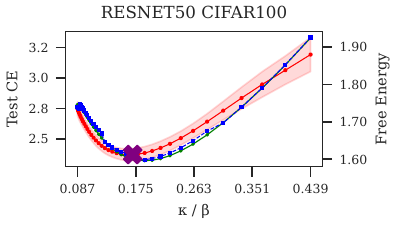}
    \end{subfigure}
    \begin{subfigure}[b]{0.49\textwidth}
        \includegraphics[width=\textwidth]{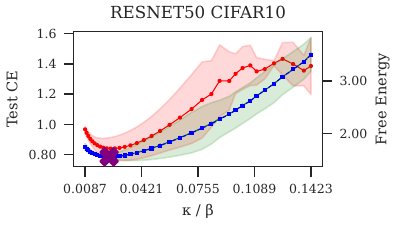}
    \end{subfigure}
    \begin{subfigure}[b]{0.49\textwidth}
        \includegraphics[width=\textwidth]{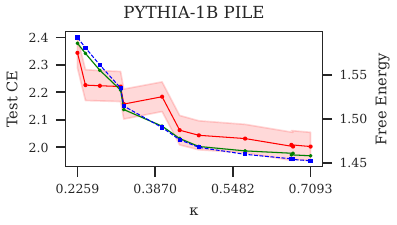}
    \end{subfigure}
     \begin{subfigure}[b]{0.49\textwidth}
        \includegraphics[width=\textwidth]{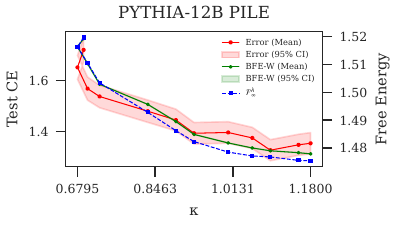}
    \end{subfigure}
    \caption{Test error, \cref{eq:bfe_weights} and $\sF_{\infty}^{\lambda}$ vs. fitted $\kappa / \beta$ and $\kappa$. For CV, $h^*$ (purple cross) aligns with minimum error. For LLMs, $h^*$ is attained early, and increasing $\kappa$ lowers test error.}
    \label{fig:free_energy_training_htmp}
\end{figure}

\begin{theorem}\label{theorem:first_moment_shannon}
For $\sF_{\infty}^{\lambda}$ in \cref{eq:limit_bfe}, we have the following results:
\begin{description}
    \item[a.] For fixed $\kappa > 0$, the first moment $\mu_1$ and the Shannon transform $\mathcal{V}_{\rho}$ (defined in \cref{def:shannon}) are both increasing in $h(\kappa)$.
    \item[b.] For fixed $\kappa > 0$, and for $\tau \to 0$, $\sF_{\infty}^{\lambda}$ is strictly convex and coercive in $h$, and there exists a unique optimal ratio $h^*(\kappa) = 2\gamma/\lambda$ for which $\sF_{\infty}^{\lambda}$ reaches its minimum.
    \item[c.] For $\tau \to 0$, at $\beta = \beta^*(\kappa) = \lambda \kappa / 2\gamma$, $\sF_{\infty}^{\lambda}$ is strictly decreasing in $\kappa$.
\end{description}
\end{theorem}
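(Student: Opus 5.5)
Part (b) needs an explicit expression for the $\tau \to 0$ limit of $\sF_\infty^\lambda$ in \cref{eq:limit_bfe}, so I would compute that first. Set $a = \kappa/2$ and $b = 1 - \kappa/(2\gamma) + \kappa/2$, so that $1-b = \kappa/(2\gamma) - \kappa/2 > 0$ because $\gamma \in (0,1)$. The standard small-$z$ asymptotics of the Tricomi function give, for $b<1$,
\[
U(a,b;z) \to \frac{\Gamma(1-b)}{\Gamma(a-b+1)} = \frac{\Gamma\!\left(\frac{\kappa}{2\gamma}-\frac{\kappa}{2}\right)}{\Gamma\!\left(\frac{\kappa}{2\gamma}\right)} \quad \text{as } z=\beta\tau \to 0 .
\]
The Tricomi term therefore tends to a constant $c(\kappa,\gamma)$ that does not depend on $\beta$. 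Substituting $\beta = \kappa/h$, we obtain
\[
\sF_\infty^\lambda\big|_{\tau=0} = \frac{\lambda h}{4\gamma} + \frac{1}{\kappa}\log\frac{\Gamma\!\left(\frac{\kappa}{2\gamma}-\frac{\kappa}{2}\right)}{\Gamma\!\left(\frac{\kappa}{2\gamma}\right)} + \frac12\log\frac{2\pi\kappa}{\lambda} - \frac12 \log h .
\]

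With this in hand, part (b) is elementary. For fixed $\kappa$ the function $h \mapsto \lambda h/(4\gamma) - \tfrac12\log h$ has second derivative $1/(2h^2)>0$, so it is strictly convex. It tends to $+\infty$ both as $h\to0^+$ and as $h\to\infty$, so it is coercive. Setting the derivative $\lambda/(4\gamma) - 1/(2h)$ to zero gives the unique minimiser $h^* = 2\gamma/\lambda$.

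For part (a), I would use the moment and Shannon-transform formulas for $\mathbf{HTMP}_{\kappa,\gamma,\beta}$ from \cref{sec:appendix:htmp}. The trace term in \cref{eq:limit_bfe} suggests $\mu_1 = \kappa/(2\gamma\beta) + \tau = h/(2\gamma) + \tau$, which is increasing in $h$. For the Shannon transform, I would use the scaling structure of HTMP. If the HTMP is a $\Gamma$-mixture of MP laws whose rate is proportional to $\beta$, then at fixed $\kappa$ the spectrum is a scale family in $1/\beta \propto h$. Since $x\mapsto \log(1+x/\sigma)$ is increasing in $x$, the transform $\mathcal V_\rho(\sigma) = \int \log(1+x/\sigma)\,d\rho(x)$ is then increasing in the scale parameter. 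If instead the Shannon transform is given there in closed form through $U$, I would differentiate in $\beta$ and sign the result using the integral representation $U(a,b;z)=\Gamma(a)^{-1}\int_0^\infty e^{-zt}t^{a-1}(1+t)^{b-a-1}\,dt$.

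Part (c) is where I expect the main difficulty. Substituting $h^*$, the $h$-dependent terms become $\tfrac12 - \tfrac12\log(2\gamma/\lambda)$, which does not depend on $\kappa$. What remains is
\[
g(\kappa) = \frac{1}{\kappa}\log\frac{\Gamma(s\kappa)}{\Gamma(\kappa/(2\gamma))} + \frac12\log\kappa, \qquad s=\frac{1-\gamma}{2\gamma}>0 .
\]
I must show $g'(\kappa)<0$ for all $\kappa>0$. Computing,
\[
\kappa^2 g'(\kappa) = -\log\frac{\Gamma(s\kappa)}{\Gamma(t\kappa)} + \kappa\bigl(s\psi(s\kappa) - t\psi(t\kappa)\bigr) + \frac{\kappa}{2}, \qquad t = \frac{1}{2\gamma}.
\]
Write $G(x) = x\psi(x) - \log\Gamma(x)$, whose derivative is $G'(x) = x\psi'(x)$. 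Then $\kappa^2 g'(\kappa) = G(s\kappa) - G(t\kappa) + \kappa/2$. Since $t - s = \tfrac12$, this equals
\[
\kappa^2 g'(\kappa) = \int_{s\kappa}^{t\kappa}\bigl(1 - x\psi'(x)\bigr)\,dx - \frac{\kappa}{2}\cdot 0 .
\]
More precisely, $\kappa/2 = \int_{s\kappa}^{t\kappa} dx$, so the bracket integrates $1 - x\psi'(x)$ with the sign arranged accordingly. The standard bound $x\psi'(x) > 1$ for all $x>0$ then gives strict negativity. I would check the signs of this step carefully, but the proof of (c) reduces to the classical inequality $\psi'(x) > 1/x$.
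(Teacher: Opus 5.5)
Your proposal is correct. Parts (b) and (c) follow the paper's proof. The paper also uses $U(a,b;z)\to\Gamma(1-b)/\Gamma(a-b+1)$ and reads off $\partial_{hh}=1/(2h^2)$ and $h^*=2\gamma/\lambda$. For (c) it writes $\kappa^2\partial_\kappa$ as $Q(s\kappa)-Q(t\kappa)$ with $Q(x)=x\psi(x)-\log\Gamma(x)-x=G(x)-x$, and concludes from $Q'(x)=x\psi'(x)-1>0$. Your identity $\kappa^2 g'(\kappa)=\int_{s\kappa}^{t\kappa}\bigl(1-x\psi'(x)\bigr)\,dx$ is the same fact in integrated form. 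Its signs are right as displayed; just delete the stray ``$-\frac{\kappa}{2}\cdot 0$''.

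The genuine difference is in (a). The paper differentiates the closed form of \cref{lemma:derive_shannon} using $\partial_z U(a,b;z)=-aU(a+1,b+1;z)$. This gives $\partial_h\mathcal{V}=-\frac{\beta}{\kappa}\bigl(xS_{\bar\rho}(-x)-1\bigr)$, which it signs via $xS_{\bar\rho}(-x)-1=-\int\lambda(\lambda+x)^{-1}\,d\bar\rho(\lambda)<0$.

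Your scale-family argument avoids all of this. You can also drop the conditional about a $\Gamma$-mixture of MP laws; it is not needed and the paper does not provide it. \cref{eq:htmp} shows directly that $\bar\rho_{\gamma,\kappa,\beta}(x)=\beta\,\bar\rho_{\gamma,\kappa,1}(\beta x)$. So at fixed $\kappa$ an HTMP variable equals $(h/\kappa)Y$ in law, with $Y\sim\bar\rho_{\gamma,\kappa,1}$ supported on $(0,\infty)$. Hence both $\mu_1=(h/\kappa)\Ex Y$ and $\mathcal{V}(x)=\Ex\log\bigl(1+hY/(\kappa x)\bigr)$ are strictly increasing in $h$.

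Differentiating your representation gives $\partial_h\mathcal{V}=h^{-1}\int\lambda(\lambda+x)^{-1}\,d\bar\rho(\lambda)$, which is exactly the paper's expression. So the paper's computation is this scaling argument carried out through the Tricomi closed form. Your route is shorter and avoids the special-function identities. It also extends verbatim to the shifted law at fixed $\tau$ and to any spectral functional monotone in the eigenvalues. The paper's version, in return, gives the derivative explicitly in terms of the Stieltjes transform.

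Your fallback via the integral representation of $U$ also works. Integrating by parts, $zU(a+1,b+1;z)-U(a,b;z)$ equals $(b-a-1)/\Gamma(a+1)$ times a positive integral, and $b-a-1=-\kappa/(2\gamma)<0$.
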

Under \cref{ass:main}, \cref{ass:int_1} and \cref{ass:int_2}, we see that the competing terms in \cref{eq:bfe_weights} are increasing and decreasing, respectively, with $h$, due to the following identities:
\begin{align*}
    \lim_{d_{l-1}, d_l \to \infty} \frac{1}{d_{l}} \Tr( \WW_l^T \WW_l) \overset{\mathbb{P}}{\to} \mu_1, \quad \lim_{d_{l-1}, d_l \to \infty} \frac{1}{d_{l}} \log \det( \WW_l^T \WW_l) \overset{\mathbb{P}}{\to} \lim_{x \to 0}  (\sV(x) + \log (x)).
\end{align*}
In practice, the spectral gap is non-zero, and hence we must impose further assumptions on the parameterization of $\kappa$, $\beta$, $\tau$. We see in \cref{fig:htmp_params} that for CV models, $\beta$ and $\tau$ are approximately inversely proportional over training epochs; under this assumption, we have the following lemma, whose proof is given in \cref{sec:appendix:opt_ratio}.

\begin{lemma}\label{lemma:opt_ratio}
    Under $\textbf{HTMP}_{\gamma, \kappa, \beta, \tau}$, if $\beta = c/\tau$ for $c > 0$, then $h^*(\kappa) = 2\gamma\kappa/\lambda(\kappa + 2\gamma c)$.
\end{lemma}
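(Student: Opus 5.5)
The plan is to substitute the constraint $\beta = c/\tau$ directly into the closed form $\sF_{\infty}^{\lambda}$ of \cref{thm:main}, rewrite everything as a function of the heaviness $h = \kappa/\beta$ with $\kappa$ held fixed, and minimise this explicit one-variable function. Unlike part b of \cref{theorem:first_moment_shannon}, no limit $\tau \to 0$ is needed. The key point is that under $\beta = c/\tau$ the product $\beta\tau = c$ is constant.

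First, the Tricomi term $\frac{1}{\kappa}\log U\bigl(\frac{\kappa}{2}, 1 - \frac{\kappa}{2\gamma} + \frac{\kappa}{2}; \beta\tau\bigr)$ depends only on $\kappa$, $\gamma$ and $\beta\tau = c$. For fixed $\kappa$ it is therefore a constant in $h$. This removes the hard part of \cref{eq:limit_bfe}: we never need to differentiate $U$ in its argument.

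Second, with $\kappa$ fixed we have $\beta = \kappa/h$ and $\tau = c/\beta = ch/\kappa$. Substituting, the trace term becomes
\begin{align*}
\frac{\lambda}{2}\left(\frac{h}{2\gamma} + \frac{c h}{\kappa}\right),
\end{align*}
and the entropy term becomes $\frac12 \log \frac{2\pi\kappa}{\lambda h}$. Hence, up to an additive constant $C(\kappa,\gamma,c,\lambda)$,
\begin{align*}
\sF_{\infty}^{\lambda}(h) = \frac{\lambda}{2}\left(\frac{1}{2\gamma} + \frac{c}{\kappa}\right) h - \frac{1}{2}\log h + C .
\end{align*}
This function is linear plus $-\frac12\log h$ on $h \in (0,\infty)$. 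Its second derivative is $1/(2h^2) > 0$, so it is strictly convex. It is also coercive: it tends to $+\infty$ both as $h \to 0^+$ and as $h \to \infty$, the latter because the linear coefficient is positive when $\lambda, \gamma, c, \kappa > 0$. A unique minimiser therefore exists.

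Third, setting the derivative to zero gives
\begin{align*}
\frac{\lambda}{2}\left(\frac{1}{2\gamma} + \frac{c}{\kappa}\right) = \frac{1}{2h},
\end{align*}
that is,
\begin{align*}
h^* = \frac{1}{\lambda\left(\frac{1}{2\gamma} + \frac{c}{\kappa}\right)} = \frac{2\gamma\kappa}{\lambda(\kappa + 2\gamma c)},
\end{align*}
which is the claimed formula. As a sanity check, $c \to 0$ recovers $h^* = 2\gamma/\lambda$ from part b of \cref{theorem:first_moment_shannon}.

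The only step needing care is the first: confirming that \cref{ass:main} and the integrability assumptions \cref{ass:int_1}, \cref{ass:int_2} stay valid along the one-parameter family $\beta = c/\tau$. This ensures \cref{thm:main} applies for every $h$ and that the $U$-term is finite, which holds since $U(a,b;z)$ is finite and positive for $a > 0$, $z = c > 0$. Once this is checked, the rest is elementary calculus.
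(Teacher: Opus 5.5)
Your proposal is correct and follows the paper's proof: the paper likewise notes that $\beta\tau = c$ makes the Tricomi term constant, then sets $\partial_\beta \sF_{\infty}^{\lambda} = 0$ with $\tau = c/\beta$ and rearranges for $h = \kappa/\beta$, which is equivalent to your differentiation in $h$. Your strict-convexity and coercivity check is a small addition the paper omits for this lemma, since it identifies the critical point without arguing that it is the unique minimizer.
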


\subsection{Using HTMP to understand training dynamics}\label{sec:htmp_dynamics}
We now employ the theoretical results above to explain learning dynamics for large DNNs. Further experimental results can be found in \cref{sec:appendix:further_experiments}.
\begin{figure}
    \centering
    \includegraphics[width=\linewidth]{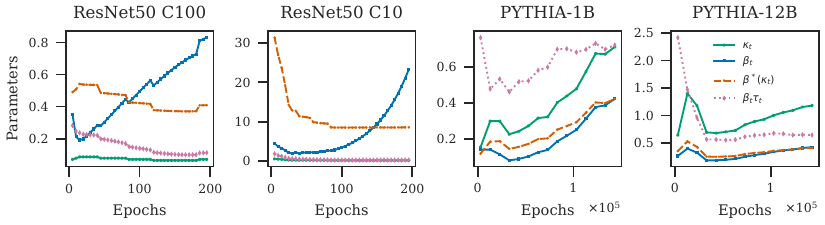}
    \caption{Fitted $\textbf{HTMP}_{\gamma, \kappa, \beta, \tau}$ parameters at each epoch, for ResNet50 (trained on CIFAR-10 and CIFAR-100), Pythia-1B and Pythia-12B models. All weights are final-layer weights, except for ResNet50 CIFAR-10, where flattened convolutional weight at the third layer is used.}
    \label{fig:htmp_params}
\end{figure}
\vspace{-2mm}
\paragraph{CV.} 
We consider ResNet50, trained on both CIFAR-100 and CIFAR-10, for $200$ epochs. After reaching zero training loss ($99\%$ accuracy is reached at epoch $20$ and $25$ respectively), each training epoch corresponds to a new \textit{trained} model. In \cref{fig:free_energy_training}(top) we plot the test cross-entropy versus \cref{eq:bfe_weights} and $\sF_{\infty}^\lambda$, for the final-layer weights (CIFAR-100) and the flattened convolutional weight at the third layer (CIFAR-10, due to small dimension of final weights). We take $\lambda = d_l/||\WW_{l,t^*}||_F^2$, where $t^*$ is the epoch when test error is minimized. Interestingly, taking $\lambda = d_l / ||\WW_t||_F^2$ at each epoch $t$ gives poor correlation for these CV models (see \cref{sec:appendix:lambda} for an in-depth discussion about the choice of $\lambda$). Using $\lambda = d_l/\Tr(\WW_{l,t^*}^T \WW_{l,t^*})$, we observe that \cref{eq:bfe_weights} and $\sF_{\infty}^\lambda$ very closely follow the test error curve, with correlations of $\mathbf{\rho = 0.927}$ and $\mathbf{\rho = 0.984}$, respectively. 

In \cref{fig:htmp_params} (left), $\kappa$ stays mostly fixed for each model, while $h_t = \kappa_t / \beta_t$ decreases during training epochs $t$. In \cref{fig:free_energy_training_htmp}, we plot test error, and the empirical and asymptotic free energy, against $h$. We plot the optimal ratio $h^*$ from \cref{lemma:opt_ratio}, and observe that the minimum test error occurs when $h_t = h^*$. Thus, $h$ is directly measuring the level of underfitting / overfitting of the model. 

\paragraph{LLM.} 
We repeat this process for the Pythia-$1$B and Pythia-$12$B models, trained on Pile. We plot the empirical and asymptotic free energies using the \textit{EmbedOut} weights of the model (final-connected weights). We plot the free energy against the cross-entropy for the OpenAI curated LAMBADA test dataset \citep{paperno2016lambada}. We compute $\lambda$ using the same process as for the CV models above. We see again that we have very close adherence of the free energies with the test error, with correlations of $\mathbf{\rho = 0.986}$ and $\mathbf{\rho = 0.965}$, respectively. Note that in all figures, \textit{epochs} of training for Pythia models refers to \textit{steps} of training.

The spectral behaviour of these LLMs contrasts that of the CV models; in \cref{fig:htmp_params} (middle, right), we observe that $\beta_t$ follows very closely the optimal $\beta^*(\kappa_t)$ from \cref{lemma:opt_ratio} (remembering that $h^*(\kappa) = \kappa / \beta^*$), while $\kappa_t$ increases during training. As predicted in \cref{theorem:first_moment_shannon} for small $\tau$, we see in \cref{fig:free_energy_training_htmp} (bottom) that this increase in $\kappa_t$ results in a decreasing free energy, which correlates with a decrease in test error. This dual optimization is distinct from the training dynamics for CV models, suggesting an inherent difference in learning dynamics.

\section{Conclusion \& discussion} 
\label{sec:conclusion}
HT-SR theory posits that the spectral distributions of DNN weight matrices can predict model quality. 
In this work, we validate this claim through a probabilistic framework; we derive a data-free metric, the Bayes Free Energy evaluated using only the final-layer weights, and we empirically show that this metric is strongly predictive of model quality, for both CV models and LLMs. 
To understand \emph{why} heavy-tailed weight matrices may generalize better, 
we derive the deterministic limiting quantity of this metric under the assumption that weight spectra follow the recently proposed
$\textbf{HTMP}_{\kappa, \gamma, \beta, \tau}$ distribution. 
We then show that this free energy is convex in the scale-based heaviness parameter, $h = \kappa/\beta$, and that an optimal parameter $h^*$ exists. Once this optimal parameter is attained, we show increasing $\kappa$ monotonically decreases free energy (correlating with improved model quality). We empirically find that CV model training correlates with optimizing $h$; in contrast, LLM models optimize $h$ early, and instead increasing $\kappa$ correlates with improved performance. Thus, we provide results that mirror the findings of the HT-SR theory; that optimal model quality can be observed in the shape of weight spectra.
\paragraph{Limitations \& future work.}
There are several limitations of our work; some of these limitations present clear paths for future work. Firstly, we note that \cref{eq:bfe_weights} was only derived for final-layer weights, and for regression. Further, the dual formulation of \cref{eq:bfe_weights} cannot immediately be compared across models with different activation dimension $d_L$. Providing a remedy to these limitations is a promising avenue for future work. Further work could also involve extending \cref{eq:bfe_weights} to models that use Dropout before the final-layer, or Vision Transformer models, as these models did not show correlation between \cref{eq:bfe_weights} and model quality. 

We also note that a limitation of our framework is that for CV models during training, the prior precision parameter $\lambda$ must be tuned post-hoc to optimize correlation. Deriving $\lambda$ a-priori would allow one to measure $\textbf{HTMP}_{\kappa, \gamma, \beta, \tau}$ parameters during training, and to early-stop when $h_t \approx h^*$. Further future work should also involve attempting to theoretically derive the $\textbf{HTMP}_{\kappa, \gamma, \beta, \tau}$ parameters for certain classes of DNNs, or to show how certain training mechanisms affect these spectral parameters. Our framework would then elucidate how these mechanisms optimize model quality, through the free energy connection.

\begin{ack}
Liam Hodgkinson is supported by the Australian Research Council through a Discovery Early Career Researcher Award (DE240100144).
Fred Roosta was partially supported by the Australian Research Council through an Industrial Transformation Training Centre for Information Resilience (IC200100022). Zhichao Wang is supported by the National Science Foundation under
Grant No. DMS-1928930, as a resident at the Simons Laufer
Mathematical Sciences Institute in Berkeley, California, in
Spring 2025. Michael W. Mahoney acknowledges partial support from
DARPA, DOE, NSF, and ONR.
We also thank Charles Martin for helpful conversations.
\end{ack}

\bibliography{Bib/biblio}
\bibliographystyle{Bib/iclr2027_conference}

\newpage
\appendix
\crefname{appendix}{Appendix}{Appendices}
\Crefname{appendix}{Appendix}{Appendices}

\section*{\Large Appendices}

\startcontents[appendix]
\printcontents[appendix]{}{1}{}

\section{Derivation of feature marginal likelihood for ridge-regression} \label[appendix]{sec:appendix:marginal}
In this section, we first derive the marginal (Gibbs) likelihood for features $\Phi$, under the training framework detailed in \cref{sec:ridge_regression}. We then provide the Bayes Free Energy, which is the negative log of the marginal likelihood. 

\subsection{Marginal likelihood derivation}

The marginal (Gibbs) likelihood in this setting is given by $\sZ^{\tau}_n(\Phi) = \Ex_{\pi_{\WW}} \exp(-L(\WW, \Phi) / \tau)$, for loss function $L(\WW, \Phi) = ||\Phi \WW - \YY||_F^2$, and $\pi_{\WW} = \sM \sN ( \bm 0, \sigma^2\eye_{d_L},\eye_m)$. As $q(\Phi)$ is only defined up to constants of proportionality, we redefine the marginal likelihood as 
\begin{align*}
    \sZ^{\tau}_n(\Phi) = (\pi \tau)^{-nm/2} \Ex_{\pi_{\WW}} \exp(-L(\WW, \Phi) / \tau).
\end{align*}
This will simplify the calculations. 

\begin{remark}
    The marginal (Gibbs) likelihood for this ridge-regression setting was derived in \citet{hodgkinson2025models}; however, we provide a different derivation, using integrals of matrix-normals. Further, \citet{hodgkinson2025models} provides the derivation of $\sZ^{\tau}_n(\Phi)$ for $L(\WW, \Phi) = ||\Phi \WW - \YY||_F^2 + \mu||\WW||_F^2$. We take $\mu = 0$.
\end{remark}

Now, note that $(\pi \tau)^{-nm/2} \exp(-||\Phi \WW - \YY||_F^2 / \tau)$ is the likelihood of the model $\YY | \WW, \Phi \sim \sN(\Phi \WW, \tau / 2 \eye)$. Then, 
\begin{align*}
    \sZ^{\tau}_n(\Phi) = \Ex_{\pi_{\WW}} p(\YY | \WW, \Phi) = \int_{\supp(\pi_{\WW})} p(\YY | \WW, \Phi) p(\WW) d\WW = p(\YY | \Phi).
\end{align*}

We can write the model as $\YY = \Phi \WW + \bm\epsilon$, where $\WW \sim \sM \sN ( \bm 0, \sigma^2\eye_d,\eye_m)$ and $\epsilon \sim \sM \sN ( \bm 0, \tau/2 \eye_n,\eye_m)$. The sum of matrix normals is a matrix normal, and we are integrating out $\WW$. Thus, using standard matrix-normal properties \citep{gupta1999matrix},
\begin{align*}
    \YY = \Phi \WW + \epsilon \sim \sM \sN( \bm 0, \sigma^2 \Phi \Phi^T + \tau / 2 \eye_n, \eye_m).
\end{align*}
Therefore, 
\begin{align*}
    \sZ^{\tau}_n(\Phi) = p(\YY | \Phi) &= (2\pi)^{-nm/2} \frac{\exp\left(-\frac{1}{2} \Tr \left( \YY^T \left(\sigma^2 \Phi \Phi^T + \tau / 2 \eye_n \right)^{-1} \YY \right)\right)}{\det \left( \sigma^2 \Phi \Phi^T + \tau / 2 \eye_n\right)^{m/2}} \\
    &= (2\pi\sigma^2)^{-nm/2} \frac{\exp\left(-\frac{1}{2\sigma^2 } \Tr \left( \YY^T \left(\Phi \Phi^T + \frac{\tau}{2\sigma^2} \eye_n \right)^{-1} \YY \right)\right)}{\det \left(\Phi \Phi^T + \frac{\tau}{2\sigma^2} \eye_n\right)^{m/2}}.
\end{align*}

\subsection{Bayes free energy for regression}
We define the Bayes Free Energy as the negative log of the marginal likelihood, i.e., $\sF^{\tau}_n(\Phi) := - \log \sZ^{\tau}_n(\Phi)$. Taking $\lambda_m = 1 / m\sigma^2$, and $\tau_m = \tau m$, we can compute that
\begin{align*}
    \frac{1}{nm} \sF^{\tau_m, \lambda_m}_n(\Phi) &:= \frac{\lambda_m}{2n} \Tr \left( \YY^T \left(\Phi \Phi^T + \frac{\lambda_m \tau_m}{2} \eye_n \right)^{-1} \YY \right) + \frac{1}{2n}\log \det \left(\Phi \Phi^T + \frac{\lambda_m \tau_m}{2} \eye_n\right) \\
    &+ \frac{1}{2}\log \left(\frac{2\pi}{\lambda_m m} \right).
\end{align*}

\section{Derivation of BFE-W}
\label[appendix]{sec:appendix:weight_energy}

In this section, we first prove \cref{thm:bfe_w}, and then derive the dual formulation. To prove these results, we rely on asymptotic expansions of $\Tr ((A + \nu \eye)^{-1})$ and $\log \det (A + \nu \eye)$, for singular matrix $A$ and perturbative parameter $\nu \ll 1$.

\subsection{Primal form}
\begin{proof}[Proof of \cref{thm:bfe_w}]\label[appendix]{proof:bfe}
Under our rank assumptions and interpolation assumptions, $\WW = \Phi^{\dagger} \YY$ is equivalent to $\Phi = \YY \WW^{\dagger}$. Let $\nu = \lambda_m \tau_m / 2 = \tau / 2\sigma^2$. We consider the trace and log-determinant terms in \cref{eq:bfe_phi} separately.

For the \textbf{trace term}, for $\nu \ll 1$, we get the following singular asymptotic expansion around $\nu$ for positive semi-definite $\AA \in \real^{n \times n}$ \citep{gonzalez2015laurent},
\begin{align*}
    (\AA + \nu \eye)^{-1} = \frac{1}{\nu} (\eye - \AA \AA^{\dagger}) + \AA^{\dagger} + \mathcal{O(\nu)}.
\end{align*}
Noting that $(\eye - \AA \AA^{\dagger})$ is a projection onto the kernel of $\AA$, and noting that $\YY \in \range(\Phi) = \range(\Phi \Phi^T)$, we get that
\begin{align*}
    \Tr \left( \YY^T \left( \Phi \Phi^T + \nu \eye\right)^{-1} \YY \right) = \Tr \left( \YY^T \left( \Phi \Phi^T \right)^{\dagger} \YY \right) + \mathcal{O(\nu)} = \Tr \left( \WW^T \WW \right) + \mathcal{O(\nu)}.
\end{align*}

For the \textbf{log-determinant term}, using the Weinstein-Aronszajn identity: for $\AA \in \real^{n \times m}, \BB \in \real^{m \times n}$, $\det(\AA\BB + \nu \eye_n) = \nu^{n-m} \det(\BB \AA + \nu \eye_m)$. and the Taylor expansion of the log-determinant of an invertible matrix around $\eye$, we see that
\begin{align*}
    &\log \det(\Phi \Phi^T + \nu \eye_n) \\
    &= \log \det (\YY (\WW^T \WW)^{\dagger} \YY^T + \nu \eye_n) \\
     &= (n-m) \log \nu + \log \det ( \YY^T \YY (\WW^T \WW)^{-1} + \nu \eye_m) \\
     &= (n-m) \log \nu + \log \det ( \YY^T \YY (\WW^T \WW)^{-1}) + \log \det (\eye_m + \nu(\YY^T \YY (\WW^T \WW)^{-1})^{-1}) \\
     &= (n-m) \log \nu + \log \det ( \YY^T \YY (\WW^T \WW)^{-1}) +  \nu \Tr((\YY^T \YY (\WW^T \WW)^{-1})^{-1}) + \mathcal{O}(\nu^2).
\end{align*}
Combining these terms, we get the following asymptotic expansion for \cref{eq:bfe_phi}, for $\tau \to 0^{+}$, evaluated using the weights
\begin{align*}
    \frac{1}{nm}\sF_n^{\tau_m, \lambda_m}(\Phi) &= \frac{\lambda_m}{2 n} \Tr (\WW^T \WW) - \frac{1}{2n} \log \det \left(\WW^T \WW\right) + \frac{1}{2} \log \frac{2 \pi}{\lambda_m m} \\
    &+ \frac{1}{2n} \log \det \Sigma + \frac{n-m}{2n}\log \left(\frac{\tau \lambda_m m}{2}\right) + \mathcal{O}(\tau).
\end{align*}
\end{proof}

\subsection{Dual form}\label[appendix]{sec:appendix:dual_derive}
If $m > d_L$, then \cref{ass:features} is invalid. Instead, we take the following, very similar, set of assumptions.
\begin{assumption}[Dual Interpolation]\label{ass:features_dual}
    For \textbf{trained} features $\Phi$, assume $n \geq m, m > d_L \geq c$ and $\rank(\Phi) = \rank(\YY) = c$. Further, assume that \textbf{trained} weights are given by $\WW = \Phi^{\dagger} \YY$, and that $\Phi \WW = \YY$.
\end{assumption}

Note that we allow for rank $c \leq d_L$, to allow for comparison across different models, with different $d_L$. This point will be expanded upon later. We then have the following lemma, whose statement and proof follows very closely that of \cref{thm:bfe_w}. 

\begin{lemma}\label{lemma:bfe_w_dual}
    Define $\Sigma := \YY^T \YY$. Then, under \cref{ass:features_dual}, as $\tau \to 0^+$,
    \begin{align*}
        \frac{1}{nm}\sF_n^{\tau_m, \lambda_m}(\Phi) &= \frac{\lambda_m}{2 n} \Tr (\WW \WW^T) - \frac{1}{2n} \log \pdet (\WW \WW^T) + \frac{1}{2} \log \frac{2 \pi}{\lambda_m m} \\
        &+ \frac{1}{2n} \log \pdet \Sigma + \frac{n-c}{2n}\log\left(\frac{\tau \lambda_m m}{2}\right) + \mathcal{O}(\tau).
\end{align*}
\end{lemma}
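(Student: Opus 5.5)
The proof follows the proof of \cref{thm:bfe_w} term by term: start from \cref{eq:bfe_phi} with $\nu = \lambda_m\tau_m/2 = \tau/2\sigma^2$, and expand the trace and log-determinant terms separately as $\nu \to 0^+$. The only change is that $\WW^T\WW$ is now rank-deficient, so inverses and determinants become pseudo-inverses and pseudo-determinants on rank-$c$ subspaces.

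\emph{Preliminary step.} Under \cref{ass:features_dual}, take a compact SVD $\Phi = \UU_\Phi \SS \VV^T$ with $\UU_\Phi \in \real^{n\times c}$, $\VV\in\real^{d_L\times c}$ and $\SS$ invertible of size $c$. Then $\WW = \Phi^\dagger\YY = \VV \SS^{-1}\UU_\Phi^T \YY$, and the interpolation $\Phi\WW = \YY$ forces $\range(\YY) = \range(\Phi)$, since both have rank $c$. Consequently $\YY = \Phi\WW$ and $\Phi\Phi^T = \YY(\WW^T\WW)^{\dagger}\YY^T$ hold in the pseudo-inverse sense. I would check this last identity directly from the SVD.

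\emph{Trace term.} Use the Laurent expansion $(\AA+\nu\eye)^{-1} = \nu^{-1}(\eye-\AA\AA^\dagger) + \AA^\dagger + \mathcal{O}(\nu)$ with $\AA = \Phi\Phi^T$. Because $\YY$ lies in $\range(\Phi\Phi^T)$, the singular part vanishes, leaving $\Tr(\YY^T(\Phi\Phi^T)^\dagger\YY)$. Substituting $\YY = \Phi\WW$ gives $\Tr(\WW^T\Phi^T(\Phi\Phi^T)^\dagger\Phi\WW) = \Tr(\WW^T P\WW)$, where $P = \Phi^\dagger\Phi$ is the projection onto the row space of $\Phi$. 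Since $\WW = \Phi^\dagger\YY$ already lies in that row space, this equals $\Tr(\WW^T\WW) = \Tr(\WW\WW^T)$.

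\emph{Log-determinant term.} The eigenvalues of $\Phi\Phi^T + \nu\eye_n$ are the $c$ nonzero eigenvalues of $\Phi\Phi^T$ shifted by $\nu$, together with $n-c$ copies of $\nu$. Hence
\[
\log\det(\Phi\Phi^T+\nu\eye_n) = (n-c)\log\nu + \log\pdet(\Phi\Phi^T) + \mathcal{O}(\nu).
\]
It remains to show $\pdet(\Phi\Phi^T) = \pdet(\Sigma)/\pdet(\WW\WW^T)$. Write $\YY = \UU_\Phi \MM$ with $\MM = \UU_\Phi^T\YY \in \real^{c\times m}$ of rank $c$. Then $\WW = \VV\SS^{-1}\MM$, so $\pdet(\WW\WW^T) = \det(\SS^{-1}\MM\MM^T\SS^{-1}) = \det(\MM\MM^T)/\det(\SS^2)$. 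Also $\pdet(\Sigma) = \pdet(\MM^T\MM) = \det(\MM\MM^T)$ and $\pdet(\Phi\Phi^T) = \det\SS^2$, which gives the identity.

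\emph{Assembly.} Collect the terms and substitute $\nu = \tau\lambda_m m/2$. After dividing by $nm$ with the normalizations of \cref{eq:bfe_phi}, this yields the stated expansion, including the $\frac{n-c}{2n}\log(\tau\lambda_m m/2)$ and $\frac{1}{2n}\log\pdet\Sigma$ terms.

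The main obstacle is the bookkeeping of ranks in the preliminary step and in the pseudo-determinant identity. In particular, one must verify that $\rank(\YY) = c$ together with interpolation really forces $\range(\YY) = \range(\Phi)$, and that the $c$-dimensional compact factorizations line up so that every pseudo-determinant is a genuine determinant of a $c\times c$ invertible matrix. I would handle all of this through the single compact SVD of $\Phi$ introduced above.
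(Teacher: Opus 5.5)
Your proof is correct. Its skeleton matches the paper's: the same split into trace and log-determinant terms, the same Laurent expansion with the singular part annihilated because $\range(\YY)\subseteq\range(\Phi)$, and the same spectral identity $\log\det(\Phi\Phi^T+\nu\eye_n)=(n-c)\log\nu+\log\pdet(\Phi\Phi^T)+\mathcal{O}(\nu)$.

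Where you genuinely depart is in proving $\pdet(\Phi\Phi^T)=\pdet(\Sigma)/\pdet(\WW\WW^T)$. The paper substitutes $\Phi=\YY\WW^{\dagger}$, so that $\Phi\Phi^T=\YY(\WW^T\WW)^{\dagger}\YY^T$. It then passes to the $m\times m$ matrix $\YY^T\YY(\WW^T\WW)^{\dagger}$, tacitly using that $\AA\BB$ and $\BB\AA$ share nonzero eigenvalues. Finally it invokes its separate factorization result, \cref{lemma:pdet}, which requires equal ranks and $\range(\YY^T\YY)=\range(\WW^T\WW)$.

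You bypass all of this by writing $\YY=\UU_\Phi\mathbf{M}$ and reducing the three pseudo-determinants to determinants of the invertible $c\times c$ matrices $\mathbf{S}^2$ and $\mathbf{M}\mathbf{M}^T$. As a result, the identity $\Phi\Phi^T=\YY(\WW^T\WW)^{\dagger}\YY^T$ from your preliminary step is never used and can be dropped.

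What your route buys:
\begin{itemize}
\item It is more elementary and self-contained, since the range-matching hypothesis and the $\AA\BB$/$\BB\AA$ step become automatic.
\item You explicitly justify $\Tr(\YY^T(\Phi\Phi^T)^{\dagger}\YY)=\Tr(\WW\WW^T)$ via the projection $\Phi^{\dagger}\Phi$ onto the row space of $\Phi$. The paper only asserts this by reference to the primal case.
\end{itemize}

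What the paper's route buys:
\begin{itemize}
\item It stays in the $\YY,\WW$ variables throughout.
\item It parallels the primal proof of \cref{thm:bfe_w}, where Weinstein--Aronszajn and multiplicativity of $\det$ suffice.
\item It isolates a reusable general lemma on pseudo-determinants of products.
\end{itemize}
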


Before proving \cref{lemma:bfe_w_dual}, we require a lemma on factorization of pseudo-determinants.

\begin{lemma}[Pseudo-Determinant Factorization]\label{lemma:pdet}
    Let $\AA, \BB \in \real^{d \times d}$ be symmetric positive semi-definite matrices with $\rank(\AA) = \rank(\BB) = c$ and $\range(\AA) = \range(\BB)$. Then, 
    \begin{align*}
        \pdet(\AA \BB) = \pdet(\AA) \pdet(\BB).
    \end{align*}
\end{lemma}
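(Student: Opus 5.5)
We reduce the statement to an ordinary determinant on the common range. Let $V := \range(\AA) = \range(\BB)$, a $c$-dimensional subspace of $\real^d$. Choose an orthonormal basis and collect it as the columns of $\QQ \in \real^{d \times c}$, so that $\QQ^T \QQ = \eye_c$ and $\QQ \QQ^T$ is the orthogonal projection onto $V$. Since $\AA$ is symmetric, its kernel is $V^{\perp}$. Hence $\AA = \QQ \QQ^T \AA \QQ \QQ^T$, and similarly for $\BB$. Setting $\tilde{\AA} := \QQ^T \AA \QQ$ and $\tilde{\BB} := \QQ^T \BB \QQ$, both $c \times c$ matrices, we get $\AA = \QQ \tilde{\AA} \QQ^T$ and $\BB = \QQ \tilde{\BB} \QQ^T$.

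Next we check that $\tilde{\AA}$ and $\tilde{\BB}$ are invertible (in fact positive definite) and have the right pseudo-determinants. The nonzero eigenvalues of $\AA$ are exactly the eigenvalues of $\tilde{\AA}$: if $\AA \vv = \mu \vv$ with $\mu \neq 0$, then $\vv \in V$, so $\vv = \QQ \ww$ and $\tilde{\AA} \ww = \mu \ww$. The converse direction is analogous. Because $\rank(\AA) = c$, $\tilde{\AA}$ has no zero eigenvalue, and therefore $\pdet(\AA) = \det(\tilde{\AA})$. The same argument gives $\pdet(\BB) = \det(\tilde{\BB})$.

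Now compute the product. Using $\QQ^T \QQ = \eye_c$,
\begin{align*}
\AA \BB = \QQ \tilde{\AA} \QQ^T \QQ \tilde{\BB} \QQ^T = \QQ (\tilde{\AA} \tilde{\BB}) \QQ^T .
\end{align*}
The matrix $\AA\BB$ is generally not symmetric, so its pseudo-determinant should be read as the product of its nonzero eigenvalues, counted with algebraic multiplicity. This is the step that needs the most care. The characteristic polynomial satisfies
\begin{align*}
\det(x\eye_d - \QQ \MM \QQ^T) = x^{d-c} \det(x \eye_c - \MM), \qquad \MM := \tilde{\AA}\tilde{\BB},
\end{align*}
by the Weinstein--Aronszajn identity already used in the proof of \cref{thm:bfe_w}, applied with $\QQ\MM$ and $\QQ^T$ and again using $\QQ^T \QQ = \eye_c$. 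Since $\MM$ is invertible, the nonzero eigenvalues of $\AA\BB$, with multiplicity, are exactly the eigenvalues of $\MM$. It follows that $\pdet(\AA \BB) = \det(\tilde{\AA} \tilde{\BB}) = \det(\tilde{\AA}) \det(\tilde{\BB}) = \pdet(\AA)\pdet(\BB)$, which proves the lemma.

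The main point to state explicitly is this convention for $\pdet$ of the non-symmetric product $\AA\BB$. Because $\AA\BB$ is similar to $\tilde{\AA}^{1/2}\tilde{\BB}\tilde{\AA}^{1/2} \oplus 0$, all its nonzero eigenvalues are real and positive, so no sign or ambiguity issues arise. The hypothesis of equal ranges is essential. Without it, $\QQ^T \QQ$ would not reduce to the identity between the two factorizations, and the identity can fail.
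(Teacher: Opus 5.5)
Your proof is correct and follows essentially the same route as the paper. Both arguments reduce $\AA\BB$ to a nonsingular $c\times c$ core on the common range, and your orthonormal basis $\mathbf{Q}$ plays the role of the paper's eigenbasis $\VV$ of $\BB$, under which your $\tilde{\AA}$ and $\tilde{\BB}$ become the paper's $\RR\Lambda_{\AA}\RR^T$ and $\Lambda_{\BB}$. Your identity $\det(x\eye_d - \mathbf{Q}M\mathbf{Q}^T) = x^{d-c}\det(x\eye_c - M)$ spells out the algebraic-multiplicity step that the paper asserts in one line, namely that the eigenvalues of the nonsingular core are precisely the nonzero eigenvalues of $\AA\BB$.
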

\begin{proof}
    Take the compact spectral decompositions
    \begin{align*}
        \AA = \UU \Lambda_{\AA} \UU^T,
        \qquad
        \BB = \VV \Lambda_{\BB} \VV^T,
    \end{align*}
    where $\UU,\VV\in\real^{d\times c}$ have orthonormal columns, and $\Lambda_{\AA},\Lambda_{\BB}\in\real^{c\times c}$ are diagonal matrices containing the non-zero eigenvalues of $\AA$ and $\BB$, respectively. Since $\range(\AA)=\range(\BB)$, the columns of $\UU$ and $\VV$ are orthonormal bases for the same subspace. Hence, there exists an orthogonal $\RR\in\real^{c\times c}$ such that $\UU=\VV\RR$. Therefore,
    \begin{align*}
        \AA\BB
        =
        \VV\RR\Lambda_{\AA}\RR^T\Lambda_{\BB}\VV^T.
    \end{align*}
    Since $\RR\Lambda_{\AA}\RR^T\Lambda_{\BB}$ is non-singular, its eigenvalues
    are precisely the non-zero eigenvalues of $\AA\BB$. Thus,
    \begin{align*}
        \pdet(\AA\BB) = \det(\RR\Lambda_{\AA}\RR^T\Lambda_{\BB}) = \det(\RR)^2 \det(\Lambda_{\AA})\det(\Lambda_{\BB}) = \pdet(\AA)\pdet(\BB),
    \end{align*}
    where we used $\det(\RR)^2=1$.
\end{proof}

We now prove the dual expansion of \cref{eq:bfe_phi}.

\begin{proof}[Proof of \cref{lemma:bfe_w_dual}]
Under the rank and interpolation assumptions, $\WW = \Phi^{\dagger} \YY$ is equivalent to $\Phi = \YY \WW^{\dagger}$, and $\text{Range}(\YY^T \YY) = \text{Range}(\WW^T \WW)$. Let $\nu = \lambda_m \tau_m / 2 = \tau / 2\sigma^2$. We consider the trace and log-determinant terms in \cref{eq:bfe_phi} separately.

For the \textbf{trace term}, the exact derivation from the proof of \cref{thm:bfe_w} can be used, to get that
\begin{align*}
    \Tr \left( \YY^T \left( \Phi \Phi^T + \nu \eye\right)^{-1} \YY \right) = \Tr \left( \YY^T \left( \Phi \Phi^T \right)^{\dagger} \YY \right) + \mathcal{O(\nu)} = \Tr \left( \WW \WW^T \right) + \mathcal{O(\nu)}.
\end{align*}

For the \textbf{log-determinant term}, we note the following spectral identity for matrix $\AA \in \real^{n \times n}$, where $\rank(\AA) = c$:
\begin{align*}
    \det(\AA + \nu \eye_n) = \nu^{n-c} \prod_{i=1}^c (\lambda_i + \nu) = \nu^{n-c} \prod_{i=1}^c (\lambda_i) \prod_{i=1}^c \left(1 + \frac{\nu}{\lambda_i}\right) = \nu^{n-c} \pdet(\AA)\prod_{i=1}^c \left(1 + \frac{\nu}{\lambda_i}\right).
\end{align*}
Thus, also using the expansion $\log(1 + x) = x + \mathcal{O}(x^2)$ for small $x$,
\begin{align*}
    \log \det(\Phi \Phi^T + \nu \eye_n) &= \log \det (\YY (\WW^T \WW)^{\dagger} \YY^T + \nu \eye_n) \\
    &= (n-c) \log \nu + \log \pdet ( \YY^T \YY (\WW^T \WW)^{\dagger}) + \mathcal{O}(\nu) \\
    &= (n-c) \log \nu + \log \pdet ( \YY^T \YY) - \log \pdet (\WW \WW^T) + \mathcal{O}(\nu) \quad (\text{by \cref{lemma:pdet}}).
\end{align*}

Combining these terms, we get the following asymptotic expansion for \cref{eq:bfe_phi}, for $\tau \to 0^{+}$, evaluated using the weights
\begin{align*}
    \frac{1}{nm}\sF_n^{\tau_m, \lambda_m}(\Phi) &= \frac{\lambda_m}{2 n} \Tr (\WW \WW^T) - \frac{1}{2n} \log \pdet (\WW \WW^T) + \frac{1}{2} \log \frac{2 \pi}{\lambda_m m} \\
    &+ \frac{1}{2n} \log \pdet (\Sigma) + \frac{n-c}{2n}\log (\frac{\tau \lambda_m m}{2}) + \mathcal{O}(\tau).
\end{align*}
\end{proof}

Note that unlike the primal formulation, we cannot easily compare the dual expansion across models with differing $d_L$. For models trained on the same dataset, with the same final activation dimension $d_L$, and full-rank features (i.e., $c = d_L$), taking the difference of the asymptotic expansions in \cref{lemma:bfe_w_dual}, and then re-normalizing by $d_L$, will result in the following \textbf{relative dual free energy:} 
\begin{tcolorbox}
\vspace{-3mm}
    \begin{align}\label{eq:bfe_weights_dual}
    \tilde{\mathcal{F}}_{d_L}^{\lambda}(\WW) := \frac{\lambda}{2 d_L} \Tr (\WW \WW^T) - \frac{1}{2d_L} \log \det \WW \WW^T + \frac{1}{2} \log \frac{2 \pi}{\lambda}. \tag{DUAL BFE-$\WW$}
\end{align}
\end{tcolorbox}
However, in \cref{fig:free_energy_vs_test_acc}, we see that even for models with \textbf{different} activation dimension $d_L$, this dual free energy shows strong correlation with model quality, for models in the Pythia suite of LLMs and the Qwen3.5 suite. Deriving an asymptotic expansion that allows for direct comparison across models with differing activation dimension is left for future work.

\section{High-Temperature Marchenko-Pastur distribution}\label[appendix]{sec:appendix:htmp}
In this section, we give details on the HTMP distribution \citep{hodgkinson2025models}. 
Take a random matrix $A_N$ with beta-Laguerre ensemble of size $N \times N$, defined by its joint eigenvalue density function
\begin{align}
    q(\lambda_1, \dots, \lambda_N) \propto \prod_{i = 1}^{N} \lambda_i^{\frac{\tilde{\beta}}{2}(D - N + 1) - 1}e^{-\frac{\tilde{\beta} D}{2} \lambda_i} \prod_{i,j=1,\dots,N,~ i<j} |\lambda_j - \lambda_i|^{\tilde{\beta}}, \label{eq:betaLaguerre}
\end{align}
for some $\tilde{\beta} = \tilde{\beta}(N) > 0$, and $D, N \in \mathbb{N}, $where $D = D(N) \geq N$. Let $\kappa = \kappa(N) = \tilde{\beta}(N) N$. As the ensemble size $N \to \infty$, the empirical spectral distribution (ESD) of $A_N$ will converge to either the Marchenko-Pastur (MP) distribution, or the HTMP distribution, depending on the limit of $\tilde{\beta}(N)$. We first consider the case when $\tilde{\beta}(N) N \to \infty$ (equivalently $\kappa(N) \to \infty$), where the ESD of $A_N$ converges to the MP distribution, with probability density function 

\begin{align}
    \rho_\gamma(x) = (1 - \gamma^{-1})^+ \delta_0(x) + \frac{1}{2\pi} \frac{\sqrt{(\gamma_+ - x)^+(x - \gamma_-)^+}}{\gamma x}, \quad x \geq 0 \label{eq:MP} \tag{\textbf{MP}},
\end{align}
where $\gamma_{\pm} = (1 \pm \sqrt\gamma)^{2}$, and $(x)^+ = \max (0,x)$. In this case, we have the following result.
\begin{theorem}[\citet{hodgkinson2025models}]
If $\tilde{\beta}(N) N \to \infty$ as $N \to \infty$ and $N/D \to \gamma \in (0,1)$, then the ESD $\mu_N$ of beta-Laguerre ensemble $A_N$, defined by \cref{eq:betaLaguerre}, converges weakly to the Marchenko-Pastur distribution $\textbf{MP}_{\gamma}$, whose probability density function is $\rho_\gamma(x)$ with parameter $\gamma$, defined in \cref{eq:MP}.
\end{theorem}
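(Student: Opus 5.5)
The plan is to use the Dumitriu--Edelman bidiagonal matrix model of the beta-Laguerre ensemble. This reduces weak convergence of the ESD to a concentration statement for independent $\chi$ variables, followed by a deterministic computation.

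\textbf{Step 1 (matrix model).} Write $A_N \overset{d}{=} B_N B_N^T$, where $B_N$ is the $N\times N$ lower bidiagonal matrix with independent entries. The diagonal entries are $(B_N)_{ii} = \chi_{\tilde\beta(D-i+1)}/\sqrt{\tilde\beta D}$ and the subdiagonal entries are $(B_N)_{i+1,i} = \chi_{\tilde\beta(N-i)}/\sqrt{\tilde\beta D}$. I would check that the normalisation by $\sqrt{\tilde\beta D}$ reproduces the weight $e^{-\tilde\beta D\lambda/2}$ and the exponent $\frac{\tilde\beta}{2}(D-N+1)-1$ in \cref{eq:betaLaguerre}. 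The eigenvalues of $A_N$ are then the squared singular values of $B_N$.

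\textbf{Step 2 (concentration of entries).} Compare $B_N$ with the deterministic bidiagonal matrix $\bar B_N$ whose diagonal entries are $\sqrt{(D-i+1)/D}$ and whose subdiagonal entries are $\sqrt{(N-i)/D}$.
\begin{itemize}
\item For a $\chi_k$ variable, $\Ex(\chi_k/\sqrt{k}-1)^2 = O(1/k)$.
\item Every diagonal entry has $k \geq \tilde\beta(D-N+1)\gtrsim \tilde\beta N(\gamma^{-1}-1)\to\infty$.
\item The subdiagonal entries with $N-i\geq \varepsilon N$ have $k\geq \varepsilon\tilde\beta N\to\infty$.
\item The remaining $\varepsilon N$ subdiagonal entries have bounded second moments, of order $(N-i)/D\le\varepsilon/\gamma$ up to constants.
\end{itemize}
Hence $\frac1N\Ex\|B_N-\bar B_N\|_F^2 \le o(1) + C\varepsilon$. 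This is the one place the hypothesis $\tilde\beta N\to\infty$ enters. If $\tilde\beta N = \kappa$ stayed bounded, the fluctuations would be order one and would produce the HTMP limit instead.

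\textbf{Step 3 (perturbation).} Apply the Hoffman--Wielandt inequality for singular values: $\sum_i(s_i(B_N)-s_i(\bar B_N))^2\le\|B_N-\bar B_N\|_F^2$. Since $\frac1N\|\bar B_N\|_F^2$ is bounded, the ESDs of $B_NB_N^T$ and $\bar B_N\bar B_N^T$ are close in Lévy (or bounded-Lipschitz) distance, in probability. Letting $N\to\infty$ and then $\varepsilon\to0$, the ESD of $A_N$ has the same weak limit as that of $\bar A_N=\bar B_N\bar B_N^T$.

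\textbf{Step 4 (deterministic limit, the main obstacle).} It remains to show that the ESD of $\bar A_N$ converges to $\textbf{MP}_\gamma$. $\bar A_N$ is a tridiagonal matrix whose entries vary slowly in $i/N$. For each $k$, $\frac1N\Tr\bar A_N^k$ is a sum over lattice paths of products of the profiles $1-\gamma t$ and $\gamma(1-t)$, with $t=i/N$, up to $O(1/N)$ errors. It therefore converges to an integral, $\int_0^1 m_k(t)\,dt$ with $m_k(t)$ the $k$-th moment of the local (constant-coefficient) Jacobi operator.

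Rather than evaluate this integral directly, I would shortcut the combinatorics. The same skeleton $\bar B_N$ is the mean-field limit in the classical $\tilde\beta\in\{1,2\}$ case, where convergence to $\textbf{MP}_\gamma$ is the Marchenko--Pastur theorem. Steps 2 and 3 also hold there, because $\tilde\beta N\to\infty$ trivially. So the limiting moments of $\bar A_N$ must equal the MP moments $\sum_{r}\frac1{k}\binom{k}{r}\binom{k}{r-1}\gamma^{r-1}$.

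MP is compactly supported and hence determined by its moments, which gives weak convergence. The delicate point is the bookkeeping of the boundary indices near $i\approx N$ and the uniform moment bounds needed to pass from Lévy-distance closeness to weak convergence in probability. Alternatively, I would verify the moment identity directly via the Narayana-number path count, as a check.
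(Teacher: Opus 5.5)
Your argument is correct. It necessarily takes a different route from the paper, because the paper gives no proof: it quotes the statement from \citet{hodgkinson2025models} only as background for the HTMP regime.

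Your proof is self-contained and elementary. The Dumitriu--Edelman normalisation checks out: with $\chi$-parameters $\tilde\beta(D-i+1)$ and $\tilde\beta(N-i)$ and scaling $(\tilde\beta D)^{-1/2}$, one recovers exactly the exponent $\frac{\tilde\beta}{2}(D-N+1)-1$ and the weight $e^{-\tilde\beta D\lambda/2}$. The Hoffman--Wielandt (Mirsky) inequality plus Cauchy--Schwarz then gives $W_1(\mu_{A_N},\mu_{\bar A_N})\to0$ in probability.

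The decisive point is that $\bar B_N$ depends only on $(N,D)$, not on $\tilde\beta$. At $\tilde\beta=1$, \cref{eq:betaLaguerre} is exactly the spectrum of $D^{-1}XX^T$ with $X$ an $N\times D$ standard real Gaussian matrix (here $D\in\mathbb{N}$ is used). So the classical Marchenko--Pastur theorem and the triangle inequality in the bounded-Lipschitz metric force the \emph{deterministic} measures $\mu_{\bar A_N}$ to converge to $\mathbf{MP}_\gamma$. The same comparison then transfers this limit to every $\tilde\beta$ with $\tilde\beta N\to\infty$.

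Your route shows transparently where the hypothesis enters: through the fluctuation size $\frac1N\Ex\|B_N-\bar B_N\|_F^2=O(1/(\tilde\beta D))$, which does not vanish when $\tilde\beta N\to\kappa$. It does not give the HTMP limit itself. That requires actually computing the limiting moments or Stieltjes transform, which the cited framework supplies for both regimes at once.

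Two simplifications are available.
\begin{itemize}
\item The $\varepsilon$-splitting in Step~2 is unnecessary. Since $\Ex(\chi_k-\sqrt k)^2\le 2$ for all $k>0$, every entry of $B_N-\bar B_N$ has second moment at most $2/(\tilde\beta D)$, so $\frac1N\Ex\|B_N-\bar B_N\|_F^2\le 4/(\tilde\beta D)\to0$.
\item What you call the main obstacle in Step~4 disappears once you use the $\tilde\beta=1$ comparison. No path counting, boundary bookkeeping, or uniform moment bounds are needed, because the $W_1$ bound already controls a metric for weak convergence. In any case $\|\bar B_N\|\le 2$, so all $\mu_{\bar A_N}$ are supported in $[0,4]$.
\end{itemize}
Finally, note that you obtain weak convergence in probability. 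An almost-sure statement would need an extra concentration estimate for $\sum_j(\chi_{k_j}-\sqrt{k_j})^2$ together with Borel--Cantelli, but in-probability convergence is the mode the paper itself uses (\cref{ass:main}).
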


If, instead, the temperature $\tilde{\beta}(N) \sim \kappa / N$, then we arise at the high-temperature MP distribution, as stated in the following result.

\begin{theorem}[\citet{hodgkinson2025models}]
Assume $\kappa(N) = \tilde{\beta}(N)N \to \kappa \in (0,\infty)$ as $N \to \infty$ and $N/D \to \gamma \in (0,1)$, then the ESD $\mu_N$ of beta-Laguerre ensemble $A_N$, defined by \cref{eq:betaLaguerre}, converges weakly to a probability distribution on $(0,\infty)$ with probability density function $\rho_{\gamma, \kappa}$ defined as
\begin{align*}
    \rho_{\gamma, \kappa}(x) = \frac{\kappa}{2\gamma} \frac{1}{\Gamma(\kappa/2 + 1)\Gamma(\kappa/2\gamma)} \frac{(\frac{\kappa x}{2\gamma})^{\kappa/2\gamma - 1 - \kappa/2} e^{-\frac{\kappa x}{2\gamma}}}{|U(\kappa/2, -\frac{\kappa}{2\gamma} + 1 + \frac{\kappa}{2}; -\kappa x / 2 \gamma)|^2}, && x \geq 0,
\end{align*}
where $U(a,b,z)$ is the Tricomi confluent hypergeometric function \citep{DLMF}. This limiting spectrum is the $\textbf{HTMP}_{\gamma, \kappa}$ distribution, with parameter $\kappa, \gamma > 0$.
\end{theorem}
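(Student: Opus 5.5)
The plan is a mean-field (large deviations) argument for the high-temperature Coulomb gas, followed by an explicit solution of the resulting Euler--Lagrange equation through a linearizable Riccati equation. First, rewrite the joint density \cref{eq:betaLaguerre} in Gibbs form $q \propto \exp(-N H_N)$ with
\begin{align*}
H_N(\lambda) = \sum_{i} V_N(\lambda_i) - \frac{\kappa(N)}{N}\sum_{i<j}\log|\lambda_i-\lambda_j|.
\end{align*}
Using $\tilde\beta = \kappa(N)/N$ and $N/D\to\gamma$, the confining potential per eigenvalue converges to
\begin{align*}
V(x) = \frac{\kappa x}{2\gamma} - \Big(\frac{\kappa}{2\gamma}-\frac{\kappa}{2}-1\Big)\log x .
\end{align*}
The key point is that at this scaling the pairwise interaction is only of order $N$ in total, the same order as the entropy of the empirical measure. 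Sanov's theorem combined with the Laplace/Varadhan lemma (as in Garc\'ia-Zelada's and Allez--Bouchaud--Guionnet's treatment of high-temperature ensembles) then gives an LDP at speed $N$ for the ESD $\mu_N$. The rate function is
\begin{align*}
\mathcal{I}(\mu) = \int V\,d\mu - \frac{\kappa}{2}\iint \log|x-y|\,d\mu(x)d\mu(y) + \int \mu\log\mu \;-\; \text{const},
\end{align*}
on probability measures on $(0,\infty)$. I would check goodness of $\mathcal{I}$ via the linear growth of $V$, and strict convexity via the fact that the logarithmic energy is positive definite on signed measures of zero mass. Together these give a unique minimizer $\rho_{\gamma,\kappa}$ and weak convergence of $\mu_N$ to it in probability (indeed almost surely by Borel--Cantelli, since deviations are exponentially small).

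Second, derive the Euler--Lagrange equation for the minimizer: on the whole half-line,
\begin{align*}
\rho(x) = C\, x^{\frac{\kappa}{2\gamma}-\frac{\kappa}{2}-1} e^{-\frac{\kappa x}{2\gamma}} \exp\Big(\kappa\int\log|x-y|\rho(y)\,dy\Big).
\end{align*}
Here the support is all of $(0,\infty)$ because of the entropy term. This already matches the factor $x^{\kappa/2\gamma-1-\kappa/2}e^{-\kappa x/2\gamma}$ in the claimed density. It remains to show that
\begin{align*}
\exp\Big(\kappa\int\log|x-y|\rho\Big) \propto |f(x)|^{-2}, \qquad f(x) = U\big(\tfrac{\kappa}{2}, 1-\tfrac{\kappa}{2\gamma}+\tfrac{\kappa}{2}; -\tfrac{\kappa x}{2\gamma}\big),
\end{align*}
understood as a boundary value from the upper half-plane.

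The approach is to work with the Stieltjes transform $G(z)=\int \rho(y)/(z-y)\,dy$. Differentiating the Euler--Lagrange equation gives
\begin{align*}
\frac{\rho'}{\rho} = -V'(x) + \kappa\,\mathrm{p.v.}\int\frac{\rho(y)}{x-y}\,dy .
\end{align*}
Combining this with the Plemelj formulas, one finds that $G$ satisfies a Riccati equation $G' = \kappa G^2/2 - V' G + (\text{linear term})$. The substitution $G = -\frac{2}{\kappa}\,\frac{d}{dz}\log g(z)$ turns it into a second-order linear ODE. After rescaling $z \mapsto -\kappa z/2\gamma$, this should be Kummer's equation with parameters $a=\kappa/2$ and $b=1-\kappa/2\gamma+\kappa/2$. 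The asymptotics $G(z)\sim 1/z$ at infinity and the behaviour at $0$ then select the Tricomi solution $U$ rather than $M$. Finally, $\rho(x) = -\pi^{-1}\operatorname{Im} G(x+i0)$ combined with the Wronskian of the pair $U(a,b;\cdot)$ and its conjugate yields the modulus-squared denominator and the normalising constant $\frac{\kappa}{2\gamma}\,\big(\Gamma(\kappa/2+1)\Gamma(\kappa/2\gamma)\big)^{-1}$.

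I expect this last step to be the main obstacle: identifying the correct branch and solution of the confluent ODE, and checking that $U(\cdot;-\kappa x/2\gamma)$ has no zeros on $(0,\infty)$ so the formula is well defined. If the direct verification becomes unwieldy, I would fall back on the Dumitriu--Edelman bidiagonal model with $\chi$-distributed entries. There the method of moments reduces the limit to a Jacobi operator whose orthogonal polynomials are associated Laguerre polynomials, and the density then follows from known Markov--Stieltjes/continued-fraction formulas involving $U$, as in Trinh--Trinh.
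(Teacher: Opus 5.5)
The paper gives no proof of this statement. It quotes it from \citet{hodgkinson2025models}, and the only related ingredient it records is the Stieltjes transform credited to \citet{ismail1988linear}, which comes from the orthogonality measure of associated Laguerre polynomials. That is essentially your fallback route: bidiagonal $\chi$-model, moments, identification with the associated-Laguerre measure, then the Ismail--Letessier--Valent formula.

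Your main route is genuinely different: a speed-$N$ LDP with entropy, the Euler--Lagrange equation, a Riccati equation for $G$, and linearisation to Kummer's equation. It is sound, and carried out it reproduces the stated density including the constant. What it buys:
\begin{itemize}
\item a variational characterisation of $\rho_{\gamma,\kappa}$;
\item exponential concentration, hence almost sure convergence;
\item a structural explanation of the prefactor $x^{-b}e^{-\kappa x/2\gamma}$ (it is a Wronskian) and of the factor $|U|^{-2}$;
\item as a by-product, the Stieltjes transform $S(z)=\frac{\kappa}{2\gamma}U(a+1,b+1;-\tfrac{\kappa z}{2\gamma})/U(a,b;-\tfrac{\kappa z}{2\gamma})$ that the paper uses later, with $a=\kappa/2$ and $b=1-\frac{\kappa}{2\gamma}+\frac{\kappa}{2}$.
\end{itemize}
The orthogonal-polynomial route avoids LDP machinery for a singular, $N$-dependent interaction with a hard edge, and avoids regularity arguments for the minimiser. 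Its price is identifying the limiting moments of a bidiagonal model whose entries do not concentrate, and importing an external special-function formula.

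\textbf{Points to tighten.}

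(i) The density is $q\propto e^{-H_N}$ with your $H_N=O(N)$, not $e^{-NH_N}$. With the extra factor $N$ the energy would sit at speed $N^2$, the entropy would drop out, and the minimiser would be a compactly supported equilibrium measure. It is also cleaner to write the rate function as relative entropy with respect to the finite measure $x^{-b}e^{-\kappa x/2\gamma}\,dx$. For $\frac{\kappa(1-\gamma)}{2\gamma}<1$ one has $V\to-\infty$ at $0$, so $\int V\,d\mu+\int\mu\log\mu$ can be of the form $\infty-\infty$.

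(ii) The inhomogeneous Riccati term is not linear, and it must be derived because it fixes $a$. Set $R:=G'+V'G-\frac{\kappa}{2}G^2$.
\begin{itemize}
\item The Euler--Lagrange equation and Plemelj give $\operatorname{Im}R(x+i0)=0$ on $(0,\infty)$, so $R$ has no jump there.
\item Since $\rho(x)\asymp x^{-b}$ at $0$ with $-b>-1$, one gets $z^2R(z)\to0$, so $zR$ is entire.
\item Then $G=z^{-1}+O(z^{-2})$ and Liouville give $R=\frac{\kappa}{2\gamma z}$.
\end{itemize}
A general term $c_0/z$ would give Kummer's equation with $a=\gamma c_0$.

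(iii) Your ``main obstacle'' resolves itself.
\begin{itemize}
\item The function $g=\exp(-\frac{\kappa}{2}\int^z G)$ is zero-free on $\complex\setminus[0,\infty)$ by construction.
\item Every solution not proportional to $U(a,b;w)$, with $w=-\kappa z/2\gamma$, grows like $e^{w}w^{a-b}$ as $\operatorname{Re}z\to-\infty$, which would force $G\to1/\gamma$. So $g=cU(a,b;w)$, behaviour at $0$ is not needed, and $U$ has no zeros off the cut.
\item On the cut, write your $f(x+i0)=A+e^{-i\pi(1-b)}B$, with $A,B$ the real pieces of the Kummer decomposition used in \cref{lemma:cdf_noninteger}. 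Then $\operatorname{Im}(f'\bar f)=\sin(\pi b)(A'B-AB')$ is a nonzero multiple of the Wronskian, which is $\propto x^{-b}e^{-\kappa x/2\gamma}$. Hence $f(x+i0)\neq0$.
\item Evaluating as $x\to0^+$ (using $b<1$; integer $b$ follows by continuity in $\kappa$) gives $\operatorname{Im}(f'\bar f)=\frac{\pi}{\Gamma(a)\Gamma(a-b+1)}(\frac{\kappa}{2\gamma})^{1-b}x^{-b}e^{-\kappa x/2\gamma}$.
\item Then $\rho=\frac{2}{\pi\kappa}\operatorname{Im}(f'/f)$ is exactly the stated formula, since $a-b+1=\kappa/2\gamma$ and $\frac{2}{\kappa}\frac{1}{\Gamma(a)}=\frac{1}{\Gamma(\kappa/2+1)}$.
\end{itemize}
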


\begin{remark}
By \citet[Theorem E.1]{hodgkinson2025models}, $\textbf{HTMP}_{\kappa, \gamma, \beta} \to \textbf{MP}_{\gamma}$ as $\kappa, \beta \to \infty$ such that $\kappa / \beta \to 2\gamma$.
This---coupled with the fact that the model can exhibit heavy-tailed ESDs, while not exhibiting heavy-tailed elements---means that the $\textbf{HTMP}_{\kappa, \gamma, \beta, \tau}$ distribution should well-model most stages of DNN training.\footnote{It will not cover bulk $+$ spike distributions common for older DNNs; see \citet{martin2021implicit} for the 5+1 phases.} 
\end{remark}

The Stieltjes transform of a probability distribution $\rho$ is given by $S_\rho(z) = \int (x - z)^{-1} \rho(x) dx$. 
For the $\textbf{HTMP}_{\gamma, \kappa}$ distribution $\rho_{\gamma, \kappa}$, the \textbf{Stieltjes transform} is given by a result from \citet{ismail1988linear}, as follows:
\begin{align*}
    S_{\rho_{\gamma, \kappa}}(z) = \frac{\kappa}{2 \gamma} \frac{U(\frac{\kappa}{2} + 1, 2 - \frac{\kappa}{2 \gamma} + \frac{\kappa}{2}; -\frac{\kappa z}{2 \gamma})}{U(\frac{\kappa}{2}, 1 - \frac{\kappa}{2 \gamma} + \frac{\kappa}{2}; -\frac{\kappa z}{2 \gamma})}, && z \in \complex \setminus \real.
\end{align*}

\citet[Proposition 5.3]{hodgkinson2025models} states that under certain assumptions, as $m, d \to \infty$ such that $\gamma(m) := \frac{\kappa(m)/2}{\alpha(m) - \kappa(m)/2 - 1} \to \gamma$, the limiting ESD of $\frac{\beta}{\alpha - \kappa/2 -1} \WW^T \WW$ converges to $\textbf{HTMP}_{\gamma, \kappa}$, for parameters $\beta, \alpha > 0$ ($\beta$ here is separate from the $\tilde{\beta}$ parameter of the beta-Laguerre ensemble). We denote the limiting ESD of $\WW^T \WW$ as $\Bar{\rho}_{\gamma, \kappa, \beta}$,
where
\begin{align}
    \bar{\rho}_{\gamma, \kappa, \beta}(x) = \beta \frac{1}{\Gamma(\kappa/2 + 1)\Gamma(\kappa/2\gamma)} \frac{(\beta x)^{\kappa/2\gamma - 1 - \kappa/2} e^{-\beta x}}{|U(\kappa/2, -\frac{\kappa}{2\gamma} + 1 + \frac{\kappa}{2}; -\beta x)|^2}, && x \geq 0. \label{eq:htmp} \tag{\textbf{HTMP}}
\end{align} 

It is immediate that the Stieltjes transform for \cref{eq:htmp} is given by
\begin{align}
    S_{\bar{\rho}_{\gamma, \kappa, \beta}}(z) = \beta \frac{U(\frac{\kappa}{2} + 1, 2 - \frac{\kappa}{2 \gamma} + \frac{\kappa}{2}; -\beta z)}{U(\frac{\kappa}{2}, 1 - \frac{\kappa}{2 \gamma} + \frac{\kappa}{2}; -\beta z)}. \label{eq:StieltjesWeights}
\end{align}

\subsection{Derivation of CDF for HTMP}
We now derive the cumulative distribution function (CDF) for \cref{eq:htmp}. 

\begin{lemma}[CDF HTMP]\label{lemma:htmp_cdf}
    The CDF for the density function $\bar{\rho}_{\gamma, \kappa, \beta}(x)$ is given by
    \begin{align*}
        F_{\gamma, \kappa, \beta}(x) = 
        \begin{cases}
            0, & x < 0, \\
            \frac{2}{\pi \kappa} \lim_{y \downarrow 0} \Arg_{\text{cont}}U\left( \frac{\kappa}{2}, 1 - \frac{\kappa}{2 \gamma} + \frac{\kappa}{2}; -\beta x - i \beta y\right), & x \geq 0,
        \end{cases}
    \end{align*}
    where $\Arg_{\text{cont}}$ denotes the continuous, unwrapped argument, and $\Arg_{\text{cont}}(0) = 0$. 
\end{lemma}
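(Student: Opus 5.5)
We recover the CDF from the Stieltjes transform \cref{eq:StieltjesWeights} by Stieltjes--Perron inversion, and we recognise the Stieltjes transform as a logarithmic derivative of $U$. Write $a = \kappa/2$, $b = 1 - \kappa/2\gamma + \kappa/2$, and $g(z) := U(a,b;-\beta z)$.

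\textbf{Step 1 (logarithmic derivative).} The standard identity $\frac{d}{dw}U(a,b;w) = -a\,U(a+1,b+1;w)$ gives
\begin{align*}
g'(z) = -\beta \cdot \bigl(-a\, U(a+1,b+1;-\beta z)\bigr) = a\beta\, U(a+1,b+1;-\beta z).
\end{align*}
Since $b+1 = 2 - \kappa/2\gamma + \kappa/2$, comparison with \cref{eq:StieltjesWeights} yields
\begin{align*}
S_{\bar\rho_{\gamma,\kappa,\beta}}(z) = \frac{1}{a}\frac{g'(z)}{g(z)} = \frac{2}{\kappa}\frac{d}{dz}\log g(z), \qquad z \in \mathbb{C}\setminus\mathbb{R}.
\end{align*}
Here $\log g$ is any continuous branch on the lower half-plane. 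Such a branch exists because $g$ has no zeros off the real axis; otherwise $S$ would have poles there, which is impossible for a Stieltjes transform of a probability measure.

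\textbf{Step 2 (inversion).} Stieltjes--Perron, applied from the lower half-plane, gives
\begin{align*}
F(x) - F(x_0) = \lim_{y\downarrow 0}\frac{-1}{\pi}\int_{x_0}^x \Im\, S(t - iy)\,dt,
\end{align*}
up to the sign convention, which we will fix so that the density is positive. At fixed $y>0$, integrate $S$ along the horizontal segment from $x_0 - iy$ to $x - iy$. Because $S$ is an exact derivative,
\begin{align*}
\int_{x_0}^{x} S(t-iy)\,dt = \frac{2}{\kappa}\bigl[\log g(x - iy) - \log g(x_0 - iy)\bigr].
\end{align*}
Taking imaginary parts, $\Im \log g = \Arg_{\rm cont}\, g$, the continuously unwrapped argument along the path. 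This produces the factor $\frac{2}{\pi\kappa}$ times a difference of arguments of $U(a,b;-\beta t - i\beta y)$, matching the stated formula.

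\textbf{Step 3 (normalisation).} Let $x_0 \downarrow 0$, or $x_0<0$. Since $\bar\rho$ is supported on $[0,\infty)$, $F(x_0)=0$ for $x_0<0$. We must show that the argument contribution at the left endpoint vanishes in the limit $y \downarrow 0$. For $t<0$, $-\beta t>0$, and $U(a,b;\cdot)$ is real and positive on the positive real axis because it is a Laplace-type integral with $a>0$. The continuous argument therefore tends to $0$ there. This justifies the convention $\Arg_{\rm cont}(0)=0$, with the branch chosen real at the left endpoint, and gives $F=0$ for $x<0$.

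\textbf{Main obstacle.} The delicate part is exchanging the limit $y\downarrow 0$ with the integral and controlling the unwrapped argument near $x=0$. There $U(a,b;w)$ may behave like $w^{1-b}$ when $b>1$, or have a logarithmic singularity when $b=1$. We would handle this as follows:
\begin{enumerate}
\item Weak convergence of $-\frac{1}{\pi}\Im S(\cdot - iy)\,dt$ to $\bar\rho$ holds, and $F$ is continuous because $\bar\rho$ is an absolutely continuous density. Hence the Perron limit holds pointwise for every $x$, not merely at continuity points.
\item Near $0$, we check with the small-$w$ asymptotics of $U$ (DLMF 13.2) that the argument of $U$ along the segment stays bounded. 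It then converges to a continuous function as $y\to 0$, so no winding is lost at the origin.
\item As a consistency check, we verify $F(\infty)=1$. This follows from $U(a,b;w)\sim w^{-a}$ as $|w|\to\infty$: the argument of $U$ along the real axis tends to $a\pi = \kappa\pi/2$, and multiplying by $\frac{2}{\pi\kappa}$ gives $1$.
\end{enumerate}
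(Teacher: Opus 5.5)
Your proposal is correct and follows essentially the paper's argument: Stieltjes--Perron inversion, $S=\frac{2}{\kappa}\,\partial_z\log U(a,b;-\beta z)$, reading $\Im\log U$ as the continuous argument, and normalising at the left endpoint. The paper simply takes $x_0=0$ there and uses $U(a,b;0)=\Gamma(1-b)/\Gamma(a-b+1)>0$, which is valid because $\gamma\in(0,1)$ forces $b<1$ and $a-b+1=\kappa/2\gamma>0$, so the $b\ge 1$ behaviour you flag as the main obstacle never occurs. The only slip is the half-plane: inverting from $t-iy$ puts $U$'s argument at $-\beta x+i\beta y$ with an overall minus sign, so either invert from $t+iy$ as the paper does or use $U(a,b;\bar w)=\overline{U(a,b;w)}$ to reach the stated $-\beta x-i\beta y$ form.
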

\begin{proof}
    Let $a = \kappa / 2$, and $b = 1 - \kappa/2 \gamma + \kappa/2$. Note that $S_{\bar{\rho}_{\gamma, \kappa, \beta}}(z) = a^{-1} \partial_z \log U(a,b,-\beta z)$. Now we can write the CDF through the Stieltjes inversion formula \citep[Eq. 3.2]{bai1999methodologies},
    \begin{align*}
        F_{\gamma, \kappa, \beta}(x) - F_{\gamma, \kappa, \beta}(0) = \frac{1}{\pi}\lim_{y \downarrow 0} \int_0^x \Im S_{\gamma, \kappa, \beta}(t + iy) dt = \frac{1}{\pi a}\lim_{y \downarrow 0} \int_0^x \Im [\partial_t \log U(a,b,-\beta (t + iy))] dt.
    \end{align*}
    As $\bar{\rho}_{\gamma, \kappa, \beta}(x)$ is supported on $x \geq 0$, $F_{\gamma, \kappa, \beta}(0) = 0$. We write $U(a,b;z)$ in its polar form, $U(a,b;z) = R(x) \exp(i\Theta(x))$, where we take $\Theta(x)$ as the continuous argument. We can then write 
    \begin{align*}
        \Im [\partial_x \log U(a,b,-\beta (x + iy))] = \partial_x \Im [\log U(a,b,-\beta (x + iy))] = \partial_x \Theta(-\beta (x + iy)).
    \end{align*}
    Hence,
    \begin{align*}
        F_{\gamma, \kappa, \beta}(x) = \frac{1}{\pi a}\lim_{y \downarrow 0} \Theta(-\beta x - i \beta y) - \frac{1}{\pi a}\lim_{y \downarrow 0} \Theta(0- i \beta y).
    \end{align*}
    Now, $\lim_{y \downarrow 0}U(a,b;-iy) = \Gamma(1 - b) / \Gamma(a - b + 1)$ \citep{DLMF}. For $a = \kappa / 2$, and $b = 1 - \kappa/2 \gamma + \kappa/2$, $1-b > 0$ and $a - b + 1 > 0$, hence, $\lim_{y \downarrow 0}U(a,b;-iy) > 0$, so $\lim_{y \downarrow 0} \Theta(0- i \beta y) = 0$. Thus,
    \begin{align*}
        F_{\gamma, \kappa, \beta}(x) = \frac{1}{\pi a}\lim_{y \downarrow 0} \Theta(-\beta x - i \beta y).
    \end{align*}
\end{proof}

Now, in \cref{appendix:experiment:fit_htmp} we discuss fitting the HTMP by minimizing Kolmogorov-Smirnov distance, i.e. by minimizing the pointwise supremum of the HTMP CDF to the empirical CDF. To do this efficiently, it will be useful to be able to compute the \textbf{principal} argument of $\lim_{y \downarrow 0} U\left( \kappa / 2, 1 - \kappa/2 \gamma + \kappa/2; -\beta x - i \beta y\right)$; then, an unwrapping process on a grid of inputs $x$ will return the continuous argument. Hence, we present the following result, which provides the real and imaginary parts of $\lim_{y \downarrow 0} U\left( \kappa / 2, 1 - \kappa/2 \gamma + \kappa/2; -\beta x - i \beta y\right)$; further, these parts only involve functions that can be easily computed numerically (i.e. they do not require complex arguments). However, note that this form only holds when $1 - \kappa/2 \gamma + \kappa/2$ is not an integer.

\begin{lemma}[Non-Integer Evaluation of HTMP CDF]\label{lemma:cdf_noninteger}
Let $M(a, b, x) = {}_1F_1(a,b,x)$. For $a > 0$, $b < 1$, $b \notin \integer$, and $x > 0$, 
\begin{align*}
    \mathcal{R}(a,b,x) &:= \frac{\Gamma(1-b)}{\Gamma(a - b + 1)} M(a,b,-x) - \frac{\cos(\pi b) \Gamma(b-1)}{\Gamma(a)} x^{1-b} M(a - b + 1, 2 - b , -x) \\
    \mathcal{I}(a,b,x) &:=  - \frac{\sin(\pi b) \Gamma(b-1)}{\Gamma(a)} x^{1-b} M(a - b + 1, 2 - b , -x).
\end{align*}
Then, 
\begin{align*}
    \lim_{y \downarrow 0} U\left(a,b; -x - i y\right) = \mathcal{R}(a,b,x) + i \mathcal{I}(a,b,x).
\end{align*}
\end{lemma}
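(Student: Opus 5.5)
We will use the connection formula for $U$ in terms of Kummer's function (DLMF 13.2.42). For $b \notin \integer$,
\begin{align*}
    U(a,b;z) = \frac{\Gamma(1-b)}{\Gamma(a-b+1)} M(a,b,z) + \frac{\Gamma(b-1)}{\Gamma(a)} z^{1-b} M(a-b+1, 2-b, z),
\end{align*}
where $z^{1-b}$ is the principal branch, defined on $\complex \setminus (-\infty, 0]$. We set $z = -x - iy$ with $x > 0$ and $y > 0$, and let $y \downarrow 0$. The point $z$ then approaches the negative real axis from below. Kummer's $M(a,b,\cdot)$ is entire in $z$ for $b \notin \{0,-1,-2,\dots\}$, and $M(a-b+1, 2-b, \cdot)$ is entire since $2-b > 1$. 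Hence $M(a,b,-x-iy) \to M(a,b,-x)$ and $M(a-b+1,2-b,-x-iy) \to M(a-b+1,2-b,-x)$. Because $a > 0$, $b < 1$ and $x > 0$, both limits are real. The Gamma prefactors are also real and finite: $\Gamma(1-b)$ with $1-b>0$, $\Gamma(a-b+1)$ with $a-b+1>1$, and $\Gamma(a)$ with $a>0$. Finally, $\Gamma(b-1)$ is finite because $b \notin \integer$.

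The only non-trivial limit is that of the power $z^{1-b}$. Write $-x - iy = r e^{i\theta}$ with principal argument $\theta \in (-\pi, \pi)$. As $y \downarrow 0$ from below the axis, $r \to x$ and $\theta \to -\pi$. Therefore
\begin{align*}
    (-x - iy)^{1-b} \to x^{1-b} e^{-i\pi(1-b)} = x^{1-b}\left(\cos(\pi(1-b)) - i \sin(\pi(1-b))\right) = -x^{1-b}\left(\cos(\pi b) + i\sin(\pi b)\right),
\end{align*}
using $\cos(\pi - \pi b) = -\cos(\pi b)$ and $\sin(\pi - \pi b) = \sin(\pi b)$.

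Substituting these limits into the connection formula gives the real part
\begin{align*}
    \frac{\Gamma(1-b)}{\Gamma(a-b+1)}M(a,b,-x) - \frac{\cos(\pi b)\Gamma(b-1)}{\Gamma(a)}x^{1-b}M(a-b+1,2-b,-x) = \mathcal{R}(a,b,x),
\end{align*}
and the imaginary part
\begin{align*}
    -\frac{\sin(\pi b)\Gamma(b-1)}{\Gamma(a)}x^{1-b}M(a-b+1,2-b,-x) = \mathcal{I}(a,b,x).
\end{align*}

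The main point requiring care is the branch and sign convention. We must check that the connection formula holds with the principal branch of $z^{1-b}$ on the cut plane, so that the approach from $\Im z < 0$ yields $e^{-i\pi(1-b)}$ rather than $e^{+i\pi(1-b)}$. Choosing the wrong branch would flip the sign of $\mathcal{I}$. Consistency with \cref{lemma:htmp_cdf} provides a check: the argument should increase from $0$ so that the CDF is nondecreasing. We would confirm this numerically for small $x$.
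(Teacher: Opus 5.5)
Your proposal is correct and follows the paper's proof essentially step for step: Kummer's connection formula (DLMF 13.2.42), entireness of $M$, and the principal-branch limit $(-x-iy)^{1-b}\to x^{1-b}e^{-i\pi(1-b)}$. Two small remarks: $a-b+1>1$ need not hold (you only have $a-b+1>a>0$, which suffices), and the branch concern is settled by DLMF's standing convention that 13.2.42 uses principal branches, which is consistent with $\mathcal{I}>0$ near $x=0$ because $\sin(\pi b)\Gamma(b-1)=-\pi/\Gamma(2-b)<0$, so no numerical check is needed.
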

\begin{proof}
    By Kummer's transformation \citep[Eq. 13.2.42]{DLMF}, we can write
    \begin{align*}
        U(a,b,z) = \frac{\Gamma(1-b)}{\Gamma(a - b + 1)} M(a,b,z) + \frac{\Gamma(b-1)}{\Gamma(a)} z^{1-b}M(a - b + 1,2 - b,z).
    \end{align*}
    Kummer's function $M(a,b,z)$ is entire \citep[\S 13.2]{DLMF}; further, letting $\text{Log}(z) = \log |z| + i \arg(z)$ denote the principal logarithm,
    \begin{align*}
        \lim_{y \downarrow 0} (-x - iy)^{1-b} &= \lim_{y \to 0} \exp((1-b) \text{Log}(-x-iy)) = \exp((1-b) (\log x - i \pi)) \\
        &= x^{1-b} e^{-i\pi(1-b)} = x^{1-b} (-\cos(\pi b) - i\sin(\pi b)).
    \end{align*}
    Therefore, 
    \begin{align*}
        \lim_{y \downarrow 0} U\left(a,b; -x - i y\right) &= \frac{\Gamma(1-b)}{\Gamma(a - b + 1)} M(a,b,-x) \\
        &- \frac{\cos(\pi b) \Gamma(b-1)}{\Gamma(a)} x^{1-b} M(a - b + 1, 2 - b , -x) \\
        &- i \frac{\sin(\pi b) \Gamma(b-1)}{\Gamma(a)} x^{1-b} M(a - b + 1, 2 - b , -x).
    \end{align*}
\end{proof}

\section{Derivation of limiting free energy}
\label[appendix]{sec:appendix:main_proofs}

In this section, we will derive the limiting expression of \cref{eq:bfe_weights}, as given in \cref{thm:main}. We will assume weights $\WW_l \in \real^{d_{l-1} \times d_{l}}$, that follow \cref{ass:main}. Further, as this assumption only gives weak convergence, we require the following integrability assumptions.

\begin{assumption}[Uniform Integrability]\label{ass:int_1}
    Assume $\sup_{d_{l}} d_{l}^{-1} \Tr\left(\left(\WW_l^T \WW_l\right)^{1+\epsilon}\right) < \infty$ for some $\epsilon > 0$.
\end{assumption}
\begin{assumption}[Logarithmic Integrability]\label{ass:int_2}
    Denote $\bar{\rho}_{d_{l}}$ as the ESD for $\WW_l^T \WW_l$. Then, assume
    \begin{align*}
        \lim_{\delta \to 0^+} ~\underset{d_{l} \to \infty}{\lim \sup}\int_{0}^{\delta} |\log x| d\bar{\rho}_{d_{l}}(x) =0.
    \end{align*}
\end{assumption}
Note that \cref{ass:int_1} is a standard moment assumption in RMT. For \cref{ass:int_2}, this controls the convergence of $\log \det(\WW^T \WW)$ when $\tau \to 0$. Both assumptions are valid for non-degenerate Wishart matrices. 

To derive the limiting expression for $\sF_{d_l}^{\lambda}$ as $m \to \infty$, we require the limiting expressions for $m^{-1} \Tr \WW^T \WW$ and $m^{-1} \log \det \WW^T \WW$ as $m \to \infty$. These terms relate to the first moment and the Shannon transform of \cref{eq:htmp}.

\subsection{Shannon transform}
The limiting log-determinant of $\WW^T \WW$, under our spectral assumptions, is given by the Shannon transform of \cref{eq:htmp}. We now derive this.

\begin{definition}\label{def:shannon}
    The Shannon transform of distribution $\rho$ is given by $\mathcal{V}_{\rho}(x) := \Ex_{\lambda \sim \rho} \left [ \log \left( \frac{\lambda}{x} + 1 \right) \right ]$, for $x > 0$ \citep{tulino2004random}.
\end{definition}

\begin{lemma}[Derivation of Shannon Transform for HTMP]\label{lemma:derive_shannon}
    For $\bar{\rho}_{\gamma, \kappa, \beta}$, the Shannon transform $\mathcal{V}_{\bar{\rho}}(x)$ is given by
    \begin{align*}
        \mathcal{V}_{\bar{\rho}}(x) &= - \frac{2}{\kappa}\log \bigg [ U\bigg(\kappa/2, 1 - \frac{\kappa}{2\gamma} + \frac{\kappa}{2}; \beta x\bigg) \bigg ] -\log (\beta x), \quad x > 0.
    \end{align*}
    Further, we have the following limit,
    \begin{align*}
        \lim_{x \to 0} \left(\mathcal{V}_{\bar{\rho}}(x) + \log x \right) = -\frac{2}{\kappa} \log \left[ \frac{\Gamma(\frac{\kappa(1 - \gamma)}{2\gamma})}{\Gamma(\frac{\kappa}{2\gamma})} \right] - \log \beta.
    \end{align*}
\end{lemma}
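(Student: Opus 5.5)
The plan is to differentiate the Shannon transform in $x$ and recognize the result as the Stieltjes transform evaluated on the negative real axis. From \cref{def:shannon}, for $x>0$,
\begin{align*}
    \frac{d}{dx}\mathcal{V}_{\bar\rho}(x) = -\frac{1}{x}\int \frac{\lambda}{\lambda+x}\,\bar\rho(\lambda)\,d\lambda = -\frac{1}{x} + S_{\bar\rho}(-x),
\end{align*}
using $\frac{\lambda}{\lambda+x} = 1 - \frac{x}{\lambda+x}$ and $S_{\bar\rho}(-x)=\int(\lambda+x)^{-1}\bar\rho(\lambda)d\lambda$. Differentiation under the integral is justified by dominated convergence, since the integrand is bounded by $1/x$ locally uniformly in $x>0$. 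By \cref{eq:StieltjesWeights} and the observation in the proof of \cref{lemma:htmp_cdf}, $S_{\bar\rho}(z) = \frac{2}{\kappa}\partial_z \log U(a,b;-\beta z)$ with $a=\kappa/2$, $b = 1-\kappa/2\gamma+\kappa/2$. The derivation is the DLMF identity $\partial_w U(a,b;w) = -aU(a+1,b+1;w)$, which gives $\partial_z\log U(a,b;-\beta z)=\beta a\,U(a+1,b+1;-\beta z)/U(a,b;-\beta z)$. Hence
\begin{align*}
    S_{\bar\rho}(-x) = -\frac{2}{\kappa}\frac{d}{dx}\log U(a,b;\beta x),
\end{align*}
which is well defined because $U(a,b;\cdot)>0$ on $(0,\infty)$ for $a>0$.

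Integrating, the candidate $G(x) := -\frac{2}{\kappa}\log U(a,b;\beta x) - \log(\beta x)$ has the same derivative as $\mathcal{V}_{\bar\rho}$, so the two differ by a constant. To fix the constant I would send $x\to\infty$. Since $\mathcal{V}_{\bar\rho}(x)\le \mu_1/x$, finiteness of the first moment (the density decays like $e^{-\beta x}$) gives $\mathcal{V}_{\bar\rho}(x)\to 0$. On the other side, the large-argument asymptotic $U(a,b;w)\sim w^{-a}$ gives $-\frac{2}{\kappa}\log U(a,b;\beta x) = \log(\beta x) + o(1)$, so $G(x)\to 0$ as well. The constant is therefore zero, which proves the first formula.

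For the limit, I would write $\mathcal{V}_{\bar\rho}(x)+\log x = -\frac{2}{\kappa}\log U(a,b;\beta x) - \log\beta$ and let $x\to0^+$. Here $1-b = \kappa/2\gamma - \kappa/2 = \frac{\kappa(1-\gamma)}{2\gamma} > 0$ because $\gamma<1$. The small-argument behaviour $U(a,b;w)\to \Gamma(1-b)/\Gamma(a-b+1)$ for $b<1$, already used in \cref{lemma:htmp_cdf}, then applies, with $a-b+1 = \kappa/2\gamma$. Substituting yields the stated limit.

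The main obstacle is careful handling of the two asymptotic regimes. Sending $x\to\infty$ needs $\kappa>0$ and a finite first moment. Sending $x\to0$ needs $b<1$, and one should note that the integer case $b=0$ is still covered by the same limit formula, since it follows from continuity of $U$ at $w=0$ when $\Re(1-b)>0$. The derivative identity and the sign change $z=-x$ are routine.
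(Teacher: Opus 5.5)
Your proposal is correct and takes essentially the same route as the paper. The paper quotes the identity $\mathcal{V}_{\bar\rho}(x)=\int_x^{\infty}\bigl(z^{-1}-S_{\bar\rho}(-z)\bigr)\,dz$, obtains the antiderivative $\log z+\frac{2}{\kappa}\log U(a,b;\beta z)$ from $\partial_z U(a,b;z)=-aU(a+1,b+1;z)$, evaluates it at infinity using $U(a,b;z)\sim z^{-a}$, and sends $x\to 0$ using $U(a,b;z)\to\Gamma(1-b)/\Gamma(a-b+1)$ for $b<1$. Your differentiate-then-fix-the-constant argument is the same computation, with the quoted identity derived directly, and you additionally make explicit the differentiation under the integral, the decay $\mathcal{V}_{\bar\rho}(x)\le\mu_1/x$, and the positivity of $U$ on $(0,\infty)$.
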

\begin{proof}
We have the following identity \citep{tulino2004random}:
\begin{align}\label{eq:shannon}
    \mathcal{V}_{\bar{\rho}}(x) = \int_0^{\infty} \log \left(1 + \frac{\lambda}{x} \right) \Bar{\rho}_{\gamma, \kappa, \beta}(d\lambda) = \int_x^{+\infty} \left (\frac{1}{z} - S_{\Bar{\rho}_{\gamma, \kappa, \beta}}(-z) \right) dz. 
\end{align}
To compute the Shannon transform, we require the anti-derivative of the integrand in the final expression in \cref{eq:shannon}, i.e., some function $\mathcal{D}(z)$ such that $\mathcal{D}'(z) = z^{-1} - S_{\Bar{\rho}_{\gamma, \kappa, \beta}}(-z)$. Note that $\partial_z U(a,b,z) = -aU(a+1, b+1, z)$ \citep{DLMF}. From this, we can write the Stieltjes transform as follows:
\begin{align*}
   \frac{U(\frac{\kappa}{2} + 1, 2 - \frac{\kappa}{2\gamma} + \frac{\kappa}{2};\Tilde{z})}{U(\frac{\kappa}{2}, 1 - \frac{\kappa}{2\gamma} + \frac{\kappa}{2};\Tilde{z})}  &= \frac{-(\frac{\kappa}{2})^{-1}\frac{d}{d\Tilde{z}} U(\frac{\kappa}{2}, 1 - \frac{\kappa}{2\gamma} + \frac{\kappa}{2};\Tilde{z})}{U(\frac{\kappa}{2}, 1 - \frac{\kappa}{2\gamma} + \frac{\kappa}{2};\Tilde{z})} \\
    &= -(\frac{\kappa}{2})^{-1} \frac{d}{d\Tilde{z}} \log \bigg ( U \bigg(\frac{\kappa}{2}, 1 - \frac{\kappa}{2\gamma} + \frac{\kappa}{2};\Tilde{z} \bigg) \bigg ),
\end{align*}
where $\Tilde{z} = \beta z$. Hence,
\begin{align*}
    S_{\Bar{\rho}_{\gamma, \kappa, \beta}}(-z) = - (\frac{\kappa}{2})^{-1} \frac{1}{\beta} \beta \frac{d}{dz} \log \bigg ( U(\kappa/2, 1 - \frac{\kappa}{2\gamma} + \frac{\kappa}{2};\beta z) \bigg ),
\end{align*}
and thus
\begin{align}
    \mathcal{D}(z) = \log(z) + \frac{2}{\kappa} \log \bigg [ U\bigg(\kappa/2, 1 - \frac{\kappa}{2\gamma} + \frac{\kappa}{2}; \beta z\bigg) \bigg ].
\end{align}
We now need to consider the asymptotics of $\mathcal{D}$, to compute the integral in \cref{eq:shannon}. Note from \citet{DLMF}, $|U(a,b,z)| \sim |z|^{-a}$ as $|z| \to \infty$ with $|\arg z | < 3\pi/2$. Hence,
\begin{align*}
    \lim_{z \to \infty}\mathcal{D}(z) &= \lim_{z \to \infty} \bigg( \log z +  \log \bigg [ U\bigg(\kappa/2, 1 - \frac{\kappa}{2\gamma} + \frac{\kappa}{2}; \beta z\bigg)^{\frac{2}{\kappa}} \bigg ] \bigg ) \\
    &= \lim_{z \to \infty} \log z + \lim_{z \to \infty} \log \bigg [ (\beta z)^{-\frac{\kappa}{2}\cdot\frac{2}{\kappa}}\bigg ] \\
    &= \lim_{z \to \infty} \left( \log z +  \log z^{-1} \right)  + \lim_{z \to \infty} \log\bigg [ \beta^{-1}\bigg ] \\
    &= -\log \beta.
\end{align*}
Hence, the Shannon transform of the limiting spectral density of $\WW^T \WW$ is given by
\begin{align*}
    \mathcal{V}(x) &= - \frac{2}{\kappa}\log \bigg [ U\bigg(\kappa/2, 1 - \frac{\kappa}{2\gamma} + \frac{\kappa}{2}; \beta x\bigg) \bigg ] -\log (\beta x),
\end{align*}
Finally, note that for $ b < 1$, $U(a,b,z) = \Gamma(1-b)/\Gamma(a - b + 1)$ as $z \to 0$ \citep{DLMF}. Hence,
\begin{align*}
    \lim_{x \to 0} \left(\mathcal{V}_{\bar{\rho}}(x) + \log x \right) = -\frac{2}{\kappa} \log \left[ \frac{\Gamma(\frac{\kappa(1 - \gamma)}{2\gamma})}{\Gamma(\frac{\kappa}{2\gamma})} \right] - \log \beta.
\end{align*}
\end{proof}

\subsection{First and second moments}
The limiting trace of $\WW^T \WW$, under our spectral assumptions, is given by the first moment of \cref{eq:htmp}. We derive this now (as well as the second moment).
\begin{lemma}[First and Second Moment]\label{lemma:derive_moments}
    Denoting $\mu_k$ as the $k^{\text{th}}$ moment of the $\textbf{HTMP}_{\gamma, \kappa, \beta}$ distribution, i.e., $\mu_k = \int_{\real} \lambda^k \bar{\rho}_{\gamma, \kappa, \beta}(d \lambda)$,
    \begin{align*}
        \mu_1 &= \frac{\kappa}{2 \gamma \beta}, \quad\mu_2 = \frac{\kappa}{2 \gamma \beta^2} \cdot \left(\frac{\kappa(1+\gamma)}{2\gamma} +1\right).
    \end{align*}
\end{lemma}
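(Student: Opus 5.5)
We read both moments off the large-$z$ expansion of the Stieltjes transform in \cref{eq:StieltjesWeights}. The key relation is
\begin{align*}
S_{\bar\rho}(z) = -\sum_{k\ge 0} \mu_k z^{-k-1}, \qquad |z|\to\infty .
\end{align*}
By the computation in the proof of \cref{lemma:derive_shannon}, $S_{\bar\rho}(-z) = -\frac{2}{\kappa}\frac{d}{dz}\log U(a,b;\beta z)$, where $a=\kappa/2$ and $b = 1-\frac{\kappa}{2\gamma}+\frac{\kappa}{2}$. So it suffices to expand $\log U(a,b;w)$ for large $w=\beta z$.

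For this we use the standard asymptotic series (DLMF 13.7.3), valid for $|\arg w|<3\pi/2$:
\begin{align*}
U(a,b;w) \sim w^{-a}\sum_{s\ge0}\frac{(a)_s(a-b+1)_s}{s!}(-w)^{-s}.
\end{align*}
With $c := a-b+1 = \kappa/(2\gamma)$, this gives
\begin{align*}
\log U = -a\log w - \frac{ac}{w} + \frac{1}{w^2}\Big(\frac{a(a+1)c(c+1)}{2} - \frac{a^2c^2}{2}\Big) + O(w^{-3}).
\end{align*}
Differentiating in $z$, multiplying by $-2/\kappa = -1/a$, and comparing with $S(-z) = \sum_k \mu_k(-1)^k z^{-k-1}$, we expect:
\begin{itemize}
\item the $z^{-1}$ coefficient to give $\mu_0=1$, a useful consistency check;
\item the $z^{-2}$ coefficient to give $\mu_1 = c/\beta = \kappa/(2\gamma\beta)$;
\item the $z^{-3}$ coefficient to give $\mu_2 = \frac{c}{\beta^2}(a+c+1)$, using $(a+1)(c+1)-ac = a+c+1$.
\end{itemize}
Substituting $a+c+1 = \frac{\kappa}{2}+\frac{\kappa}{2\gamma}+1 = \frac{\kappa(1+\gamma)}{2\gamma}+1$ yields the claimed $\mu_2$.

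The main obstacle is justifying that the coefficients of the asymptotic expansion of $S$ really are the moments. This is not automatic, since the measure has unbounded support and the expansion is only asymptotic. Our plan is to take $z=-t$ with $t\to+\infty$, so that $z$ lies off the support. We then use the exact identity
\begin{align*}
\frac{1}{x+t} = \frac1t - \frac{x}{t^2} + \frac{x^2}{t^2(x+t)},
\end{align*}
and its next-order analogue. Because $\bar\rho$ has an exponential tail $e^{-\beta x}$ times a power, all moments are finite. Dominated convergence then shows that $S(-t)$ admits an asymptotic expansion whose coefficients are $(-1)^k\mu_k$. By uniqueness of asymptotic expansions, these must match the coefficients obtained from the $U$-series, which is valid on the positive real axis.

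As an alternative cross-check, one can compute $\mu_1$ directly from the $\beta$-Laguerre density \cref{eq:betaLaguerre}, using the scaling from \citet[Proposition 5.3]{hodgkinson2025models}. We do not plan to rely on it.
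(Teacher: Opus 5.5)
Your proposal is correct and follows essentially the paper's route: the paper also reads $\mu_1,\mu_2$ off the expansion $S(z) = -z^{-1} - \mu_1 z^{-2} - \mu_2 z^{-3} + o(|z|^{-3})$ as $z\to-\infty$. It cites Mingo--Speicher for that expansion, where you supply the dominated-convergence argument yourself, and it uses the DLMF series for $U$ through its auxiliary lemma on $U(a+1,b+1;z)/U(a,b;z)$, whose coefficients $-(a-b+1)$ and $(a-b+1)(2a-b+2)$ are your $-c$ and $c(a+c+1)$. The one difference is that you differentiate the $\log U$ expansion term by term; this is legitimate because $U$ is analytic in the sector $|\arg w|<3\pi/2$, and expanding the ratio directly, as the paper does, avoids the issue altogether.
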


\begin{proof}[Proof of \cref{lemma:derive_moments}]
For probability measures $\nu$ with finite first and second moments, $m_1$ and $m_2$, we have the following expansion of the Stieltjes transform, for argument $z \to -\infty$ \citep{mingo2017free},
\begin{align*}
    S_{\nu}(z) = - z^{-1} - m_1 z^{-2} - m_2 z^{-3} + o(|z|^{-3}).
\end{align*}
Hence, for \cref{eq:htmp}, the first and second moments $m_1$ and $m_2$ of $\bar{\rho}_{\gamma, \kappa, \beta}$ are given by
\begin{align*}
    \mu_1 = \lim_{z \to -\infty} -z^2 S_{\bar{\rho}}(z) - z, && \mu_2 = \lim_{z \to -\infty} -z^3 S_{\bar{\rho}}(z) - z^2 - z\mu_1.
\end{align*}
An application of \cref{lemma:TricomiAsymptotics} then gives the result.
\end{proof}

\subsection{Proof of Theorem 2.}\label[appendix]{sec:appendix:proof_main}
We can now derive the limiting form of \cref{eq:bfe_weights}, by combining \cref{lemma:derive_moments} and \cref{lemma:derive_shannon}; the result is stated in \cref{thm:main}.

\begin{proof}[Proof of \cref{thm:main}]\label{proof:main}
    From \cref{ass:main}, \cref{ass:int_1} and \cref{ass:int_2}, coupled with \cref{lemma:derive_moments} and \cref{lemma:derive_shannon}, we have that
    \begin{align*}
        \lim_{d_{l-1}, d_l \to \infty} \frac{1}{d_{l}} \Tr (\WW^T \WW) &\overset{\mathbb{P}}{\to} \mu_1 + \tau \\
        \lim_{d_{l-1}, d_l \to \infty} \frac{1}{d_{l}} \log \det \WW^T \WW &\overset{\mathbb{P}}{\to} \sV(\tau) + \log \tau.
    \end{align*}
\end{proof}

\section{Proofs for convexity and optimality of free energy}
Having derived the limiting deterministic quantity for \cref{eq:bfe_weights}, under \cref{ass:main}, \cref{ass:int_1}, \cref{ass:int_2}, we now provide various results and proofs on the behaviour of $\sF_{\infty}^{\lambda}$ with respect to $\kappa$ and $\beta$. This section will culminate with the proof of \cref{theorem:first_moment_shannon}. 

\subsection{Optimality of free energy under ratio}
Here, we give various results for the case when we are either controlling $h(\kappa) = \kappa / \beta$, for some given $\kappa$, or optimizing $\kappa$ after $h(\kappa)$ has attained its optima. In the general case ($\tau > 0$), we are able to show that the Shannon transform is strictly increasing in $h$; trivially, the first moment is also strictly increasing in $h$.

\begin{lemma}[Increasing Shannon Transform]\label{lemma:increasing_shannon}
    For fixed $\kappa > 0$, the Shannon transform for \cref{eq:htmp}, $\sV_{\bar{\rho}}$, is strictly increasing in $h$.
\end{lemma}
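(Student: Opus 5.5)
Fix $\kappa>0$ and $x>0$. Write $\beta=\kappa/h$, so varying $h$ at fixed $\kappa$ is the same as varying $\beta$ in the opposite direction. The Tricomi parameters $a=\kappa/2$ and $b=1-\kappa/2\gamma+\kappa/2$ depend only on $\kappa$ (and $\gamma$), so they stay fixed. The cleanest route avoids the closed form in \cref{lemma:derive_shannon} and uses a scaling argument on the distribution.

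From \cref{eq:htmp}, $\bar{\rho}_{\gamma,\kappa,\beta}(x)=\beta\, g(\beta x)$ for a fixed density $g=\bar\rho_{\gamma,\kappa,1}$. Hence if $\lambda\sim\bar\rho_{\gamma,\kappa,\beta}$, then $\lambda \overset{d}{=} \xi/\beta = h\xi/\kappa$ with $\xi\sim g$ independent of $h$. By \cref{def:shannon},
\begin{align*}
\mathcal{V}_{\bar\rho}(x)=\Ex_{\xi\sim g}\left[\log\left(1+\frac{h\,\xi}{\kappa x}\right)\right].
\end{align*}
For each fixed $\xi>0$, the integrand is strictly increasing in $h$. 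Since $g$ is supported on $(0,\infty)$ with positive density, $\xi>0$ almost surely. So for $h_1<h_2$ the difference of the integrands is strictly positive almost surely, and its expectation is strictly positive.

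This requires the expectations to be finite. Finiteness follows from $\log(1+t)\le t$ together with the finite first moment $\mu_1=\kappa/(2\gamma\beta)$ from \cref{lemma:derive_moments}. For differentiability one can differentiate under the integral, since
\begin{align*}
\partial_h \log(1+h\xi/\kappa x)=\frac{\xi/\kappa x}{1+h\xi/\kappa x}\le \xi/(\kappa x),
\end{align*}
which is integrable by dominated convergence. This gives $\partial_h\mathcal{V}>0$ directly.

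As a cross-check, the same conclusion can be written via the closed form. Putting $z=\beta x$, we get $\mathcal{V}(x)=-\frac{2}{\kappa}\log U(a,b;z)-\log z$, and $z$ is decreasing in $h$. Monotonicity in $h$ then amounts to showing that $\frac{d}{dz}\bigl[-\frac{2}{\kappa}\log U(a,b;z)-\log z\bigr]<0$. Using $\partial_z U(a,b;z)=-aU(a+1,b+1;z)$, this condition reads $z\,U(a+1,b+1;z)/U(a,b;z)<1$. That inequality is exactly $-zS(-z)<1$ for a positive measure in Stieltjes form, so it reduces to the same positivity.

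The main obstacle is only a technical point: justifying that $\xi>0$ almost surely and that the expectation is finite. Both follow from the HTMP density being supported on $(0,\infty)$ and from \cref{lemma:derive_moments}. The same scaling argument also gives the trivial statement that $\mu_1=h/(2\gamma)$ is increasing in $h$.
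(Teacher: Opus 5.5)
Your proof is correct, but your main argument takes a different route from the paper's.

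\textbf{The paper's route.} The paper starts from the closed form of the Shannon transform, $\mathcal{V}(x) = -\frac{2}{\kappa}\log U(a,b;\beta x) - \log(\beta x)$. It applies the chain rule $\partial_h \mathcal{V} = -\frac{\beta^2}{\kappa}\partial_\beta \mathcal{V}$ and differentiates using $\partial_z U(a,b;z) = -aU(a+1,b+1;z)$. It then recognises the Tricomi ratio as the Stieltjes transform, which gives $\partial_h\mathcal{V} = -\frac{\beta}{\kappa}\bigl(xS_{\bar\rho}(-x)-1\bigr) = \frac{\beta}{\kappa}\int \frac{\lambda}{\lambda+x}\,\bar\rho(d\lambda) > 0$. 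This is essentially your ``cross-check''.

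\textbf{Your route.} You observe that $\beta$ is a pure scale parameter of the density. Hence $\mathcal{V}(x)=\mathbb{E}\bigl[\log(1+h\xi/(\kappa x))\bigr]$, where the law of $\xi$ does not depend on $h$, and monotonicity holds pointwise under the expectation. This buys several things:
\begin{itemize}
\item It needs neither the closed-form Shannon transform (whose derivation relies on the antiderivative and the limit at infinity) nor any Tricomi identities.
\item It gives strict monotonicity directly, by comparing integrands, with no derivative needed.
\item It explains structurally why the result holds: any scale family has a Shannon transform increasing in the scale.
\item Differentiating under the integral recovers exactly the paper's derivative, since $\mathbb{E}\bigl[\frac{\xi/(\kappa x)}{1+h\xi/(\kappa x)}\bigr] = \frac{1}{h}\int\frac{\lambda}{\lambda+x}\,\bar\rho(d\lambda)$ and $1/h=\beta/\kappa$.
\end{itemize}
The paper's route, in turn, expresses the derivative as an explicit, directly computable Tricomi ratio.

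\textbf{A sign slip in your cross-check.} The paper's convention is $S_\rho(z)=\int (x-z)^{-1}\rho(dx)$. Taking $S$ as the Stieltjes transform of the $\beta=1$ density, $S(-z)=U(a+1,b+1;z)/U(a,b;z)$. So the condition is $zS(-z)<1$, not $-zS(-z)<1$; the latter holds trivially because $S(-z)>0$ and proves nothing. The correct condition is equivalent to $\int \frac{\lambda}{\lambda+z}\,\bar\rho_{\gamma,\kappa,1}(\lambda)\,d\lambda > 0$, which is the positivity you intend.

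\textbf{A wording point.} The dominating function $\xi/(\kappa x)$ is integrable because the first moment is finite; dominated convergence is what then licenses the exchange of derivative and integral.
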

\begin{proof}
    Under the chain rule, 
    \begin{align*}
        \frac{\partial \sV}{\partial h} &= \frac{\partial \sV}{\partial \beta} \frac{\partial \beta}{\partial h} = - \frac{\beta^2}{\kappa} \frac{\partial \sV}{\partial \beta} \\
        &= -\frac{\beta^2}{\kappa}\left( - \frac{2}{\kappa} \frac{\partial}{\partial \beta}\log \bigg [ U\bigg(\kappa/2, 1 - \frac{\kappa}{2\gamma} + \frac{\kappa}{2}; \beta x\bigg) \bigg ] -\frac{1}{\beta} \right) \\
        &= -\frac{\beta^2}{\kappa}\left( - \frac{2}{\kappa}\left( -\frac{\kappa}{2} \frac{x}{\beta} \beta \frac{U\left(\kappa/2 + 1, 2 - \frac{\kappa}{2\gamma} + \frac{\kappa}{2}; \beta x\right)}{U\left(\kappa/2, 1 - \frac{\kappa}{2\gamma} + \frac{\kappa}{2}; \beta x\right)}\right) - \frac{1}{\beta}\right) \\
        &= -\frac{\beta}{\kappa} \left( x S_{\bar{\rho}_{\gamma, \kappa, \beta}}(-x) - 1 \right).
    \end{align*}
    Now, as $S_{\bar{\rho}_{\gamma, \kappa, \beta}}$ is the Stieltjes transform of a normalized probability distribution (\cref{eq:htmp}), then for $x > 0$, $x S_{\Bar{\rho}_{\gamma, \kappa, \beta}}(-x) - 1 = \int x(\lambda + |x|)^{-1} d\rho(\lambda) - \int d\rho(\lambda) = - \int \lambda(\lambda + |x|)^{-1} d\rho(\lambda) < 0$ for non-degenerate $\rho$. Hence, 
    \begin{align*}
        -\frac{\beta}{\kappa} \left( x S_{\Bar{\rho}_{\gamma, \kappa, \beta}}(-x) - 1 \right) > 0,
    \end{align*}
    and $\sV_{\bar{\rho}}(x)$ is strictly increasing in $h(\kappa) = \kappa / \beta$.
\end{proof}

If we take minimum eigenvalue $\tau \to 0$, then we can also show that for fixed $\kappa > 0$, an optimal ratio $h^*$ exists. To show this, we show strict convexity and coercivity of $\sF_{\infty}^{\lambda}$, and then derive the optimal~$h$.
\begin{lemma}[Strict Convexity, Coercivity and Optimality of $\sF_{\infty}^{\lambda}$]\label{lemma:opt_beta}
    For fixed $\kappa > 0$, and $\tau \to 0$, then $\sF_{\infty}^{\lambda}$ is strictly convex and coercive in $h$. Further, $\sF_{\infty}^{\lambda}$ is minimized at $h^*(\kappa) = 2\gamma/\lambda$; equivalently $\beta^*(\kappa) = \kappa\lambda / 2\gamma$.
\end{lemma}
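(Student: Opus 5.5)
The plan is to reduce $\sF_{\infty}^{\lambda}$ from \cref{thm:main}, in the limit $\tau \to 0$, to an explicit elementary function of $h = \kappa/\beta$ with $\kappa$ held fixed, and then check convexity, coercivity and the stationary point directly.

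\textbf{Step 1: the $\tau \to 0$ limit.} Set $a = \kappa/2$ and $b = 1 - \frac{\kappa}{2\gamma} + \frac{\kappa}{2}$. Since $\gamma \in (0,1)$, we have $b < 1$ and $a - b + 1 = \frac{\kappa}{2\gamma} > 0$. The small-argument behaviour of $U$ used in \cref{lemma:derive_shannon} then gives
\[
U(a,b;\beta\tau) \to \frac{\Gamma(1-b)}{\Gamma(a-b+1)} = \frac{\Gamma\!\left(\frac{\kappa(1-\gamma)}{2\gamma}\right)}{\Gamma\!\left(\frac{\kappa}{2\gamma}\right)} =: C(\kappa) \in (0,\infty).
\]
Equivalently, I can use the limit of $\sV_{\bar\rho}(x) + \log x$ from \cref{lemma:derive_shannon} together with $\mu_1 = \kappa/(2\gamma\beta)$ from \cref{lemma:derive_moments}. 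Either way,
\[
\lim_{\tau\to 0}\sF_{\infty}^{\lambda} = \frac{\lambda \kappa}{4\gamma\beta} + \frac{1}{\kappa}\log C(\kappa) + \frac{1}{2}\log\frac{2\pi\beta}{\lambda}.
\]

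\textbf{Step 2: rewrite in $h$.} Substituting $\beta = \kappa/h$, the log term splits as $\frac12\log\beta = \frac12\log\kappa - \frac12\log h$. This yields
\[
G(h) := \frac{\lambda}{4\gamma}\,h - \frac{1}{2}\log h + K(\kappa,\lambda),
\]
where $K(\kappa,\lambda) = \frac{1}{\kappa}\log C(\kappa) + \frac12\log\frac{2\pi\kappa}{\lambda}$ does not depend on $h$. This makes explicit the trade-off noted after \cref{theorem:first_moment_shannon}: the trace term increases linearly in $h$, while the log-determinant term contributes $-\frac12\log h$.

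\textbf{Step 3: convexity, coercivity and the minimiser.} On $h > 0$ we have $G''(h) = \frac{1}{2h^2} > 0$, so $G$ is strictly convex. It is also coercive on $(0,\infty)$: $G(h) \to \infty$ as $h \to 0^+$ because of $-\frac12\log h$, and as $h \to \infty$ because the linear term dominates (using $\lambda, \gamma > 0$). Solving $G'(h) = \frac{\lambda}{4\gamma} - \frac{1}{2h} = 0$ gives the unique minimiser $h^* = 2\gamma/\lambda$, that is, $\beta^*(\kappa) = \kappa/h^* = \kappa\lambda/(2\gamma)$.

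\textbf{Main obstacle.} The only delicate point is Step 1. We must justify interchanging the $\tau \to 0$ limit with the expression from \cref{thm:main}, and confirm that $b < 1$ and $a - b + 1 > 0$ so that $C(\kappa)$ is finite and positive. The condition $\gamma < 1$ in \cref{ass:main} is exactly what guarantees this. The remaining steps are elementary calculus.
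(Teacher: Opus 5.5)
Your proposal is correct and follows the paper's proof. Like the paper, you take the $\tau \to 0$ limit of the Tricomi term via $U(a,b;z)\to\Gamma(1-b)/\Gamma(a-b+1)$, write the result as $\frac{\lambda}{4\gamma}h - \frac{1}{2}\log h$ plus an $h$-independent constant, and read off $\partial_{hh} = 1/(2h^2) > 0$, coercivity at both ends, and the stationary point $h^* = 2\gamma/\lambda$. The limit interchange you flag as the main obstacle is not actually needed, because the lemma concerns the pointwise $\tau\to 0$ limit of the deterministic expression in \cref{thm:main}; Step 1 only requires $b<1$ and $a-b+1>0$, which $\gamma<1$ guarantees.
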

\begin{proof}
    From \cref{lemma:derive_shannon}, 
    \begin{align}
    \lim_{\tau \to 0} \sF_{\infty}^{\lambda} = \frac{\lambda}{2} \frac{\kappa}{2 \gamma \beta} + \frac{1}{\kappa} \log \left[ \frac{\Gamma(\frac{\kappa(1 - \gamma)}{2\gamma})}{\Gamma(\frac{\kappa}{2\gamma})} \right]  + \frac{1}{2} \log \frac{2 \pi \beta}{\lambda}.
\end{align}
    Strict convexity is immediate by seeing that
    \begin{align*}
        \partial_{hh} \lim_{\tau \to 0} \sF_{\infty}^{\lambda} = \frac{1}{2h^2} > 0.
    \end{align*}
    To see coercivity, note that for $h \to 0$, $\lim_{\tau \to 0}\sF_{\infty}^{\lambda} \sim -\frac{1}{2}\log h$, i.e., $\lim_{h\to 0}\lim_{\tau \to 0}\sF_{\infty}^{\lambda} \to \infty$. Further, for $h \to \infty$, $\lim_{\tau \to 0}\sF_{\infty}^{\lambda} \sim (\lambda / 4\gamma)h - \frac{1}{2}\log h \sim (\lambda / 4\gamma)h$. Hence, $\lim_{h\to \infty}\lim_{\tau \to 0}\sF_{\infty}^{\lambda} \to \infty$. Thus, $\lim_{\tau \to 0}\sF_{\infty}^{\lambda}$ is coercive in $h$. As $\lim_{\tau \to 0}\sF_{\infty}^{\lambda}$ is strictly convex and coercive, a unique minimum $h^{*}$ exists. Then,
    \begin{align*}
        \partial_h \lim_{\tau \to 0} \sF_{\infty}^{\lambda} |_{h = h^{*}} = \frac{\lambda}{4\gamma} - \frac{1}{2h^{*}} = 0, 
    \end{align*}
    giving $h^*(\kappa) = 2\gamma/\lambda$, or equivalently $\beta^*(\kappa) = \kappa\lambda / 2\gamma$.
\end{proof}

Thus, for a fixed $\kappa$, an optimal $\beta$ parameter exists (given as a ratio $h^* = \kappa / \beta^*$). We are now interested in seeing how $\lim_{\tau \to 0}\sF_{\infty}^{\lambda}$ behaves in $\kappa$, for $\beta$ that follows the optimal ratio manifold. As we will see, the behavior is surprisingly simple.

\begin{lemma}\label{lemma:opt_kappa}
    At $\beta^*(\kappa) = \lambda \kappa / 2\gamma$, $\lim_{\tau \to 0}\sF_{\infty}^{\lambda}$ is strictly decreasing in $\kappa$. 
\end{lemma}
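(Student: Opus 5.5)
The plan is to substitute the optimal-ratio manifold directly into the closed-form $\tau\to 0$ limit from the proof of \cref{lemma:opt_beta}. This reduces $\lim_{\tau\to0}\sF_\infty^\lambda$ to an explicit one-variable function of $\kappa$, whose derivative we then sign using a standard property of the trigamma function $\psi'$, where $\psi = \Gamma'/\Gamma$.

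\textbf{Step 1 (reduce to one variable).} With $\beta=\beta^*(\kappa)=\lambda\kappa/2\gamma$, the trace term becomes $\frac{\lambda}{2}\cdot\frac{\kappa}{2\gamma\beta^*}=\tfrac12$. The entropy term becomes $\tfrac12\log(2\pi\beta^*/\lambda)=\tfrac12\log(\pi\kappa/\gamma)$. Writing $a=\kappa(1-\gamma)/2\gamma$ and $b=\kappa/2\gamma$, which are both positive since $\gamma\in(0,1)$ and satisfy $b-a=\kappa/2$, we obtain
\begin{align*}
G(\kappa):=\lim_{\tau\to0}\sF_\infty^\lambda\big|_{\beta=\beta^*(\kappa)}=\tfrac12+\tfrac12\log\frac{\pi\kappa}{\gamma}+\frac{1}{\kappa}\bigl(\log\Gamma(a)-\log\Gamma(b)\bigr).
\end{align*}

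\textbf{Step 2 (differentiate).} Using $\partial_\kappa a = a/\kappa$ and $\partial_\kappa b = b/\kappa$, a short computation gives
\begin{align*}
\kappa^2 G'(\kappa)=\frac{\kappa}{2}-\bigl(\log\Gamma(a)-\log\Gamma(b)\bigr)+a\psi(a)-b\psi(b).
\end{align*}
Introduce $g(x):=x\psi(x)-\log\Gamma(x)$, which satisfies $g'(x)=x\psi'(x)$. Since $\kappa/2=b-a$, the expression above equals
\begin{align*}
\kappa^2 G'(\kappa)=(b-a)-\bigl(g(b)-g(a)\bigr)=\int_a^b\bigl(1-x\psi'(x)\bigr)\,dx .
\end{align*}

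\textbf{Step 3 (sign).} It suffices to show $x\psi'(x)>1$ for all $x>0$. Use the integral representation
\begin{align*}
\psi'(x)=\int_0^\infty \frac{t\,e^{-xt}}{1-e^{-t}}\,dt .
\end{align*}
Since $t/(1-e^{-t})>1$ for $t>0$, this gives $\psi'(x)>\int_0^\infty e^{-xt}\,dt=1/x$. Hence the integrand $1-x\psi'(x)$ is strictly negative on $(a,b)$. As $a<b$, we conclude $G'(\kappa)<0$ for every $\kappa>0$, which is the claim.

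The only step needing care is the trigamma inequality in Step 3, which is the crux of the argument; the rest is bookkeeping. One must also confirm that $a>0$, which requires $\gamma<1$ as in \cref{ass:main}, so that $\Gamma(a)$ and the $\tau\to0$ limit from \cref{lemma:derive_shannon} are well defined.
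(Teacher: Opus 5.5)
Your proposal is correct and follows the paper's proof essentially line for line. Your $g(x)-x$ is exactly the paper's $Q(x)=x\psi(x)-\log\Gamma(x)-x$, so writing $\kappa^2 G'(\kappa)=\int_a^b(1-x\psi'(x))\,dx<0$ is the same as the paper's $Q(a)-Q(b)<0$ from $Q'>0$; the only difference is that you prove the key bound $\psi'(x)>1/x$ yourself from the integral representation $\psi'(x)=\int_0^\infty t e^{-xt}/(1-e^{-t})\,dt$, whereas the paper cites the sharper inequality $\psi'(x)>1/x+1/(2x^2)$ from the polygamma literature, which makes your argument self-contained.
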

\begin{proof}
    We take the derivative of $\lim_{\tau \to 0}\sF_{\infty}^{\lambda}$ with respect to $\kappa$:
    \begin{align*}
    \kappa^2\frac{\partial}{\partial \kappa} \lim_{\tau \to 0}\sF_{\infty}^{\lambda} &= \frac{\kappa(1-\gamma)}{2\gamma} \psi\left(\frac{\kappa(1-\gamma)}{2\gamma} \right) - \frac{\kappa}{2\gamma} \psi\left(\frac{\kappa}{2\gamma} \right)\\
    &-\log \Gamma\left(\kappa (1-\gamma)/2\gamma\right) + \log \Gamma\left(\kappa/2\gamma\right)
    - \frac{\kappa(1-\gamma)}{2\gamma} +\frac{\kappa}{2\gamma}.
    \end{align*}
    Define $Q(x) := x\psi(x) - \log \Gamma(x) - x$. Then, defining $a = \kappa(1-\gamma)/2\gamma, b = \kappa/2\gamma$, where $a < b$, we have that
    \begin{align*}
        \kappa^2\frac{\partial}{\partial \kappa} \lim_{\tau \to 0}\sF_{\infty}^{\lambda} = Q(a) - Q(b).
    \end{align*}
    Hence, to show that $\frac{\partial}{\partial \kappa} \lim_{\tau \to 0}\sF_{\infty}^{\lambda} < 0$, we only need to show that $Q$ is strictly increasing. We take the derivative
    \begin{align*}
        Q'(x) = x\psi_1(x) - 1.
    \end{align*}
    From \citet{qi2009sharp}, we get that $\psi_1(x) > 1/(2x^2) + 1/x > 1/x$ for $x > 0$. Hence, $Q'(x) > 0$, and we have the result. 
\end{proof}

Interestingly, this tells us how to reach the infimum of $\lim_{\tau \to 0}\sF_{\infty}^{\lambda}$. We present the optimal joint infimum in the following corollary.

\begin{corollary}\label{corr:infinite_ratio}
    The joint infimum, $\inf_{\kappa, \beta > 0} \lim_{\tau \to 0}\sF_{\infty}^{\lambda}$ is reached for $\beta = \beta^*(\kappa)$ and $\kappa \to \infty$, i.e., $\kappa, \beta \to \infty$, such that $\kappa / \beta = 2\gamma/\lambda$.
\end{corollary}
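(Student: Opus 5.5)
Write $G(\kappa,\beta) := \lim_{\tau\to 0}\sF_{\infty}^{\lambda}$. I would split the joint minimisation over $(\kappa,\beta)\in(0,\infty)^2$ into an inner minimisation over $\beta$ at fixed $\kappa$, followed by an outer minimisation over $\kappa$. This is exactly what \cref{lemma:opt_beta} and \cref{lemma:opt_kappa} supply.

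\textbf{Step 1 (reduction to the optimal manifold).} Fix $\kappa>0$. The map $\beta\mapsto h=\kappa/\beta$ is a bijection of $(0,\infty)$. By \cref{lemma:opt_beta}, $G(\kappa,\cdot)$ has a unique minimiser $\beta^*(\kappa)=\lambda\kappa/2\gamma$. Hence
\begin{align*}
G(\kappa,\beta)\;\ge\; g(\kappa):=G(\kappa,\beta^*(\kappa))\quad\text{for all }\beta>0,
\end{align*}
and therefore $\inf_{\kappa,\beta}G=\inf_{\kappa>0}g(\kappa)$.

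\textbf{Step 2 (monotonicity in $\kappa$).} By \cref{lemma:opt_kappa}, $g$ is strictly decreasing on $(0,\infty)$. So $\inf_{\kappa>0} g(\kappa)=\lim_{\kappa\to\infty}g(\kappa)$, and no finite $\kappa$ attains it. Every minimising sequence must therefore send $\kappa\to\infty$. It must also keep $\beta$ near $\beta^*(\kappa)$; strictly, this last point needs the strict inner gap. In either case the infimum is realised along $\beta=\beta^*(\kappa)$ with $\kappa\to\infty$, and then $\beta=\lambda\kappa/2\gamma\to\infty$ with fixed ratio $\kappa/\beta=2\gamma/\lambda$.

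\textbf{Step 3 (finiteness of the limit).} The ``reached'' statement is only meaningful if the limit is finite, so I would compute it explicitly. Substituting $\beta^*(\kappa)$ into $G$ gives
\begin{align*}
g(\kappa)=\frac{1}{2}+\frac{1}{\kappa}\log\frac{\Gamma(\kappa(1-\gamma)/2\gamma)}{\Gamma(\kappa/2\gamma)}+\frac12\log\frac{\pi\kappa}{\gamma}.
\end{align*}
Stirling's formula gives $\log\Gamma(x)=x\log x-x-\tfrac12\log x+O(1)$. Set $b=\kappa/2\gamma$ and $a=(1-\gamma)b$. The ratio term then equals
\begin{align*}
\frac1\kappa\bigl[a\log a-b\log b+(b-a)+O(1)\bigr]=\frac{1}{2\gamma}\bigl[(1-\gamma)\log(1-\gamma)+\gamma\bigr]-\frac12\log\frac{\kappa}{2\gamma}+o(1).
\end{align*}
Its $-\tfrac12\log\kappa$ part cancels the $+\tfrac12\log\kappa$ coming from $\tfrac12\log(\pi\kappa/\gamma)$. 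So $g(\kappa)$ converges to a finite constant. This is consistent with the remark that $\mathbf{HTMP}\to\mathbf{MP}_\gamma$ along $\kappa/\beta\to 2\gamma$, here rescaled by $1/\lambda$.

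The main obstacle is this asymptotic bookkeeping: confirming that the leading $\log\kappa$ terms cancel exactly, rather than the limit being $-\infty$, which would make ``reached'' vacuous. The monotonicity argument itself is routine once the two lemmas are in hand.
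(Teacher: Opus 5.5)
Your proposal is correct, and Steps 1--2 are exactly the paper's proof: first restrict to $\beta=\beta^*(\kappa)$ using \cref{lemma:opt_beta}, then apply the strict decrease in $\kappa$ from \cref{lemma:opt_kappa}. Step 3 is a finiteness check the paper omits, and your Stirling bookkeeping is right, giving $g(\kappa)\to 1+\frac{(1-\gamma)\log(1-\gamma)}{2\gamma}+\frac{1}{2}\log 2\pi$ (the free energy of $\lambda^{-1}\mathbf{MP}_\gamma$); the inner gap you flagged is explicit, $G(\kappa,\beta)-g(\kappa)=\frac{1}{2}(r-1-\log r)$ with $r=\beta^*(\kappa)/\beta$, so every minimising sequence indeed has $\kappa/\beta\to 2\gamma/\lambda$.
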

\begin{proof}
    For each $\kappa$, \cref{lemma:opt_beta} says that $\arginf_{\beta > 0} \lim_{\tau \to 0}\sF_{\infty}^{\lambda} = \kappa \lambda / 2\gamma$. Along this path, \cref{lemma:opt_kappa} gives us that $\sF_{\infty}^{\lambda}$ is strictly decreasing in $\kappa$. Thus, we have the result.
\end{proof}

\begin{remark}
    From \cref{theorem:first_moment_shannon} and \cref{corr:infinite_ratio}, we see that the infimum of $\lim_{\tau \to 0} \sF_{\infty}^{\lambda}$ occurs for $\kappa, \beta \to \infty$ such that $\kappa / \beta \to 2\gamma / \lambda$. Under this limit, $\textbf{HTMP}_{\gamma, \kappa, \beta} \to \lambda^{-1}\textbf{MP}_\gamma$. Note however that \cref{eq:bfe_weights} was derived from models that interpolate the training data; for DNNs, $\textbf{MP}_\gamma$ corresponds to the spectrum of randomly initialized weights, which generally will \textbf{not} interpolate complex training sets. Hence, $\textbf{MP}_\gamma$ is mostly inaccessible for trained weights (at least for sufficiently complex learning tasks). Instead, for trained weights constrained to the interpolation manifold, \cref{eq:bfe_weights} prefers minimum distributional distance to $\textbf{MP}_\gamma$.
\end{remark}

\subsection{Proof of Theorem 3.}\label[appendix]{sec:appendix:convex_proof}
For \cref{theorem:first_moment_shannon}, we see that this result can be proved by combining \cref{lemma:increasing_shannon}, \cref{lemma:opt_beta} and \cref{lemma:opt_kappa}.

\subsection{Proof of Lemma 1.}\label[appendix]{sec:appendix:opt_ratio}
In the main text, we see that to fit the spectral distributions of trained weight matrices, often one can take $\beta \propto \tau^{-1}$, for proportionality constants that differ between models. Under this assumption, we can derive the optimal ratio. The result is given in \cref{lemma:opt_ratio}, and we now give the proof.

\begin{proof}[Proof of \cref{lemma:opt_ratio}]\label{proof:opt_ratio}
    For $\beta \tau = c$, for some $c > 0$, the Tricomi term in $\sF_{\infty}^\lambda$ no longer relies on $\beta$ or $\tau$. Hence, for some $\kappa$, we can take $\partial_\beta \sF_{\infty}^\lambda = 0$. Rearranging for $h = \kappa / \beta$ gives the result.
\end{proof}

\subsection{Auxiliary lemma}
We require a result on the asymptotic expansion of ratios of Tricomi functions.

\begin{lemma}\label{lemma:TricomiAsymptotics}
For $z \to \infty$ such that $z \in \real$, 
    \begin{align*}
        \frac{U(a+1, b+1, \pm z)}{U(a, b, \pm z)} \sim \pm z^{-1} - (a - b + 1)z^{-2} \pm (a - b + 1)(2a - b + 2)z^{-3} \pm \mathcal{O}(z^{-4}).
    \end{align*}
\end{lemma}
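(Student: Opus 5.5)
Set $w=\pm z$ and write $U(a,b;w)$ in its large-argument asymptotic series. The plan is to expand numerator and denominator separately and divide the two series. The standard expansion (DLMF 13.7.3) gives, as $|w|\to\infty$ with $|\arg w|<3\pi/2$,
\begin{align*}
U(a,b;w) \sim w^{-a}\sum_{s\ge 0} \frac{(a)_s (a-b+1)_s}{s!}(-w)^{-s}.
\end{align*}
Both signs $w=z$ and $w=-z$ (interpreted as $\arg w = \pm\pi$, matching the limit $-\beta z$ used for the Stieltjes transform) lie in this sector. The parameter combination $a-b+1$ is the same for $(a,b)$ and $(a+1,b+1)$, so I set $c:=a-b+1$ and the two series differ only through $(a)_s$ versus $(a+1)_s$.

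Writing $u=1/w$, the ratio becomes
\begin{align*}
\frac{U(a+1,b+1;w)}{U(a,b;w)} = u\,\frac{1 - (a+1)c\,u + \tfrac{(a+1)(a+2)c(c+1)}{2}u^2 + \mathcal{O}(u^3)}{1 - ac\,u + \tfrac{a(a+1)c(c+1)}{2}u^2 + \mathcal{O}(u^3)}.
\end{align*}
I then expand $1/(1-ac\,u+\dots)$ as a geometric-type series to second order and multiply it out. The coefficient of $u^2$ is $-(a+1)c + ac = -c$. For the coefficient of $u^3$, collecting terms gives
\begin{align*}
\tfrac{c(c+1)(a+1)}{2}\big[(a+2)-a\big] - (a+1)a c^2 + a^2c^2 = c(c+1)(a+1) - ac^2 = c(a+c+1),
\end{align*}
and $a+c+1 = 2a-b+2$. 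Hence
\begin{align*}
\frac{U(a+1,b+1;w)}{U(a,b;w)} = w^{-1} - c\,w^{-2} + c(2a-b+2)\,w^{-3} + \mathcal{O}(w^{-4}).
\end{align*}

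Finally I substitute $w=\pm z$: $w^{-1}=\pm z^{-1}$, $w^{-2}=z^{-2}$ and $w^{-3}=\pm z^{-3}$, which reproduces the stated signs exactly.

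The main obstacle is bookkeeping rather than anything conceptual. One point needs care: justifying that the expansion is valid on the negative real axis in the boundary sense $-z\mp i0$ used in \cref{lemma:derive_moments}. It is, since $|\arg w|=\pi<3\pi/2$, and the expansion holds uniformly in closed subsectors. Beyond that, the work is to check the $u^3$ coefficient carefully, since the sign convention $(-w)^{-s}$ is easy to slip on.
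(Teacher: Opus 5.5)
Your proposal is correct and follows the paper's proof essentially step for step. Like the paper, you expand numerator and denominator with the large-argument series for $U$, use that both share the parameter $a-b+1$, invert the denominator to second order, read off the coefficients $-c$ and $c(2a-b+2)$, and treat $-z$ via $\arg w=\pm\pi$; your remark that the ratio involves only integer powers of $w$ makes the sign bookkeeping slightly cleaner than the paper's $ze^{i(\pi-\epsilon)}$ substitution, but note that \cref{lemma:derive_moments} takes $z\to-\infty$, so its argument $-\beta z$ is positive and only the $+z$ case is needed there.
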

\begin{proof}[Proof of \cref{lemma:TricomiAsymptotics}]
We first give the proof for $R := U(a+1, b+1, z) / U(a, b, z)$. We use the asymptotic expansion of the Tricomi Confluent Function as $z \to \infty$ such that $|\arg z| < \frac{3}{2}\pi$ \citep{DLMF},
    \begin{align*}
        U(a,b,z) \sim z^{-a} \sum_{s=0}^\infty (-1)^s\frac{(a)_s (a - b + 1)_s}{s!}z^{-s},
    \end{align*}
    where $(.)_s$ denotes the rising factorial symbol. Leading $z^{-a}$ terms will cancel, leaving
    \begin{align*}
        R \sim z^{-1} \frac{1 + d_1 z^{-1} + d_2 z^{-2} + \cdots}{1 + c_1 z^{-1} + c_2 z^{-2} + \cdots},
    \end{align*}
    where $c_i = (-1)^i(a)_i (a - b + 1)_i / i!, ~ d_i = (-1)^i(a+1)_i (a - b + 1)_i / i!$. We use a power series expansion of the denominator in terms of $z^{-1}$, using the Geometric series expansion
    \begin{align*}
        (1 + c_1 z^{-1} + c_2 z^{-2} + \cdots)^{-1} = 1 - c_1 z^{-1} + (c_1^2 - c_2)z^{-2} + \mathcal{O}(z^{-3}).
    \end{align*}
    Hence,
    \begin{align*}
        R &\sim z^{-1} (1 + d_1 z^{-1} + d_2 z^{-2} + \cdots)(1 - c_1 z^{-1} + (c_1^2 - c_2)z^{-2} + \cdots) \\
        &\sim z^{-1} + (d_1 - c_1) z^{-2} + ((c_1^2 - c_2) - c_1d_1 + d_2) z^{-3} + \mathcal{O}(z^{-4}) \\
        &\sim z^{-1} - (a - b + 1) z^{-2} + (a - b + 1)(2a - b + 2) z^{-3} + \mathcal{O}(z^{-4}).
    \end{align*}
For the negative case, we note that $U(a,b,z)$ has a branch cut on the real-negative axis. However, we can write $-z = \lim_{\epsilon \to 0} ze^{i(\pi - \epsilon)}$. Substituting this into the above asymptotic expansion
\begin{align*}
    U(a,b,-z) \sim z^{-a}e^{-i\pi a} \sum_{s=0}^\infty \frac{(a)_s (a - b + 1)_s}{s!}z^{-s}.
\end{align*}
The negative factors will cancel in numerator and denominator of $R$, except for a single negative multiplicative factor (arising from exponent of $a + 1$). Hence, we have the result. 
\end{proof}

\section{Further details}
In this section, we provide further theoretical and experimental details which we could not include in the main body.

\subsection{Connection of BFE-W to previous implicit regularization metrics}
To understand why \cref{eq:bfe_weights} correlates so strongly with model quality, we can understand its connection to previous measurements of implicit regularization. 
Firstly, $m^{-1}\Tr (\WW^T \WW)$ is an HT-SR weight metric. This term also provides a connection to the Geometric Complexity \citep{dherin2022neural}; and this term is implicitly regularized after interpolation for gradient flow training \citep{tsilivis2026flavors}.
Secondly, while the log-determinant is less common in the literature, a key term in the IIC is $n^{-1}\log \det \left( \DD \ff~ \DD \ff^T \right)$, where $\DD \ff~ \DD \ff^T$ is the empirical Neural Tangent Kernel \citep{jacot2018neural}. This measures the complexity of $\ff$ through the volume of the distribution of functions induced by parameter Jacobian of $\ff$. The second term in \cref{eq:bfe_weights} instead measures the volume of functions induced by the last-layer weights, i.e., a \textit{functional complexity}, and is the information gain for a Gaussian model with kernel $\WW^T \WW$.

\subsection{Motivation for application of BFE-W for deeper layer weights}\label[appendix]{sec:appendix:deeper_weights}
While \cref{eq:bfe_weights} was derived for final-layer weights, we can motivate the use for weights from deeper layers. Specifically, for deeper weights $\WW_l$, \cref{eq:bfe_weights} corresponds to the Bayes free energy for an auxiliary Gaussian Process with precision matrix $\KK = \lambda \WW_l^T \WW_l$, for artificial outputs of whitened isotropic noise. Once interpolation has occurred, this model measures the inherent spectral geometry of $\WW_l^T \WW_l$ through the trace and log-determinant, and so it is directly measuring the implicit bias inherent in the learned weights. Examining the spectrum of weights to measure implicit model bias is a core aspect of HT-SR that motivated HTMP. Even though this auxiliary energy is not the Bayes free energy for the original dataset and prior-likelihood pair, we see in \cref{fig:free_energy_training} that for the third-layer convolutional weight of a ResNet50 trained on CIFAR-10, \cref{eq:bfe_weights} correlates surprisingly well with test accuracy during training, \textit{once training loss has plateaued}.
We see similar results for ResNet18 and ResNet34 on CIFAR-10 in \cref{fig:free_energy_training_extra}. In \cref{fig:depth_correlation}, we plot the correlation of \cref{eq:bfe_weights} with model quality across all layers, for ResNet50 on CIFAR-100 and all Pythia models. Interestingly, we see that every deeper layer for ResNet50 correlates very well with model quality ($\mathbf{\rho > 0.9}$). For the Pythia models, the Dense weight matrices seem to correlate very well across all layers and model sizes; however, for QueryKey weight matrices, the results are more mixed.

\subsection{Choice of precision parameter \texorpdfstring{$\lambda$}{λ}}\label[appendix]{sec:appendix:lambda}
To select $\lambda$ when comparing model quality with \cref{eq:bfe_weights} across epochs of training $t$, we take two general approaches:
\begin{enumerate}
    \item As detailed in \cref{remark:lambda}, we set $\lambda_t = d_l / ||\WW_{l,t}||_F^2$ at each epoch $t$.
    \item Let $t^*$ denote the epoch when test error is minimized. Then, we set $\lambda_t = d_l / ||\WW_{l,t^*}||_F^2$, i.e., $\lambda$ remains constant.
\end{enumerate}
We find in our experiments that the second option works best for CV models. In fact, taking the first option for $\lambda$ results in a very poor correlation for the CV models. In comparison, both options work for the LLM models. To see why, we note the following. By \cref{lemma:derive_moments}, $\mu_1 = h / 2\gamma$. Thus, $h = 2 \gamma \mu_1$. As $h^* = 2 \gamma / \lambda$, then if we set $\lambda = 1 / \mu_1$, then $h = h^*$. Hence, by selecting $\lambda_t = d_l / ||\WW_{l,t}||_F^2$, we are explicitly assuming that $h$ is already at its optimal value $h^*$. As seen in \cref{fig:free_energy_training_htmp} and \cref{fig:free_energy_training_htmp_extra}, $h$ moving through $h^*$ directly correlates with under- and over-fitting in CV models; taking $\lambda_t = d_l / ||\WW_{l,t}||_F^2$ destroys this behavior. In comparison, we see in \cref{fig:htmp_params} and \cref{fig:htmp_params_extra} that for the LLM models, $h_t \approx h^*_t$ at each epoch. Hence, taking $\lambda_t = d_l / ||\WW_{l,t}||_F^2$ does not alter the correlation. Similar behaviour has arisen in the literature: in \citet{martin2021predicting}, the spectral size $||\WW_{l,t}||_F^2$ correlated strongly with model quality for well-trained networks within a model class, alongside metrics such as the tail exponent $\alpha$. In comparison, \citet{yang2022evaluating} demonstrated that for LLMs, $||\WW_{l,t}||_F^2$ was \textit{anti-correlated} with model quality, suggesting that spectral size plays a differing role for measuring model quality of LLMs versus CV models.

Being able to correctly predict the optimal $\lambda$ value, i.e., $\lambda_t = d_l / ||\WW_{l,t^*}||_F^2$, would enable our framework to accurately measure the amount of overfitting during training, and it is an important avenue for future work. 

\subsection{Relation of optimal HTMP parameters to heavy-tailed results in literature}
We now detail similarities between the optimal scale-based heaviness derived in \cref{theorem:first_moment_shannon} and \cref{lemma:opt_ratio} and those derived in the literature. 

While previous work in the heavy-tailed literature consider mainly the tail behavior \citep{bartlett2020benign, agrawal2022alpha}, the fact that the first moment is proportional to $h$ suggests that a distribution's mean shares importance with tail behavior. 
This also suggests deeper connections between heavy-tailed metrics and Geometric Complexity \citep{dherin2022neural}, discussed previously. 
We see that the optimal heaviness $h^* = 2\gamma / \lambda$ in \cref{theorem:first_moment_shannon} is a function of the aspect ratio $\gamma$; dependence of heavy-tails on the aspect ratio was argued for in \citet{hu2025eigenspectrum}. 
If we consider normalized heaviness $\bar{h} = \lambda \kappa / 2\gamma\beta$ (i.e., $\lambda \mu_1$), we find $\bar{h}^* = 1$. 
This is remarkably similar to the work of \citet{bartlett2020benign}, where the minimum-norm solution for over-parameterized linear regression provides good generalization performance if power-law exponent on the linear feature covariance is equal to $1$. 
A similar finding for DNNs was shown in \citet{agrawal2022alpha}. 
Defining $\bar{h} = \lambda \kappa / \gamma\beta$, optimal heaviness $\bar{h}^* = 2$ places us in the moderately-heavy tailed universal class of HT-SR, which is expected to perform well \citep{martin2021implicit}.
The clear numerical connection between these previous works and ours is surprising and motivating, despite our differing definitions and matrices. 

\section{Experimental details}\label[appendix]{sec:appendix:experimental_details}
\subsection{ResNets during training}
The ResNet models used were each trained for $200$ epochs on the CIFAR-10 / CIFAR-100 dataset. For the optimizer, stochastic gradient descent was used, with momentum $0.9$, batch size $100$, learning rate $0.02$, and weight-decay $0.0001$. No data-augmentation nor dropout was used, so as not to ``over-regularize'' the weight matrices, preventing clear heavy-tailed ESD shapes. We followed the experimental details of \citep{hodgkinson2025models}, and we normalized the input data to have zero mean and unit variance per channel, as well as normalized each input vector $\xx$ to have norm $||\xx|| = \sqrt{d}$, where $d = 3 \times 32 \times 32$. We initialized the networks using the LeCun normal initialization. Each network was randomly initialized and trained $3$ times. Checkpoints were taken every $5$ epochs. These models were run on an H$100$ GPU.

\subsection{Fitting the HTMP} \label{appendix:experiment:fit_htmp}
To fit the $\textbf{HTMP}_{\kappa, \gamma, \beta}$ distribution to the ESD for $\spec(\WW_l^T \WW_l - \tau \eye)$, for trained weight matrix $\WW_l \in \real^{d_{l-1} \times d_l}$, we firstly fix $\gamma = d_l / d_{l-1}$. For some weight matrices, we trim the top and bottom $5$ eigenvalues, as these extreme eigenvalues can make fitting more difficult. 

Let $\omega_1, \dots, \omega_{d_l}$ be the eigenvalues of $\WW_l^T \WW_l$. Using the HTMP CDF from \cref{lemma:htmp_cdf}, denoted $F_{\kappa, \gamma, \beta}$, we minimize the two-sided Kolmogorov-Smirnov distance,
\begin{align*}
    D_{KS}(\kappa, \beta) = \max_i \left\{ \left | F_{\kappa, \gamma, \beta}(\omega_i - \tau) - \frac{i}{N} \right |,  \left | F_{\kappa, \gamma, \beta}(\omega_i - \tau) - \frac{i-1}{N} \right |\right \}.
\end{align*}
In many cases, we can directly match the empirical and theoretical first moment, using the first HTMP moment from \cref{lemma:derive_moments}, and set
\begin{align*}
    \beta(\kappa) = \frac{\kappa d_l}{2 \gamma \left(||\WW_{l}||_F^2 - d_l \tau\right)}.
\end{align*}
Then, 
\begin{align*}
    \kappa^* = \argmin D_{KS}(\kappa, \beta(\kappa)).
\end{align*}

To compute $F_{\kappa, \gamma, \beta}(\omega_i - \tau)$, then if $1 - \kappa / 2\gamma + \kappa / 2$ is \textbf{not} an integer, then we can take $\omega_1, \dots, \omega_{d_l}$, and compute the principal argument 
\begin{align*}
    \theta_i = \text{atan2}\left(\mathcal{I}(a,b,\beta(\omega_i - \tau)), \mathcal{R}(a,b,\beta(\omega_i - \tau))\right),
\end{align*}
where $a = \kappa / 2, b = 1 - \kappa / 2\gamma + \kappa / 2$, and $\mathcal{I}$ and $\mathcal{R}$ are the imaginary and real parts of $\lim_{y \downarrow 0} U(a,b,-x-iy)$, given in \cref{lemma:cdf_noninteger}. Then, as $\theta_i \in (-\pi, \pi]$, we can use an unwrapping operation (implemented numerically in common computing packages), that takes a set $\theta_1, \dots, \theta_{d_l}$, and adds integer multiples of $2\pi$ to the values in this set whenever the principal argument has passed through the boundary cut. This will return a set $\Theta_1, \dots, \Theta_{d_l} \in [0,\infty)$, such that $F_{\kappa, \gamma, \beta}(\omega_i - \tau) = 2 \Theta_i / \pi \kappa$. 

In the case when $1 - \kappa / 2\gamma + \kappa / 2$ \textbf{is} an integer, one must instead compute $\lim_{y \downarrow 0} U\left( \kappa / 2, 1 - \kappa/2 \gamma + \kappa/2; -\beta x - i \beta y\right)$ using complex numerical functions (such as the implementation of $U(a,b,z)$ in the \textit{mpmath} package), in order to compute the CDF. This process is numerically much slower.

\section{Further experimental results}\label[appendix]{sec:appendix:further_experiments}
In this section, we will give further experimental details, that did not fit in the main body. 

\subsection{Permuted label experiment}
We now demonstrate that even on a task where no feature learning can occur, \cref{eq:bfe_weights} closely follows model quality; further, the HTMP analysis of \cref{theorem:first_moment_shannon} and \cref{lemma:opt_ratio} can still predict the optimal model over training. For this task, we train ResNet18 on CIFAR-10 where all training labels have been randomly permuted. Hence, training inputs $\xx_i$ are now independent of training outputs $\yy_i$. For this task, the Bayes optimal accuracy is $10\%$, i.e., random guessing for $10$ classes. Instead of accuracy, the appropriate metric to consider is cross-entropy on the test dataset. An optimal model will produce a prediction of $\text{softmax}(\ff(\xx_i^*; \WW, \phi)) = [0.1]^{10}$ for each input $\xx_i^*$ in a test dataset. 

In \cref{fig:free_energy_training_permuted} (top left) we see that HTMP still provides an excellent fit to the weight matrices in this case. Here, the weight matrix used for analysis is the third-layer convolutional weight matrix. We plot test error, empirical and asymptotic free energy against training epochs in \cref{fig:free_energy_training_permuted} (top right), for $\lambda^* = 4.72$, where we see a very strong correlation ($\mathbf{\rho = 0.819}$). We mark with a dotted line the epoch when training accuracy reaches $99\%$, which we loosely interpret as \textit{interpolation}. While the absolute scale of free energy does not differentiate between the untrained and trained models, once we constrain to interpolating epochs, the free energy very closely predicts test error. 

Importantly, we also see in \cref{fig:free_energy_training_permuted} (bottom left) that after interpolating on the permuted training set, the optimal model corresponds to the one whose HTMP ratio $h = \kappa / \beta$ is closest to $h^*$ (in \cref{fig:free_energy_training_permuted}(bottom right) we see that the assumptions of \cref{lemma:opt_ratio} are valid). Thus, even in this artificial learning task, our framework allows us to find what spectral properties the model is trying to optimize. 

\begin{figure}[t]
    \centering
    \begin{subfigure}[b]{0.49\linewidth}
        \includegraphics[width=\linewidth]{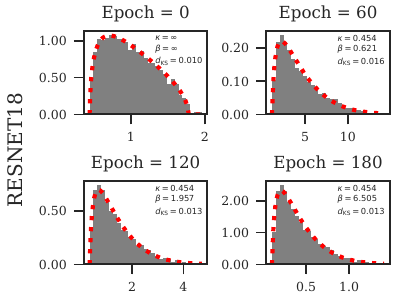}
    \end{subfigure}
    \begin{subfigure}[b]{0.49\linewidth}
        \includegraphics[width=\linewidth]{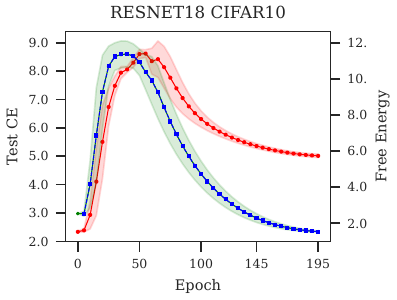}
    \end{subfigure}
    \begin{subfigure}[b]{0.49\linewidth}
        \includegraphics[width=\linewidth]{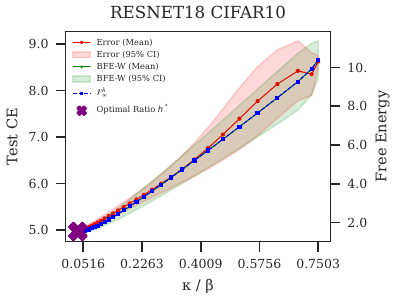}
    \end{subfigure}
    \begin{subfigure}[b]{0.49\linewidth}
        \includegraphics[width=\linewidth]{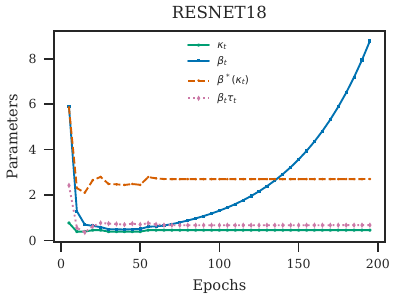}
    \end{subfigure}
    \caption{For ResNet18, trained on permuted CIFAR-10 dataset, we have the (top left) ESD (gray) and fitted $\textbf{HTMP}_{\kappa, \gamma, \beta, \tau}$ distribution (red dotted lines), (top right) test CE, \cref{eq:bfe_weights} and $\sF_{\infty}^{\lambda}$, vs. epochs, where the epoch at which training accuracy reaches $0.99$ is indicated with a vertical dotted line, (bottom left) test CE, \cref{eq:bfe_weights} and $\sF_{\infty}^{\lambda}$, vs. $h = \kappa / \beta$ for \textit{interpolating} epochs, and (bottom right) fitted HTMP parameters. Even for a task where there is no feature learning, our framework holds surprisingly well. Here, the weight matrix used for analysis is the third-layer convolutional weight.}
    \label{fig:free_energy_training_permuted}
\end{figure}

\subsection{Extra ResNet and Pythia models}
Here, we provide further experimental evidence for the applicability of \cref{theorem:first_moment_shannon} and \cref{lemma:opt_ratio} for analyzing training dynamics of large DNNs. Specifically, we repeat the analysis for ResNet18 and ResNet34 trained on CIFAR-10, and for the Pythia $2.8$B and Pythia $6.9$B models. We see in \cref{fig:free_energy_training_extra}, \cref{fig:free_energy_training_htmp_extra} and \cref{fig:htmp_params_extra} that for CV models, overfitting is a function of $h$ moving through $h^*$, whereas for the LLM models, $h \approx h^*$ during training, and so $\kappa$ increasing results in an improved model fit. 

\begin{figure}[t]
    \centering
    \begin{subfigure}[b]{0.49\linewidth}
        \includegraphics[width=\linewidth]{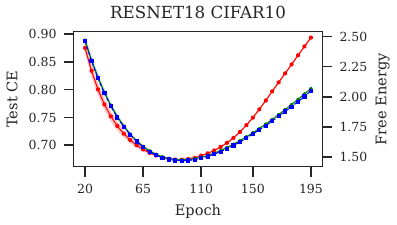}
    \end{subfigure}
    \begin{subfigure}[b]{0.49\linewidth}
        \includegraphics[width=\linewidth]{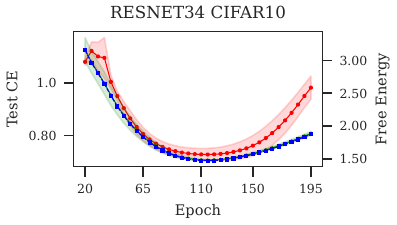}
    \end{subfigure}
    \begin{subfigure}[b]{0.49\linewidth}
        \includegraphics[width=\linewidth]{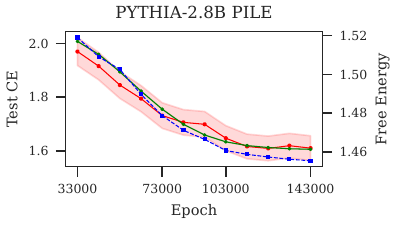}
    \end{subfigure}
    \begin{subfigure}[b]{0.49\linewidth}
        \includegraphics[width=\linewidth]{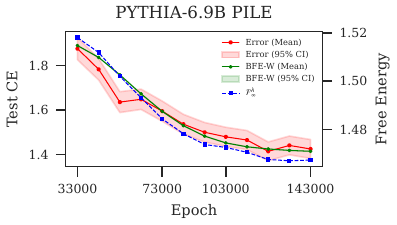}
    \end{subfigure}
    \caption{Test CE, \cref{eq:bfe_weights} and $\sF_{\infty}^{\lambda}$, vs. epochs. Model quality during training is highly correlated with BFE. Further, $\sF_{\infty}^{\lambda}$ closely approximates \cref{eq:bfe_weights}. Extra experimental results to accompany \cref{fig:free_energy_training}.}
    \label{fig:free_energy_training_extra}
\end{figure}

\begin{figure}[t]
    \centering
    \begin{subfigure}[b]{0.49\linewidth}
        \includegraphics[width=\linewidth]{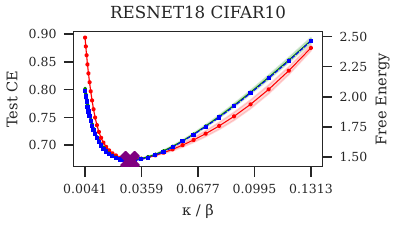}
    \end{subfigure}
    \begin{subfigure}[b]{0.49\linewidth}
        \includegraphics[width=\linewidth]{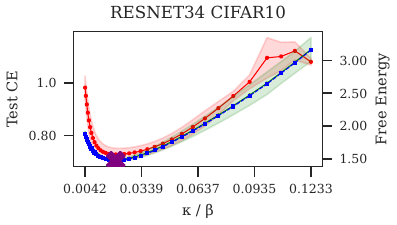}
    \end{subfigure}
    \begin{subfigure}[b]{0.49\linewidth}
        \includegraphics[width=\linewidth]{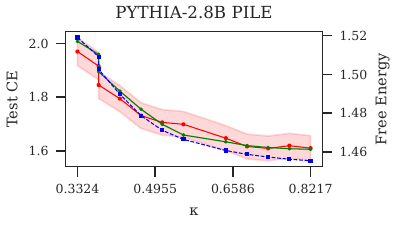}
    \end{subfigure}
    \begin{subfigure}[b]{0.49\linewidth}
        \includegraphics[width=\linewidth]{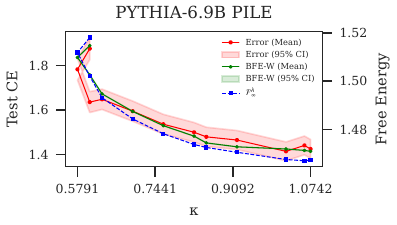}
    \end{subfigure}
    \caption{Test error, \cref{eq:bfe_weights} and $\sF_{\infty}^{\lambda}$ vs. fitted $\kappa / \beta$ and $\kappa$. For CV, $h^*$ (purple cross) aligns with minimum error. For LLMs, $h^*$ is attained early, and increasing $\kappa$ lowers test error. Extra experimental results to accompany \cref{fig:free_energy_training_htmp}.}
    \label{fig:free_energy_training_htmp_extra}
\end{figure}

\begin{figure}
    \centering
    \includegraphics[width=\linewidth]{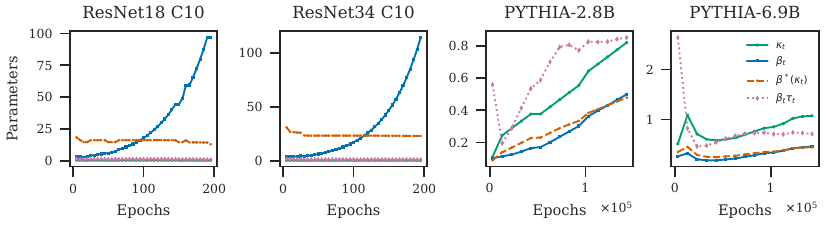}
    \caption{Fitted $\textbf{HTMP}_{\gamma, \kappa, \beta, \tau}$ parameters at each epoch. Extra experimental results to accompany \cref{fig:htmp_params}.}
    \label{fig:htmp_params_extra}
\end{figure}

\subsection{ESDs}\label[appendix]{sec:appendix:esds}
Here, we provide ESDs for ResNet50 trained on CIFAR-10 and Pythia-$1B$ trained on Pile, to accompany the results given in \cref{fig:free_energy_training}. Note that the ESD for the other two models in \cref{fig:free_energy_training} were given in \cref{fig:esd_training}.

\begin{figure}[t]
    \centering
    \begin{subfigure}[b]{1\textwidth}
        \includegraphics[width=\textwidth]{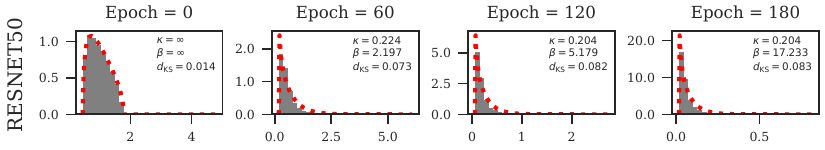}
    \end{subfigure}
    \begin{subfigure}[b]{1\textwidth}
        \includegraphics[width=\textwidth]{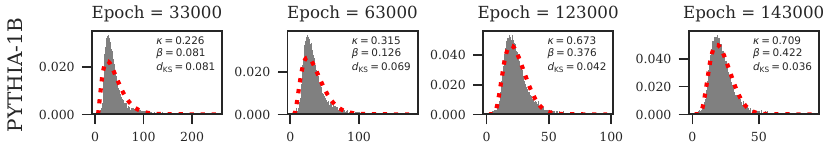}
    \end{subfigure}
    \caption{The ESD (gray) and fitted $\textbf{HTMP}_{\kappa, \gamma, \beta, \tau}$ distribution (red dotted lines), for (top) the third-layer convolutional weight of ResNet50 trained on CIFAR-10, and (bottom) the \textit{EmbedOut} weight of Pythia-$1B$ trained on Pile.}
    \label{fig:esd_training_2}
\end{figure}

\subsection{Correlation of free energy with model quality across depth}
Though we derived \cref{eq:bfe_weights} for the final-layer weights, a natural question is whether the correlation between free energy and model quality also holds for weights from deeper layers. We have already seen evidence of this in \cref{fig:free_energy_training} for ResNet50 on CIFAR-10, where the free energy evaluated on the third-layer convolutional weight matrix correlated very strongly with model quality. We also provide theoretical motivation in \cref{sec:appendix:deeper_weights}. Here, we provide extensive evaluation of correlation of free energy with model quality for deeper weights, across all weights of ResNet50 on CIFAR-100, and for Dense and QueryKey matrices at every layer for all Pythia models. We measure correlation from epoch $20$ and epoch $43000$ respectively, so that training loss effects can be removed (remember \cref{eq:bfe_weights} is derived for interpolating models). We provide the results in \cref{fig:depth_correlation}. Interestingly, \cref{eq:bfe_weights} for \textbf{every} weight for ResNet50 on CIFAR-100 correlates very strongly with model quality. We also see that $\lambda^* = d_l / ||\WW_{l,t^*}||_F^2$ is approximately constant across layers, and that distinct patterns emerge based on convolutional ordering (i.e., Conv 1 vs. Conv 2 weights seem to roughly double $\lambda^*$, and similarly for Conv 2. vs Conv. 3). For the Pythia models, the results are not as clear. For some models (e.g., Pythia $2.8$B), both QueryKey and Dense weight matrices at every layer show excellent correlation between \cref{eq:bfe_weights} and model quality; in contrast Pythia-$12$B QueryKey weights show a very strong \textit{anti-correlation} with model quality across most layers. This figure demonstrates that many weights can be used to evaluate \cref{eq:bfe_weights}, though some care needs to be taken. Further, the relationship of $\lambda^*$ across layers and convolutional weights suggests that one may be able to derive the value of $\lambda^*$ based off of model parameters; this would allow for a priori prediction of when overfitting would occur.

\begin{figure}[t]
    \centering
    \begin{subfigure}[b]{1\textwidth}
        \includegraphics[width=\textwidth]{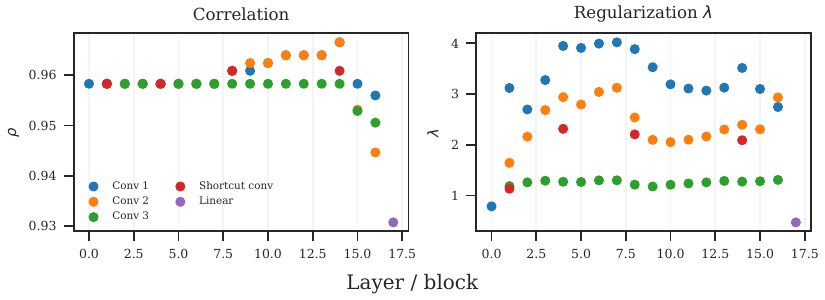}
    \end{subfigure}
    \begin{subfigure}[b]{1\textwidth}
        \includegraphics[width=\textwidth]{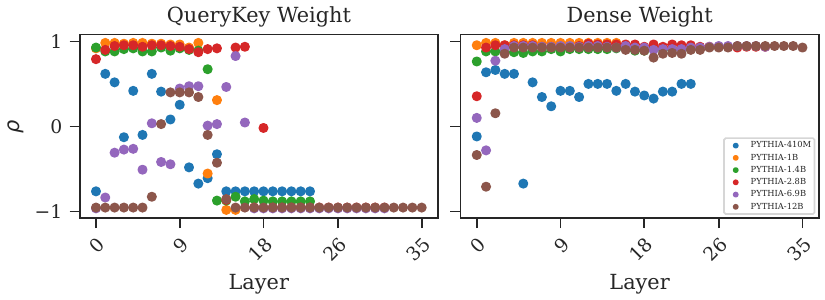}
    \end{subfigure}
    \caption{Correlation of \cref{eq:bfe_weights} with model quality for deeper weights, across (top) all weights of ResNet50 on CIFAR-100, and (bottom) for Dense and QueryKey matrices at every layer for all Pythia models. For the Pythia models, model quality is measured by log perplexity for the LAMBADA test dataset. We also provide the $\lambda^* = d_l / ||\WW_{l,t^*}||_F^2$ value for all ResNet50 weights. We see that a significant proportion of deeper weights show a very strong correlation with model quality.}
    \label{fig:depth_correlation}
\end{figure}

\section{Further empirical results for free energy}

\subsection{Comparison to HT-SR metrics}
\label[appendix]{sec:appendix:WW}

We now directly compare \cref{eq:bfe_weights} to the HT-SR metrics of \citet{martin2021predicting}, as defined in \cref{sec:background}. In this work, the metrics were computed using the WeightWatchers (WW) package \citep{WeightWatcher}. To ensure fair comparison, we use the legacy version of WW (version $0.2.7$) which is used in \citet{martin2021predicting}. We compare \cref{eq:bfe_weights} (primal) against $\|\WW\|_F^2$, $\|\WW\|_{\infty}^2$, $\hat{\alpha}$, $\|\WW\|_{2\alpha}^{2\alpha}$, $\alpha$ (PL exponent), and $\lambda$ (ET-PL truncation).

\begin{table}[t]
\caption{Spearman $\rho$ correlation of model quality metrics with top-5 ImageNet test accuracy for \textbf{Legacy} models. Combined pools all models in this type. The $\alpha$ and $\lambda$ are the fitted PL and ET-PL parameters. }
\label{table:legacy_ww}
\centering
\begingroup
\setlength{\tabcolsep}{2pt}
\begin{tabularx}{\textwidth}{@{}l*{7}{>{\centering\arraybackslash}X}@{}}
\toprule
Model & BFE-W & $\|\WW\|_F^2$ & $\|\WW\|_{\infty}^2$ & $\hat{\alpha}$ & $\|\WW\|_{2\alpha}^{2\alpha}$ & $\alpha$ & $\lambda$ \\
\midrule
DenseNet & $-0.80$ & $-1.00$ & $-1.00$ & $-1.00$ & $-1.00$ & $0.00$ & $0.40$ \\
ResNet & $-0.90$ & $-1.00$ & $-1.00$ & $-1.00$ & $-1.00$ & $-0.70$ & $1.00$ \\
VGG & $0.81$ & $-0.90$ & $-0.98$ & $-0.98$ & $-0.98$ & $0.95$ & $0.95$ \\
\midrule
Combined & $0.48$ & $-0.88$ & $-0.89$ & $-0.87$ & $-0.87$ & $0.54$ & $0.86$ \\
\bottomrule
\end{tabularx}
\endgroup
\end{table}

\begin{table}[t]
\caption{Spearman $\rho$ correlation of model quality metrics with top-5 ImageNet test accuracy for \textbf{CV} (non-dropout) models. Combined pools all models in this type. The $\alpha$ and $\lambda$ are the fitted PL and ET-PL parameters. }
\label{table:resnettype_ww}
\centering
\begingroup
\setlength{\tabcolsep}{2pt}
\begin{tabularx}{\textwidth}{@{}l*{7}{>{\centering\arraybackslash}X}@{}}
\toprule
Model & BFE-W & $\|\WW\|_F^2$ & $\|\WW\|_{\infty}^2$ & $\hat{\alpha}$ & $\|\WW\|_{2\alpha}^{2\alpha}$ & $\alpha$ & $\lambda$ \\
\midrule
ConvNeXt & $-0.80$ & $1.00$ & $1.00$ & $1.00$ & $1.00$ & $0.40$ & $-0.80$ \\
DenseNet & $-0.80$ & $-0.40$ & $-0.40$ & $-0.40$ & $-0.40$ & $-0.20$ & $0.40$ \\
Inception & \textemdash & \textemdash & \textemdash & \textemdash & \textemdash & \textemdash & \textemdash \\
RegNet & $-0.92$ & $0.30$ & $0.15$ & $0.33$ & $0.35$ & $-0.30$ & $-0.39$ \\
ResNeXt & $0.50$ & $-0.50$ & $-1.00$ & $-1.00$ & $-0.50$ & $1.00$ & $1.00$ \\
ResNet & $-1.00$ & $0.50$ & $-0.20$ & $-0.20$ & $0.10$ & $0.40$ & $0.50$ \\
ShuffleNet & $-0.80$ & $-0.20$ & $0.80$ & $-0.20$ & $-0.20$ & $0.80$ & $0.40$ \\
Wide ResNet & $-1.00$ & $-1.00$ & $-1.00$ & $-1.00$ & $-1.00$ & $1.00$ & $1.00$ \\
\midrule
Combined & $-0.71$ & $0.54$ & $0.35$ & $0.53$ & $0.51$ & $0.58$ & $-0.52$ \\
\bottomrule
\end{tabularx}
\endgroup
\end{table}

Firstly, in \cref{table:legacy_ww}, we compare \cref{eq:bfe_weights} against the HT-SR metrics for the DNNs featured in \citet{martin2021predicting}, i.e., ResNet, VGG and DenseNet. We compare using the legacy model weight checkpoints used in \citet{martin2021predicting}. The reported correlations are Spearmans $\rho$ correlation. We see that, as expected, both within-group correlation and across-group correlation is approximately similar (and very strong) across \cref{eq:bfe_weights} and the HT-SR metrics, with the exception of VGG; as stated in \cref{sec:free_energy_experiment}, VGG employs \textit{dropout} before the final-layer weights, and hence \cref{eq:bfe_weights} performs very poorly on these models. Due to this, \cref{eq:bfe_weights} has the \textit{opposite} correlation with model quality across these groups. Further, we see that $\alpha$ and $\lambda$ are poor predictors of model quality. 

We now compare \cref{eq:bfe_weights} for all \textbf{torchvision} models that are not \textit{dropout} or \textit{vision transformer} models. We use the most updated model weights as provided by \textit{torchvision}, as of May 2026. The results are displayed in \cref{table:resnettype_ww}. Here, while \cref{eq:bfe_weights} correlates very strongly, we see that the HT-SR metrics now do not predict model quality across model classes. There is still some correlation within-group, i.e., for ResNeXt and Wide ResNet, though we observe that \cref{eq:bfe_weights} is the most consistent predictor. Of particular note is that these metrics no longer predict test accuracy for ResNets. As the current available weights have been finely optimized for performance, it is perhaps the case that the weights have become over-regularized in some sense, and so one must develop further metrics (e.g., \cref{eq:bfe_weights}) to evaluate test performance. We again see that $\alpha$ and $\lambda$ are poor predictors of model quality.

\subsection{When last-layer free energy does not predict model quality}\label[appendix]{sec:appendix:bad_free_energy}
As detailed in \cref{sec:free_energy_experiment}, certain classes of models do not show strong correlation between \cref{eq:bfe_weights} and model quality for CV.
\begin{description}
    \item[Dropout.] Models that use dropout directly before the final-layer weights, such as VGG \citep{simonyan2014very}, EfficientNet \citep{tan2021efficientnetv2}, and MNASNet \citep{tan2019mnasnet}, do not show good correlation. This could be because the addition of dropout violates the \textit{features} and \textit{weights} decomposition of a neural network, or perhaps because dropout prevents complex correlations in the final-layer weights from occurring.
    \item[Vision Transformers.] Vision transformers, e.g., ViT \citep{dosovitskiy2020image} and Swin \citep{liu2021swin, liu2022swin}, also do not show good correlation. We conjecture that a clear decomposition of vision transformers into \textit{features} (final activations) and \textit{weight} (final layer) does not hold (though it does for language transformers). 
\end{description}

\begin{table}[t]
\caption{Spearman $\rho$ correlation of model quality metrics with top-5 ImageNet test accuracy for \textbf{Dropout} models. Combined pools all models in this type. The $\alpha$ and $\lambda$ are the fitted PL and ET-PL parameters. }
\label{table:dropout_ww}
\centering
\begingroup
\setlength{\tabcolsep}{2pt}
\begin{tabularx}{\textwidth}{@{}l*{7}{>{\centering\arraybackslash}X}@{}}
\toprule
Model & BFE-W & $\|\WW\|_F^2$ & $\|\WW\|_{\infty}^2$ & $\hat{\alpha}$ & $\|\WW\|_{2\alpha}^{2\alpha}$ & $\alpha$ & $\lambda$ \\
\midrule
EfficientNet & $0.54$ & $-0.83$ & $-0.84$ & $-0.81$ & $-0.83$ & $-0.95$ & $0.69$ \\
MNASNet & $0.80$ & $-0.80$ & $-0.80$ & $-0.80$ & $-0.80$ & $0.20$ & $0.40$ \\
MobileNet & $-1.00$ & $-0.50$ & $-1.00$ & $-0.50$ & $-0.50$ & $-1.00$ & $0.50$ \\
VGG & $0.81$ & $-0.90$ & $-0.98$ & $-0.93$ & $-0.93$ & $0.14$ & $0.95$ \\
\midrule
Combined & $0.00$ & $-0.42$ & $-0.66$ & $-0.45$ & $-0.42$ & $-0.08$ & $0.73$ \\
\bottomrule
\end{tabularx}
\endgroup
\end{table}

\begin{table}[t]
\caption{Spearman $\rho$ correlation of model quality metrics with top-5 ImageNet test accuracy for \textbf{Vision Transformer} models. Combined pools all models in this type. The $\alpha$ and $\lambda$ are the fitted PL and ET-PL parameters. }
\label{table:vit_ww}
\centering
\begingroup
\setlength{\tabcolsep}{2pt}
\begin{tabularx}{\textwidth}{@{}l*{7}{>{\centering\arraybackslash}X}@{}}
\toprule
Model & BFE-W & $\|\WW\|_F^2$ & $\|\WW\|_{\infty}^2$ & $\hat{\alpha}$ & $\|\WW\|_{2\alpha}^{2\alpha}$ & $\alpha$ & $\lambda$ \\
\midrule
Swin & $-0.83$ & $0.66$ & $0.94$ & $0.89$ & $0.89$ & $0.83$ & $-1.00$ \\
ViT & $0.20$ & $0.20$ & $-0.50$ & $-0.30$ & $-0.30$ & $0.30$ & $0.30$ \\
\midrule
Combined & $0.14$ & $0.17$ & $-0.31$ & $0.07$ & $0.03$ & $0.77$ & $-0.09$ \\
\bottomrule
\end{tabularx}
\endgroup
\end{table}

We provide the correlations for \cref{eq:bfe_weights}, as well as the HT-SR metrics, in \cref{table:dropout_ww} and \cref{table:vit_ww}.
For \textbf{Vision Transformers}, the HT-SR metrics are also poor predictors of model quality. However, though correlation is only mild ($-0.42 \geq \rho \geq -0.66$), the HT-SR metrics are much stronger predictors than \cref{eq:bfe_weights} for the Dropout models. Interestingly, $\alpha$ and $\lambda$ still do not show consistent correlation.

\clearpage

\end{document}

%% file: Format/packages.tex
\usepackage{bm}
\usepackage{amsfonts}
\usepackage{amssymb}
\usepackage{algorithmic,algorithm}
\usepackage{caption}
\usepackage{subcaption}
\usepackage{booktabs}
\usepackage{tabularx}
\usepackage{xcolor}
\usepackage{pifont}
\usepackage{hyperref}
\usepackage{amsmath}
\usepackage{wrapfig}

\usepackage{natbib}

\usepackage{tcolorbox}

\usepackage{amssymb}
\usepackage{mathtools}
\usepackage{amsthm}

\crefname{equation}{}{}
\crefname{figure}{Figure}{Figures}
\creflabelformat{equation}{\textup{(#2#1#3)}}
\crefname{assumption}{Assumption}{Assumptions}
\crefname{condition}{Condition}{Conditions}
\crefname{condition}{Condition}{Conditions}

\usepackage{arydshln}
\usepackage{enumitem}
\setlist[enumerate]{leftmargin=1.3em, itemindent=0em, itemsep=0em, topsep=0em, label = {\bfseries \arabic*.}} 


%% file: Format/newcmnds.tex
\newif\ifsolution 
\newif\iflongversion
\longversionfalse
\newif\ifgrad

\usepackage{pifont}
\usepackage{accents}

\usepackage{xspace}

\usepackage{accents}

\usepackage{stackengine}
\stackMath
\newcommand\tsup[2][2]{%
	\def\useanchorwidth{T}%
	\ifnum#1>1%
	\stackon[-.5pt]{\tsup[\numexpr#1-1\relax]{#2}}{\scriptscriptstyle\sim}%
	\else%
	\stackon[.5pt]{#2}{\scriptscriptstyle\sim}%
	\fi%
}
\makeatletter
\newcommand{\longdash}[1][2em]{%
	\makebox[#1]{$\m@th\smash-\mkern-7mu\cleaders\hbox{$\mkern-2mu\smash-\mkern-2mu$}\hfill\mkern-7mu\smash-$}}
\makeatother
\newcommand{\omitskip}{\kern-\arraycolsep}

\newcommand{\real}{\mathbb{R}}
\newcommand{\complex}{\mathbb{C}}

\newcommand{\integer}{\mathbb{Z}}
\DeclareMathOperator*{\argmin}{arg\,min}

\DeclareMathOperator*{\arginf}{arg\,inf}
\DeclareMathOperator*{\smin}{smin}
\DeclareMathOperator*{\argsmin}{argsmin}

\makeatletter
\newcommand*{\transpose}{%
	{\mathpalette\@transpose{}}%
}
\newcommand*{\@transpose}[2]{%
	\raisebox{\depth}{$\m@th#1\intercal$}%
}
\makeatother

\newcommand{\sD}{\mathcal{D}}

\newcommand{\sF}{\mathcal{F}}

\newcommand{\sL}{\mathcal{L}}
\newcommand{\sM}{\mathcal{M}}
\newcommand{\sN}{\mathcal{N}}
\newcommand{\sP}{\mathcal{P}}

\newcommand{\sV}{\mathcal{V}}

\newcommand{\sX}{\mathcal{X}}
\newcommand{\sY}{\mathcal{Y}}
\newcommand{\sZ}{\mathcal{Z}}

\renewcommand {\AA}  { {\mathbf{A}} }

\newcommand {\RR}  { {\mathbf{R}} }
\newcommand {\BB}  { {\mathbf{B}} }

\newcommand {\DD}  { {\mathbf{D}} }

\newcommand {\KK}  { {\mathbf{K}} }

\newcommand {\VV}  { {\mathbf{V}} }
\newcommand {\WW}  { {\mathbf{W}} }
\newcommand {\UU}  { {\mathbf{U}} }

\newcommand {\XX}  { {\mathbf{X}} }
\newcommand {\YY}  { {\mathbf{Y}} }

\newcommand{\eye}{\text{I}}

\newcommand {\ff}  { {\bf f} }

\newcommand {\yy}  { {\bf y} }

\newcommand {\xx}  { {\bf x} }

\newcommand {\btheta} {\bm \theta}

\newcommand {\rank}  { {\textnormal{Rank}} }

\newcommand {\range}  { {\textnormal{Range}} }

\newcommand {\pdet}  { {\textnormal{pdet}} }
\newcommand {\Arg}  { {\textnormal{Arg}} }
\newcommand {\Tr}  { {\textnormal{Tr}} }
\newcommand {\spec}  { {\textnormal{spec}} }

\newcommand {\supp}  { {\textnormal{supp}} }

\definecolor{forestgreen}{rgb}{0.13, 0.55, 0.13}

\definecolor{amber}{rgb}{1.0, 0.75, 0.0}

\definecolor{bananayellow}{rgb}{.8, 0.6, 0}

\newcounter{comment}

\usepackage{amsthm}
\usepackage[framemethod=TikZ]{mdframed}

\newtheorem*{example*}{Example}

\newtheorem{theorem}{Theorem}
\newtheorem*{theorem_nonumber}{Theorem}
\newtheorem{lemma}{Lemma}

\newtheorem{assumption}{Assumption}
\newtheorem{definition}{Definition}
\newtheorem{corollary}{Corollary}
\theoremstyle{definition}
\newtheorem{remark}{Remark}

\usepackage{tikz}
\usepackage{xparse}

\NewDocumentCommand\DownArrow{O{2.0ex} O{black}}{%
	\mathrel{\tikz[baseline] \draw [<-, line width=0.5pt, #2] (0,0) -- ++(0,#1);}
}

\usepackage{listings} 

\definecolor{mygreen}{rgb}{0,0.6,0}
\definecolor{mygray}{rgb}{0.5,0.5,0.5}
\definecolor{mymauve}{rgb}{0.58,0,0.82}
\definecolor{codegreen}{rgb}{0,0.6,0}
\definecolor{codegray}{rgb}{0.5,0.5,0.5}
\definecolor{codepurple}{rgb}{0.58,0,0.82}
\definecolor{backcolour}{rgb}{0.95,0.95,0.92}

\lstdefinestyle{mystyle}{
	backgroundcolor=\color{backcolour},   
	commentstyle=\color{codegreen},
	keywordstyle=\color{magenta},
	numberstyle=\tiny\color{codegray},
	stringstyle=\color{codepurple},
	basicstyle=\ttfamily\footnotesize,
	breakatwhitespace=false,         
	breaklines=true,                 
	captionpos=b,                    
	keepspaces=true,                 
	numbers=left,                    
	numbersep=5pt,                  
	showspaces=false,                
	showstringspaces=false,
	showtabs=false,                  
	tabsize=2
}
\newcommand {\Ex} { {\mathbb E} }

\usepackage{dsfont}

%% file: Bib/biblio.bib
@article{fan2020spectra,
  title={Spectra of the Conjugate Kernel and Neural Tangent Kernel for Linear-Width Neural Networks},
  author={Fan, Zhou and Wang, Zhichao},
  journal={Advances in Neural Information Processing Systems},
  volume={33},
  pages={7710--7721},
  year={2020}
}

@book{couillet2022random,
  title={Random Matrix Methods for Machine Learning},
  author={Couillet, Romain and Liao, Zhenyu},
  year={2022},
  publisher={Cambridge University Press}
}

@inproceedings{hodgkinson2023monotonicity,
  title={Monotonicity and Double Descent in Uncertainty Estimation with {Gaussian} Processes},
  author={Hodgkinson, Liam and van der Heide, Chris and Roosta, Fred and Mahoney, Michael W},
  booktitle={International Conference on Machine Learning},
  pages={13085--13117},
  year={2023},
  organization={PMLR}
}

@article{fong2020marginal,
  title={On the marginal likelihood and cross-validation},
  author={Fong, Edwin and Holmes, Chris C},
  journal={Biometrika},
  volume={107},
  number={2},
  pages={489--496},
  year={2020},
  publisher={Oxford University Press}
}

@article{germain2016pac,
  title={{PAC-Bayesian} theory meets {Bayesian} inference},
  author={Germain, Pascal and Bach, Francis and Lacoste, Alexandre and Lacoste-Julien, Simon},
  journal={Advances in Neural Information Processing Systems},
  volume={29},
  year={2016}
}

@article{jacot2018neural,
  title={Neural tangent kernel: Convergence and generalization in neural networks},
  author={Jacot, Arthur and Gabriel, Franck and Hongler, Cl{\'e}ment},
  journal={Advances in Neural Information Processing Systems},
  volume={31},
  year={2018}
}

@article{mackay1992bayesian,
  title={Bayesian interpolation},
  author={MacKay, David JC},
  journal={Neural Computation},
  volume={4},
  number={3},
  pages={415--447},
  year={1992},
  publisher={MIT Press One Rogers Street, Cambridge, MA 02142-1209, USA journals-info~…}
}

@inproceedings{immer2021scalable,
  title = {Scalable marginal likelihood estimation for model selection in deep learning},
  author = {Immer, Alexander and Bauer, Matthias and Fortuin, Vincent and R{\"a}tsch, Gunnar and Khan, Mohammad Emtiyaz},
  booktitle = {Proceedings of the 38th International Conference on Machine Learning},
  series = {Proceedings of Machine Learning Research},
  volume = {139},
  pages = {4563--4573},
  publisher = {PMLR},
  year = {2021},
  url = {https://proceedings.mlr.press/v139/immer21a.html}
}

@article{hodgkinson2023interpolating,
  title={The interpolating information criterion for overparameterized models},
  author={Hodgkinson, Liam and van der Heide, Chris and Salomone, Robert and Roosta, Fred and Mahoney, Michael W},
  journal={arXiv preprint arXiv:2307.07785},
  year={2023}
}

@article{tulino2004random,
  title = {Random matrix theory and wireless communications},
  author = {Tulino, Antonia M. and Verd{\'u}, Sergio},
  journal = {Foundations and Trends in Communications and Information Theory},
  volume = {1},
  number = {1},
  pages = {1--182},
  year = {2004},
  publisher = {Now Publishers}
}

@article{mei2022generalization,
  title={The generalization error of random features regression: Precise asymptotics and the double descent curve},
  author={Mei, Song and Montanari, Andrea},
  journal={Communications on Pure and Applied Mathematics},
  volume={75},
  number={4},
  pages={667--766},
  year={2022},
  publisher={Wiley Online Library}
}

@inproceedings{liao2020random,
  title={A random matrix analysis of random {Fourier} features: Beyond the {Gaussian} kernel, a precise phase transition, and the corresponding double descent},
  author={Liao, Zhenyu and Couillet, Romain and Mahoney, Michael W},
  booktitle={Advances in Neural Information Processing Systems},
  volume={33},
  pages={13939--13950},
  year={2020}
}

@article{LiaoMahoney2026,
  author  = {Zhenyu Liao and Michael W. Mahoney},
  title   = {Random Matrix Theory for Deep Learning: Beyond Eigenvalues of Linear Models},
  journal = {IEEE Signal Processing Magazine},
  year    = {2026},
  volume  = {43},
  number  = {2},
  pages   = {93--106}
}

@inproceedings{ameli2025determinant,
  title = {Determinant estimation under memory constraints and neural scaling laws},
  author = {Ameli, Siavash and van der Heide, Chris and Hodgkinson, Liam and Roosta, Fred and Mahoney, Michael W.},
  booktitle = {Proceedings of the 42nd International Conference on Machine Learning},
  series = {Proceedings of Machine Learning Research},
  volume = {267},
  pages = {1367--1400},
  publisher = {PMLR},
  year = {2025},
}

@inproceedings{hodgkinson2025models,
  title = {Models of heavy-tailed mechanistic universality},
  author = {Hodgkinson, Liam and Wang, Zhichao and Mahoney, Michael W.},
  booktitle = {Proceedings of the 42nd International Conference on Machine Learning},
  series = {Proceedings of Machine Learning Research},
  volume = {267},
  pages = {23290--23329},
  publisher = {PMLR},
  year = {2025},
  url = {https://proceedings.mlr.press/v267/hodgkinson25a.html}
}

@misc{DLMF,
  author = {{DLMF}},
  title = {{NIST Digital Library of Mathematical Functions}},
  howpublished = {\url{https://dlmf.nist.gov/}, Release 1.2.4 of 2025-03-15},
  note = {F. W. J. Olver, A. B. Olde Daalhuis, D. W. Lozier,
          B. I. Schneider, R. F. Boisvert, C. W. Clark,
          B. R. Miller, B. V. Saunders, H. S. Cohl,
          and M. A. McClain, eds.}
}

@article{martin2021predicting,
  title={Predicting trends in the quality of state-of-the-art neural networks without access to training or testing data},
  author={Martin, Charles H and Peng, Tongsu and Mahoney, Michael W},
  journal={Nature Communications},
  volume={12},
  number={1},
  pages={4122},
  year={2021},
  publisher={Nature Publishing Group UK London}
}

@article{martin2021implicit,
  title={Implicit self-regularization in deep neural networks: Evidence from random matrix theory and implications for learning},
  author={Martin, Charles H and Mahoney, Michael W},
  journal={Journal of Machine Learning Research},
  volume={22},
  number={165},
  pages={1--73},
  year={2021}
}

@INPROCEEDINGS{MM19_HTSR_ICML,
  author =       {C. H. Martin and M. W. Mahoney},
  title =        {Traditional and Heavy-Tailed Self Regularization in Neural Network Models},
  booktitle =    {Proceedings of the 36th International Conference on Machine Learning},
  year =         {2019},
  pages =        {4284--4293},
}

@inproceedings{martin2020heavy,
  title={Heavy-tailed universality predicts trends in test accuracies for very large pre-trained deep neural networks},
  author={Martin, Charles H and Mahoney, Michael W},
  booktitle={Proceedings of the 2020 SIAM International Conference on Data Mining},
  pages={505--513},
  year={2020},
  organization={SIAM}
}

@article{setolTR,
  title={{SETOL}: A semi-empirical theory of (deep) learning},
  author={Martin, Charles H and Hinrichs, Christopher},
  journal={arXiv preprint arXiv:2507.17912},
  year={2025}
}

@article{agrawal2022alpha,
  title={{$\alpha$-ReQ}: Assessing Representation Quality in Self-Supervised Learning by measuring eigenspectrum decay},
  author={Agrawal, Kumar K and Mondal, Arnab Kumar and Ghosh, Arna and Richards, Blake},
  journal={Advances in Neural Information Processing Systems},
  volume={35},
  pages={17626--17638},
  year={2022}
}

@article{bartlett2020benign,
  title={Benign overfitting in linear regression},
  author={Bartlett, Peter L and Long, Philip M and Lugosi, G{\'a}bor and Tsigler, Alexander},
  journal={Proceedings of the National Academy of Sciences},
  volume={117},
  number={48},
  pages={30063--30070},
  year={2020},
  publisher={National Academy of Sciences}
}

@article{golub1973differentiation,
  title={The differentiation of pseudo-inverses and nonlinear least squares problems whose variables separate},
  author={Golub, Gene H and Pereyra, Victor},
  journal={SIAM Journal on Numerical Analysis},
  volume={10},
  number={2},
  pages={413--432},
  year={1973},
  publisher={SIAM}
}

@article{ismail1988linear,
  title={Linear birth and death models and associated {Laguerre} and {Meixner} polynomials},
  author={Ismail, Mourad EH and Letessier, Jean and Valent, Galliano},
  journal={Journal of Approximation Theory},
  volume={55},
  number={3},
  pages={337--348},
  year={1988},
  publisher={Elsevier}
}

@inproceedings{lotfi2022bayesian,
  title={Bayesian model selection, the marginal likelihood, and generalization},
  author={Lotfi, Sanae and Izmailov, Pavel and Benton, Gregory and Goldblum, Micah and Wilson, Andrew Gordon},
  booktitle={International Conference on Machine Learning},
  pages={14223--14247},
  year={2022},
  organization={PMLR}
}

@inproceedings{simonyan2014very,
  title = {Very deep convolutional networks for large-scale image recognition},
  author = {Simonyan, Karen and Zisserman, Andrew},
  booktitle = {International Conference on Learning Representations},
  year = {2015},
}

@inproceedings{tan2021efficientnetv2,
  title={{EfficientNetV2}: Smaller models and faster training},
  author={Tan, Mingxing and Le, Quoc},
  booktitle={International Conference on Machine Learning},
  pages={10096--10106},
  year={2021},
  organization={PMLR}
}

@inproceedings{tan2019mnasnet,
  title={{MnasNet}: Platform-aware neural architecture search for mobile},
  author={Tan, Mingxing and Chen, Bo and Pang, Ruoming and Vasudevan, Vijay and Sandler, Mark and Howard, Andrew and Le, Quoc V},
  booktitle={Proceedings of the IEEE/CVF Conference on Computer Vision and Pattern Recognition},
  pages={2820--2828},
  year={2019}
}

@article{dherin2022neural,
  title={Why neural networks find simple solutions: The many regularizers of geometric complexity},
  author={Dherin, Benoit and Munn, Michael and Rosca, Mihaela and Barrett, David},
  journal={Advances in Neural Information Processing Systems},
  volume={35},
  pages={2333--2349},
  year={2022}
}

@article{tsilivis2026flavors,
  author  = {Nikolaos Tsilivis and Eitan Gronich
             and Julia Kempe and Gal Vardi},
  title   = {Flavors of margin: Implicit bias of steepest
             descent in homogeneous neural networks},
  journal = {Journal of Machine Learning Research},
  year    = {2026},
  volume  = {27},
  number  = {104},
  pages   = {1--37},
  url     = {https://jmlr.org/papers/v27/25-0634.html}
}

@inproceedings{yang2022evaluating,
  title = {Test accuracy vs. generalization gap: Model selection in {NLP} without accessing training or testing data},
  author = {Yang, Yaoqing and Theisen, Ryan and Hodgkinson, Liam and Gonzalez, Joseph E. and Ramchandran, Kannan and Martin, Charles H. and Mahoney, Michael W.},
  booktitle = {Proceedings of the 29th ACM SIGKDD Conference on Knowledge Discovery and Data Mining},
  pages = {3011--3021},
  publisher = {Association for Computing Machinery},
  year = {2023},
  doi = {10.1145/3580305.3599518}
}

@inproceedings{krizhevsky2012imagenet,
  title={{ImageNet} classification with deep convolutional neural networks},
  author={Krizhevsky, Alex and Sutskever, Ilya and Hinton, Geoffrey E},
  booktitle={Advances in Neural Information Processing Systems},
  volume={25},
  year={2012}
}

@inproceedings{zhang2016understanding,
  title = {Understanding deep learning requires rethinking generalization},
  author = {Zhang, Chiyuan and Bengio, Samy and Hardt, Moritz and Recht, Benjamin and Vinyals, Oriol},
  booktitle = {International Conference on Learning Representations},
  year = {2017},
  url = {https://openreview.net/forum?id=Sy8gdB9xx}
}

@article{zhang2021understanding,
  title={Understanding deep learning (still) requires rethinking generalization},
  author={Zhang, Chiyuan and Bengio, Samy and Hardt, Moritz and Recht, Benjamin and Vinyals, Oriol},
  journal={Communications of the ACM},
  volume={64},
  number={3},
  pages={107--115},
  year={2021},
  publisher={ACM New York, NY, USA}
}

@article{belkin2019reconciling,
  title={Reconciling modern machine-learning practice and the classical bias--variance trade-off},
  author={Belkin, Mikhail and Hsu, Daniel and Ma, Siyuan and Mandal, Soumik},
  journal={Proceedings of the National Academy of Sciences},
  volume={116},
  number={32},
  pages={15849--15854},
  year={2019},
  publisher={National Academy of Sciences}
}

@article{kaplan2020scaling,
  title={Scaling laws for neural language models},
  author={Kaplan, Jared and McCandlish, Sam and Henighan, Tom and Brown, Tom B and Chess, Benjamin and Child, Rewon and Gray, Scott and Radford, Alec and Wu, Jeffrey and Amodei, Dario},
  journal={arXiv preprint arXiv:2001.08361},
  year={2020}
}

@inproceedings{hoffmann2022training,
  title = {An empirical analysis of compute-optimal large language model training},
  author = {Hoffmann, Jordan and Borgeaud, Sebastian and Mensch, Arthur and Buchatskaya, Elena and Cai, Trevor and Rutherford, Eliza and de Las Casas, Diego and Hendricks, Lisa Anne and Welbl, Johannes and Clark, Aidan and Hennigan, Thomas and Noland, Eric and Millican, Katherine and van den Driessche, George and Damoc, Bogdan and Guy, Aurelia and Osindero, Simon and Simonyan, Kar{\'e}n and Elsen, Erich and Vinyals, Oriol and Rae, Jack and Sifre, Laurent},
  booktitle = {Advances in Neural Information Processing Systems},
  volume = {35},
  pages = {30016--30030},
  year = {2022},
  url = {https://proceedings.neurips.cc/paper_files/paper/2022/hash/c1e2faff6f588870935f114ebe04a3e5-Abstract.html}
}

@article{boopathy2024unified,
  title={Unified neural network scaling laws and scale-time equivalence},
  author={Boopathy, Akhilan and Fiete, Ila},
  journal={arXiv preprint arXiv:2409.05782},
  year={2024}
}

@inproceedings{mcallester2003simplified,
  title={Simplified {PAC-Bayesian} margin bounds},
  author={McAllester, David},
  booktitle={Learning Theory and Kernel Machines: 16th Annual Conference on Learning Theory and 7th Kernel Workshop, COLT/Kernel 2003, Washington, DC, USA, August 24-27, 2003. Proceedings},
  pages={203--215},
  year={2003},
  organization={Springer}
}

@article{lotfi2022pac,
  title={{PAC-Bayes} compression bounds so tight that they can explain generalization},
  author={Lotfi, Sanae and Finzi, Marc and Kapoor, Sanyam and Potapczynski, Andres and Goldblum, Micah and Wilson, Andrew G},
  journal={Advances in Neural Information Processing Systems},
  volume={35},
  pages={31459--31473},
  year={2022}
}

@article{livni2020limitation,
  title={A limitation of the {PAC-Bayes} framework},
  author={Livni, Roi and Moran, Shay},
  journal={Advances in Neural Information Processing Systems},
  volume={33},
  pages={20543--20553},
  year={2020}
}

@book{konishi2008information,
  title={Information criteria and statistical modeling},
  author={Konishi, Sadanori and Kitagawa, Genshiro},
  year={2008},
  publisher={Springer}
}

@book{watanabe2009algebraic,
  title={Algebraic geometry and statistical learning theory},
  author={Watanabe, Sumio},
  volume={25},
  year={2009},
  publisher={Cambridge University Press}
}

@inproceedings{keskar2016large,
  title = {On large-batch training for deep learning: Generalization gap and sharp minima},
  author = {Keskar, Nitish Shirish and Mudigere, Dheevatsa and Nocedal, Jorge and Smelyanskiy, Mikhail and Tang, Ping Tak Peter},
  booktitle = {International Conference on Learning Representations},
  year = {2017},
  url = {https://openreview.net/forum?id=H1oyRlYgg}
}

@inproceedings{dinh2017sharp,
  title={Sharp minima can generalize for deep nets},
  author={Dinh, Laurent and Pascanu, Razvan and Bengio, Samy and Bengio, Yoshua},
  booktitle={International Conference on Machine Learning},
  pages={1019--1028},
  year={2017},
  organization={PMLR}
}

@article{qi2009sharp,
  title = {Sharp inequalities for polygamma functions},
  author = {Guo, Bai-Ni and Qi, Feng and Zhao, Jiao-Lian and Luo, Qiu-Ming},
  journal = {Mathematica Slovaca},
  volume = {65},
  number = {1},
  pages = {103--120},
  year = {2015},
  doi = {10.1515/ms-2015-0010}
}

@article{dosovitskiy2020image,
  title={An image is worth 16x16 words: Transformers for image recognition at scale},
  author={Dosovitskiy, Alexey and Beyer, Lucas and Kolesnikov, Alexander and Weissenborn, Dirk and Zhai, Xiaohua and Unterthiner, Thomas and Dehghani, Mostafa and Minderer, Matthias and Heigold, Georg and Gelly, Sylvain and others},
  journal={International Conference on Learning Representations},
  year={2021}
}

@inproceedings{liu2021swin,
  title = {{Swin Transformer}: Hierarchical vision transformer using shifted windows},
  author = {Liu, Ze and Lin, Yutong and Cao, Yue and Hu, Han and Wei, Yixuan and Zhang, Zheng and Lin, Stephen and Guo, Baining},
  booktitle = {Proceedings of the IEEE/CVF International Conference on Computer Vision},
  pages = {10012--10022},
  year = {2021}
}

@inproceedings{liu2022swin,
  title={{Swin Transformer V2}: Scaling up capacity and resolution},
  author={Liu, Ze and Hu, Han and Lin, Yutong and Yao, Zhuliang and Xie, Zhenda and Wei, Yixuan and Ning, Jia and Cao, Yue and Zhang, Zheng and Dong, Li and others},
  booktitle={Proceedings of the IEEE/CVF Conference on Computer Vision and Pattern Recognition},
  pages={12009--12019},
  year={2022}
}

@article{singh2025openai,
  title={{OpenAI GPT-5} system card},
  author={Singh, Aaditya and Fry, Adam and Perelman, Adam and Tart, Adam and Ganesh, Adi and El-Kishky, Ahmed and McLaughlin, Aidan and Low, Aiden and Ostrow, AJ and Ananthram, Akhila and others},
  journal={arXiv preprint arXiv:2601.03267},
  year={2025}
}

@inproceedings{zhou2023temperature,
  title = {Temperature balancing, layer-wise weight analysis, and neural network training},
  author = {Zhou, Yefan and Pang, Tianyu and Liu, Keqin and Martin, Charles H. and Mahoney, Michael W. and Yang, Yaoqing},
  booktitle = {Advances in Neural Information Processing Systems},
  volume = {36},
  pages = {63542--63572},
  year = {2023},
  url = {https://proceedings.neurips.cc/paper_files/paper/2023/hash/c8a4dd7d9e13583d714ce8580da7bbc7-Abstract-Conference.html}
}

@inproceedings{lu2024alphapruning,
  title={{AlphaPruning}: Using heavy-tailed self regularization theory for improved layer-wise pruning of large language models},
  author={Lu, Haiquan and Zhou, Yefan and Liu, Shiwei and Wang, Zhangyang and Mahoney, Michael W and Yang, Yaoqing},
  booktitle={Advances in Neural Information Processing Systems},
  volume={37},
  pages={9117--9152},
  year={2024}
}

@article{hu2025eigenspectrum,
  title={Eigenspectrum analysis of neural networks without aspect ratio bias},
  author={Hu, Yuanzhe and Goel, Kinshuk and Killiakov, Vlad and Yang, Yaoqing},
  journal={International Conference on Machine Learning},
  year={2025}
}

@inproceedings{munn2024bayesian,
  title = {A {Bayesian} model selection criterion for selecting pretraining checkpoints},
  author = {Munn, Michael and Wei, Susan},
  booktitle = {Proceedings of the 42nd International Conference on Machine Learning},
  series = {Proceedings of Machine Learning Research},
  volume = {267},
  pages = {45256--45271},
  publisher = {PMLR},
  year = {2025},
  url = {https://proceedings.mlr.press/v267/munn25a.html}
}

@inproceedings{lau2023local,
  title = {The local learning coefficient: A singularity-aware complexity measure},
  author = {Lau, Edmund and Furman, Zach and Wang, George and Murfet, Daniel and Wei, Susan},
  booktitle = {Proceedings of the 28th International Conference on Artificial Intelligence and Statistics},
  series = {Proceedings of Machine Learning Research},
  volume = {258},
  pages = {244--252},
  publisher = {PMLR},
  year = {2025},
  url = {https://proceedings.mlr.press/v258/lau25a.html}
}

@inproceedings{wei2022more,
  title={More than a toy: Random matrix models predict how real-world neural representations generalize},
  author={Wei, Alexander and Hu, Wei and Steinhardt, Jacob},
  booktitle={International Conference on Machine Learning},
  pages={23549--23588},
  year={2022},
  organization={PMLR}
}

@inproceedings{wang2024nonlinear,
  title={Nonlinear spiked covariance matrices and signal propagation in deep neural networks},
  author={Wang, Zhichao and Wu, Denny and Fan, Zhou},
  booktitle={The Thirty Seventh Annual Conference on Learning Theory},
  pages={4891--4957},
  year={2024},
  organization={PMLR}
}

@article{atanasov2025two,
  title = {Two-point deterministic equivalence for stochastic gradient dynamics in linear models},
  author = {Atanasov, Alexander and Bordelon, Blake and Zavatone-Veth, Jacob A. and Paquette, Courtney and Pehlevan, Cengiz},
  journal = {Advances in Theoretical and Mathematical Physics},
  volume = {30},
  number = {1},
  pages = {3--39},
  year = {2026},
  doi = {10.4310/ATMP.260412235902}
}

@article{ba2022high,
  title={High-dimensional asymptotics of feature learning: How one gradient step improves the representation},
  author={Ba, Jimmy and Erdogdu, Murat A and Suzuki, Taiji and Wang, Zhichao and Wu, Denny and Yang, Greg},
  journal={Advances in Neural Information Processing Systems},
  volume={35},
  pages={37932--37946},
  year={2022}
}

@inproceedings{chaudhari2018stochastic,
  title={Stochastic gradient descent performs variational inference, converges to limit cycles for deep networks},
  author={Chaudhari, Pratik and Soatto, Stefano},
  booktitle={2018 Information Theory and Applications Workshop (ITA)},
  pages={1--10},
  year={2018},
  organization={IEEE}
}

@article{xie2022power,
  title={On the power-law {Hessian} spectrums in deep learning},
  author={Xie, Zeke and Tang, Qian-Yuan and Cai, Yunfeng and Sun, Mingming and Li, Ping},
  journal={arXiv preprint arXiv:2201.13011},
  year={2022}
}

@article{sonthalia2024regularization,
  title={On regularization via early stopping for least squares regression},
  author={Sonthalia, Rishi and Lok, Jackie and Rebrova, Elizaveta},
  journal={arXiv preprint arXiv:2406.04425},
  year={2024}
}

@article{hodgkinson2023pac,
  title={A {PAC-Bayesian} perspective on the interpolating information criterion},
  author={Hodgkinson, Liam and van der Heide, Chris and Salomone, Robert and Roosta, Fred and Mahoney, Michael W},
  journal={NeurIPS 2023 Workshop on Mathematics of Modern Machine Learning},
  year={2023}
}

@article{martin2017rethinking,
  title={Rethinking generalization requires revisiting old ideas: statistical mechanics approaches and complex learning behavior},
  author={Martin, Charles H and Mahoney, Michael W},
  journal={arXiv preprint arXiv:1710.09553},
  year={2017}
}

@inproceedings{AMELI-2025b,
    title={Spectral estimation with free decompression},
    author={Siavash Ameli and Chris van der Heide and Liam Hodgkinson and Michael W. Mahoney},
    booktitle={The Thirty-ninth Annual Conference on Neural Information Processing Systems},
    year={2025},
}

@article{gonzalez2015laurent,
  title={Laurent expansion of the inverse of perturbed, singular matrices},
  author={Gonzalez-Rodriguez, Pedro and Moscoso, Miguel and Kindelan, Manuel},
  journal={Journal of Computational Physics},
  volume={299},
  pages={307--319},
  year={2015},
  publisher={Elsevier}
}

@article{srivastava2014dropout,
  title={Dropout: a simple way to prevent neural networks from overfitting},
  author={Srivastava, Nitish and Hinton, Geoffrey and Krizhevsky, Alex and Sutskever, Ilya and Salakhutdinov, Ruslan},
  journal={The Journal of Machine Learning Research},
  volume={15},
  number={1},
  pages={1929--1958},
  year={2014},
  publisher={JMLR. org}
}

@misc{WeightWatcher,
  author = {Martin, Charles},
  title = {{WeightWatcher}},
  year = {2024},
  publisher = {GitHub},
  journal = {GitHub repository},
  howpublished = {\url{https://github.com/CalculatedContent/WeightWatcher}}
}

@inproceedings{biderman2023pythia,
  title={Pythia: A suite for analyzing large language models across training and scaling},
  author={Biderman, Stella and Schoelkopf, Hailey and Anthony, Quentin Gregory and Bradley, Herbie and O’Brien, Kyle and Hallahan, Eric and Khan, Mohammad Aflah and Purohit, Shivanshu and Prashanth, USVSN Sai and Raff, Edward and others},
  booktitle={International Conference on Machine Learning},
  pages={2397--2430},
  year={2023},
  organization={PMLR}
}

@INPROCEEDINGS{MO11-implementing, 
  author =       {M.~W.~Mahoney and L.~Orecchia},
  title =        {Implementing regularization implicitly via approximate eigenvector computation},
  booktitle =    {Proceedings of the 28th International Conference on Machine Learning},
  year =         {2011},
  pages =        {121--128},
}

@INPROCEEDINGS{Mah12,
  author =       {M.~W.~Mahoney},
  title =        {Approximate Computation and Implicit Regularization for Very Large-scale Data Analysis},
  booktitle =    {Proceedings of the 31st ACM Symposium on Principles of Database Systems},
  year =         {2012},
  pages =        {143--154},
}

@inproceedings{YGKM19_pyhessian_TR,
  title={{PyHessian}: Neural networks through the lens of the {Hessian}},
  author={Yao, Zhewei and Gholami, Amir and Keutzer, Kurt and Mahoney, Michael W},
  booktitle={2020 IEEE International Conference on Big Data (Big Data)},
  pages={581--590},
  year={2020},
  organization={IEEE}
}

@inproceedings{YGSKM20_adahessian_TR,
  title={{AdaHessian}: An adaptive second order optimizer for machine learning},
  author={Yao, Zhewei and Gholami, Amir and Shen, Sheng and Mustafa, Mustafa and Keutzer, Kurt and Mahoney, Michael},
  booktitle={Proceedings of the AAAI Conference on Artificial Intelligence},
  volume={35},
  pages={10665--10673},
  year={2021}
}

@article{watanabe2013widely,
  title={A widely applicable {Bayesian} information criterion},
  author={Watanabe, Sumio},
  journal={The Journal of Machine Learning Research},
  volume={14},
  number={1},
  pages={867--897},
  year={2013},
  publisher={JMLR. org}
}

@book{jost2017riemannian,
  title     = {Riemannian Geometry and Geometric Analysis},
  author    = {Jost, J{\"u}rgen},
  edition   = {7},
  publisher = {Springer},
  year      = {2017},
  doi       = {10.1007/978-3-319-61860-9}
}

@book{gupta1999matrix,
  title={Matrix variate distributions},
  author={Gupta, Arjun K and Nagar, Daya K},
  volume={104},
  year={1999},
  publisher={CRC Press}
}

@article{wang2021benign,
  title={Benign overfitting in multiclass classification: All roads lead to interpolation},
  author={Wang, Ke and Muthukumar, Vidya and Thrampoulidis, Christos},
  journal={Advances in Neural Information Processing Systems},
  volume={34},
  pages={24164--24179},
  year={2021}
}

@article{papyan2020prevalence,
  title={Prevalence of neural collapse during the terminal phase of deep learning training},
  author={Papyan, Vardan and Han, Xiao Y and Donoho, David L},
  journal={Proceedings of the National Academy of Sciences},
  volume={117},
  number={40},
  pages={24652--24663},
  year={2020},
  publisher={National Academy of Sciences}
}

@book{mingo2017free,
  title={Free probability and random matrices},
  author={Mingo, James A and Speicher, Roland},
  volume={35},
  year={2017},
  publisher={Springer}
}

@inproceedings{wilson2026last,
  title = {Is the last layer sufficient for uncertainty quantification?},
  author = {Wilson, Joseph and van der Heide, Chris and Hodgkinson, Liam and Roosta, Fred},
  booktitle = {Proceedings of the 43rd International Conference on Machine Learning},
  series = {Proceedings of Machine Learning Research},
  volume = {306},
  pages = {135094--135132},
  publisher = {PMLR},
  year = {2026},
}

@inproceedings{paperno2016lambada,
  title={The {LAMBADA} dataset: Word prediction requiring a broad discourse context},
  author={Paperno, Denis and Kruszewski, Germ{\'a}n and Lazaridou, Angeliki and Pham, Ngoc-Quan and Bernardi, Raffaella and Pezzelle, Sandro and Baroni, Marco and Boleda, Gemma and Fern{\'a}ndez, Raquel},
  booktitle={Proceedings of the 54th Annual Meeting of the Association for Computational Linguistics (Volume 1: Long Papers)},
  pages={1525--1534},
  year={2016}
}

@misc{qwen3.5,
  title = {{Qwen3.5}: Towards Native Multimodal Agents},
  author = {{Qwen Team}},
  month = {February},
  year = {2026},
  url = {https://qwen.ai/blog?id=qwen3.5}
}

@inproceedings{wang2024mmlu,
  title={{MMLU-Pro}: A more robust and challenging multi-task language understanding benchmark},
  author={Wang, Yubo and Ma, Xueguang and Zhang, Ge and Ni, Yuansheng and Chandra, Abhranil and Guo, Shiguang and Ren, Weiming and Arulraj, Aaran and He, Xuan and Jiang, Ziyan and others},
  booktitle={Advances in Neural Information Processing Systems},
  volume={37},
  pages={95266--95290},
  year={2024}
}

@article{bai1999methodologies,
  title = {Methodologies in spectral analysis of large dimensional random matrices, a review},
  author = {Bai, Zhidong},
  journal = {Statistica Sinica},
  volume = {9},
  number = {3},
  pages = {611--677},
  year = {1999},
  url = {https://www3.stat.sinica.edu.tw/statistica/j9n3/j9n31/j9n31.htm}
}
